\documentclass[10pt]{article} 
\usepackage[preprint]{tmlr}

\usepackage{amsmath,amsfonts,bm}

\def\eqref#1{equation~\ref{#1}}

\def\1{\bm{1}}

\DeclareMathAlphabet{\mathsfit}{\encodingdefault}{\sfdefault}{m}{sl}
\SetMathAlphabet{\mathsfit}{bold}{\encodingdefault}{\sfdefault}{bx}{n}

\usepackage{hyperref}
\usepackage{url}

\usepackage[most]{tcolorbox}
\usepackage{microtype}
\usepackage{xcolor}
\usepackage{hyperref}
\usepackage{url}
\usepackage{booktabs}
\usepackage{multirow}
\usepackage{graphicx}
\usepackage{makecell}
\usepackage{amsmath,amssymb,amsthm}
\usepackage{xspace}
\usepackage{fontawesome5}

\usepackage[section]{placeins}

\newtcolorbox{mykeybox}{colback=gray!10, colframe=black, arc=1.5mm,
  boxrule=0.4mm, boxsep=1.5pt, top=3pt, bottom=3pt, left=4pt, right=4pt}
\newtcolorbox{cautionbox}{colback=orange!8, colframe=orange!55!black,
  arc=1.5mm, boxrule=0.35mm, boxsep=1.5pt, top=3pt, bottom=3pt,
  left=4pt, right=4pt}
\theoremstyle{plain}
\newtheorem{lemma}{Lemma}
\newtheorem{proposition}{Proposition}

\providecommand{\erank}{\operatorname{erank}}
\providecommand{\rank}{\operatorname{rank}}
\providecommand{\rnode}{r_{\mathrm{node}}}
\providecommand{\rpool}{r_{\mathrm{pool}}}
\providecommand{\rquer}{r_{\mathrm{query}}}
\providecommand{\rcand}{r_{\mathrm{cand}}}
\providecommand{\rtgt}{r_{\mathrm{tgt}}}
\providecommand{\Oracle}{\textsc{oracle}\xspace}
\providecommand{\BMtf}{\textsc{bm25}\xspace}
\providecommand{\NoOv}{\textsc{bm25-no-ov}\xspace}
\providecommand{\Pone}{\textsc{p1}\xspace}
\providecommand{\bits}{\ensuremath{\mathcal{B}}}

\providecommand{\Npapers}{57{,}903\xspace}
\providecommand{\Ndiag}{58{,}145\xspace}
\providecommand{\Nraw}{58{,}149}
\providecommand{\Nqueries}{4{,}000}
\providecommand{\Nlex}{2{,}000}
\providecommand{\Npool}{57{,}903}
\providecommand{\chanceMRR}{1.99\times10^{-4}}
\providecommand{\Nfields}{488}
\providecommand{\Ncats}{26}

\providecommand{\bmaxA}{14.379\xspace}      
\providecommand{\bmaxB}{14.385\xspace}      
\providecommand{\bmaxsim}{12.845\xspace}    
\providecommand{\bmaxr}{14.4\xspace}
\providecommand{\logtwoe}{1.4427\xspace}
\providecommand{\bmaxKtwo}{0.862}
\providecommand{\bmaxKten}{2.295}
\providecommand{\bmaxKhun}{5.262}
\providecommand{\bmaxKthou}{8.531}

\providecommand{\mrrBase}{1.9\times10^{-4}}
\providecommand{\pvalBase}{0.98}
\providecommand{\bitsBase}{0.00}
\providecommand{\probeBase}{0.871}
\providecommand{\probeWhite}{0.755}
\providecommand{\aspIn}{0.40\%}
\providecommand{\papIn}{86.05\%}
\providecommand{\aspTr}{99.61\%}
\providecommand{\papTr}{0.39\%}
\providecommand{\aspInRaw}{0.0040}
\providecommand{\papInRaw}{0.8605}
\providecommand{\aspDetemp}{0.0000}
\providecommand{\withinPaperWhite}{9.86\%}
\providecommand{\rhoours}{0.9961}
\providecommand{\rqueryv}{1.9}
\providecommand{\rcandv}{18.1}
\providecommand{\dcquery}{162.2}
\providecommand{\dccand}{37.6}
\providecommand{\dcenergyquery}{0.9999}
\providecommand{\selfsim}{0.004}
\providecommand{\mrrRaw}{1.7\times10^{-4}}
\providecommand{\mrrRmTopOne}{2.6\times10^{-4}}
\providecommand{\framegain}{0.4\text{ bits}}
\providecommand{\bmFull}{0.9910}
\providecommand{\bmFullCI}{[0.9868,0.9944]}
\providecommand{\bmFullB}{+14.335}
\providecommand{\novFull}{0.9697}
\providecommand{\novFullB}{+14.250}
\providecommand{\orcFull}{0.9719}
\providecommand{\orcFullCI}{[0.9674,0.9761]}
\providecommand{\orcFullB}{+14.281}
\providecommand{\poneFull}{0.9550}
\providecommand{\poneFullCI}{[0.9497,0.9604]}
\providecommand{\poneFullB}{+14.220}
\providecommand{\bmFullPct}{99.7\%}
\providecommand{\novFullPct}{99.1\%}
\providecommand{\orcFullPct}{99.3\%}
\providecommand{\poneFullPct}{98.9\%}
\providecommand{\deficit}{14.4\text{ bits}}
\providecommand{\fwAcc}{0.937}
\providecommand{\tfidfAcc}{0.986}
\providecommand{\aspChance}{0.333}
\providecommand{\templIdx}{0.906}
\providecommand{\orcDetemp}{0.972}
\providecommand{\orcDetempB}{+14.28}
\providecommand{\kappahat}{0.942}
\providecommand{\epshat}{1.515}
\providecommand{\simN}{20{,}000}
\providecommand{\simAtOurs}{0.0171}
\providecommand{\simAtOursB}{1.80}
\providecommand{\simLoss}{11.045}
\providecommand{\simLossr}{11.0}
\providecommand{\invpct}{86\%}
\providecommand{\respct}{14\%}
\providecommand{\rhostar}{0.99999}
\providecommand{\rhostarcomp}{5.1\times10^{-6}}
\providecommand{\ridgeMRR}{0.8534}
\providecommand{\ridgeB}{+13.868}
\providecommand{\lossratio}{3.296}
\providecommand{\gradnorm}{4.61\times10^{-1}}
\providecommand{\permp}{0.005}
\providecommand{\Nperm}{200}
\providecommand{\rnodeDzero}{54.9}
\providecommand{\rnodeDthree}{18.0}

\providecommand{\erInWhite}{310.9}
\providecommand{\erOracleWhLo}{305.0}
\providecommand{\dlat}{128}
\providecommand{\dfeat}{384}
\providecommand{\dfeatdiag}{768}
\providecommand{\mutagOurs}{0.686}
\providecommand{\mutagRef}{0.874}
\providecommand{\protOurs}{0.739}
\providecommand{\protRef}{0.750}
\providecommand{\gpuhours}{0.77}
\providecommand{\peakmem}{22.6}
\providecommand{\walltime}{46}
\providecommand{\diagwall}{4.7}
\providecommand{\Naspects}{3}   

\providecommand{\oracleA}{13.865\xspace}          
\providecommand{\oraclePct}{96.4\%}               
\providecommand{\headroom}{0.514\xspace}          

\providecommand{\rawZeroThreeK}{14.289\xspace}
\providecommand{\rawZeroPct}{99.4\%}
\providecommand{\frameEffect}{+0.071}   
\providecommand{\sharedEffect}{-2.481}
\providecommand{\bothEffect}{+0.090}
\providecommand{\interEffect}{+2.500}
\providecommand{\ceilRoom}{0.090\xspace} 
\providecommand{\frameShare}{79\%}       
\providecommand{\scheduleEffect}{+1.337} 
\providecommand{\rawOracleThreeK}{12.952\xspace} 

\providecommand{\nceBits}{14.359\xspace}
\providecommand{\nceAone}{0.862\xspace}
\providecommand{\regBits}{0.307\xspace}
\providecommand{\regPct}{2.1\%}
\providecommand{\regAone}{0.567\xspace}
\providecommand{\regMRR}{0.0009\xspace}
\providecommand{\regLoss}{1.9\times10^{-25}}
\providecommand{\lossSwing}{14.05\xspace}

\providecommand{\fixTwentyK}{14.377\xspace}
\providecommand{\fixTwentyKsd}{0.002\xspace}
\providecommand{\aOneTwentyK}{0.753\xspace}
\providecommand{\aOneTwentyKsd}{0.010\xspace}
\providecommand{\fixPct}{100.0\%}

\providecommand{\bitsGain}{+0.226}
\providecommand{\aOneDrop}{-0.225}
\providecommand{\aOneVoid}{0.70\xspace}
\providecommand{\cueAspect}{14.376\xspace}\providecommand{\cueAspectA}{0.800\xspace}
\providecommand{\cueRwse}{14.209\xspace}  \providecommand{\cueRwseA}{0.711\xspace}
\providecommand{\cueNone}{13.529\xspace}  \providecommand{\cueNoneA}{0.907\xspace}
\providecommand{\cueGainA}{+0.167}        
\providecommand{\cueGainB}{+0.680}        
\providecommand{\transGap}{+0.001}
\providecommand{\rawGate}{0.3886\xspace}
\providecommand{\skipBits}{14.379\xspace}\providecommand{\skipAone}{0.875\xspace}

\providecommand{\bitsRange}{11.808-14.379}
\providecommand{\aoneRange}{0.711-0.985}
\providecommand{\spearman}{-0.24}
\providecommand{\nCells}{10}

\providecommand{\Eproduces}{57{,}903}
\providecommand{\Egrounds}{251{,}938}
\providecommand{\Ehasclaim}{251{,}938}
\providecommand{\citesKept}{11{,}791}
\providecommand{\citesRwse}{17{,}631}
\providecommand{\citesTot}{2{,}211{,}118}
\providecommand{\citesPct}{0.53\%}
\providecommand{\citesPerPaper}{0.20}
\providecommand{\rwseRankLo}{1.09}
\providecommand{\rwseRankHi}{1.28}

\providecommand{\Nsupp}{251{,}922}
\providecommand{\Nchal}{121{,}516}
\providecommand{\Nevid}{373{,}438}
\providecommand{\Nimpl}{251{,}938}
\providecommand{\dupChal}{25.96\%}
\providecommand{\dupChalThr}{20\%}
\providecommand{\dupRows}{31{,}549}
\providecommand{\dupChalEdges}{31{,}545}
\providecommand{\dupTop}{3{,}031}
\providecommand{\dupImpl}{0.00\%}
\providecommand{\uniqImpl}{250{,}528}
\providecommand{\genSup}{0.2921}
\providecommand{\genChal}{0.5298}
\providecommand{\genGap}{+0.2377}
\providecommand{\genGapThr}{0.15}
\providecommand{\cosSup}{0.6607}
\providecommand{\cosChal}{0.3061}
\providecommand{\polD}{+1.638}
\providecommand{\polCosAUC}{0.8539}

\providecommand{\polRawPap}{0.9561}
\providecommand{\polLeak}{+0.0003}
\providecommand{\polMask}{0.8535}
\providecommand{\polFull}{1.0000}
\providecommand{\polEdgeLeak}{+0.1465}
\providecommand{\polShuf}{0.4975}
\providecommand{\polMaskDelta}{-0.0004}
\providecommand{\papChal}{35{,}650}
\providecommand{\papChalObs}{61.6\%}
\providecommand{\papChalExp}{94.3\%}
\providecommand{\selftestN}{20}

\title{When Graph-JEPA Learns the Wrong Thing: Diagnosing and Repairing Category-Conditional Collapse}

\author{\name Gollam Rabby \email gollam.rabby@l3s.de \\
      \addr L3S Research Centre, Leibniz University Hannover,\\ Hannover, Germany
      \AND
      \name Sören Auer  \email auer@tib.eu \\
      \addr TIB Leibniz Information Centre for Science and Technology,\\ Hannover, Germany
      }

\def\month{MM}  
\def\year{YYYY} 
\def\openreview{\url{https://openreview.net/forum?id=XXXX}} 

\begin{document}

\maketitle

\begin{abstract}
Joint-embedding predictive architectures are selected almost universally by linear probing and by effective rank.
We report a case in which both read healthily while the representation carries zero usable instance information.
We then repair it, and a second failure appears: the repaired metric saturates on a target that provably carries no structural information.
Our corpus is a heterogeneous scientific-reasoning graph over $\Npapers$ scientific articles, each article forming a subgraph.
A Graph-JEPA is trained to predict one masked aspect from a subgraph's remaining aspects. 
It attains linear-probe accuracy $\probeBase$ and node-level effective rank $18$--$47$.
Masked-aspect retrieval against the full corpus nevertheless recovers $\bitsBase$ of $\bmaxr$ recoverable bits ($\mathrm{MRR}=\mrrBase$ against an exact chance level of $\chanceMRR$, permutation $p=\pvalBase$).
Three upper bounds on the same pool, features, and scoring code recover almost everything: a parameter-free average of the visible aspects reaches $\orcFullB$ bits,
Okapi BM25 $\bmFullB$, and a same-evaluation positive control $\poneFullB$.
This excludes the corpus, the masking, the pool, and the metric as causes.
We locate the mechanism in a measured variance allocation, reported across encoders and within the single encoder that produced the trained latents.
The frozen inputs place $\papIn$ of their variance on subgraph identity and $\aspIn$ on aspect identity, with the trained latents placing $\papTr$ and $\aspTr$.
The deficit is a property of the objective's optimum rather than of its optimisation: we prove that the category-measurable solution is a global minimum of the coupled
predictor and EMA-target objective.
We also measure the allocation at every checkpoint, because our own rank trajectory shows the pooled-rank signature is already present at initialisation.
A repaired configuration reaches $\fixTwentyK$ of $\bmaxA$ bits, above the
$\oracleA$-bit training-free oracle.
Changing only the loss back to regression returns it to $\regBits$ bits.
This $\lossSwing$ bit swing on one variable confirms the mechanism directly.
The repair nonetheless licenses nothing about reasoning.
We prove that the retrieval target is structurally reducible.
The intra-subgraph edge set is a deterministic function of the node census, so the subgraph supplies no information beyond each subgraph's own features.
The training-free oracle therefore already reaches $\oraclePct$ of the ceiling, and our largest positive effect is the learning-rate schedule ($\scheduleEffect$
bits) rather than any architectural factor.
Across ten converged cells, bits ($\bitsRange$) and a held-out reasoning probe ($\aoneRange$) show no reliable relationship.
Replacing the target with a data-derived one then fails a pre-registered data-quality gate:
$\dupChal$ of contradicting-evidence nodes are exact duplicate placeholder strings, and the remainder is $\genGap$ more generic than supporting evidence.
Rank, probes, and the task metric can all saturate on an evaluation that cannot support the claim. 
We release a pre-registered harness that adds a reducibility audit and a data-derived target gate to the standard toolkit.
\end{abstract}

\begin{center}\small
\faGithub~\textbf{Code:} \url{https://github.com/corei5/SCI-JEPA/tree/main}\\ \quad
\faDatabase~\textbf{Data:} \url{https://huggingface.co/datasets/tourist800/SCI-JEPA}
\end{center}

\section{Introduction}

Self-supervised learning through joint-embedding prediction is now a dominant paradigm
\citep{DBLP:conf/cvpr/AssranDMBVRLB23,DBLP:journals/corr/abs-2306-02572}.
Rather than reconstructing inputs, a JEPA predicts the latent representation of a masked target from a visible context.
An exponential moving average target encoder supplies the signal.
Graph-JEPA \citep{DBLP:journals/tmlr/Skenderi0TC25} adapts this to graphs, predicting a low-dimensional coordinate of each masked subgraph from its context.

Latent-predictive objectives admit degenerate solutions, so the field has converged on two health checks: a linear probe on a downstream label, and the effective rank of the
representation~\citep{DBLP:conf/eusipco/RoyV07,DBLP:conf/iclr/JingVLT22}.
A model with high probe accuracy and effective rank well above one is generally taken to have avoided collapse.
RankMe \citep{DBLP:conf/icml/GarridoBNL23} elevates rank to a model-selection criterion outright.

We report an analysis in which both checks pass while the representation is worthless for its intended use.
A second, sharper failure then appears once the first is fixed.
We build a heterogeneous reasoning graph over $\Npapers$ scientific papers.
Each scientific paper is decomposed into a subgraph of typed reasoning nodes: \textit{claim}, \textit{method}, \textit{result}, \textit{evidence} and \textit{implication}.
We train a Graph-JEPA to predict one masked aspect from the others
(Sec.~\ref{sec:setup}, App.~\ref{app:setup}).
The probe reaches $\probeBase$, the node-level effective rank is $18$--$47$ of $\dlat$, and the loss converges.
Retrieval of the masked aspect from the full $\Npool$-item corpus nevertheless recovers $\bitsBase$ of $\bmaxA$ recoverable bits, at permutation $p=\pvalBase$ (Fig.~\ref{fig:dissoc}).

\begin{figure}[t]
  \centering
  \includegraphics[width=0.325\textwidth]{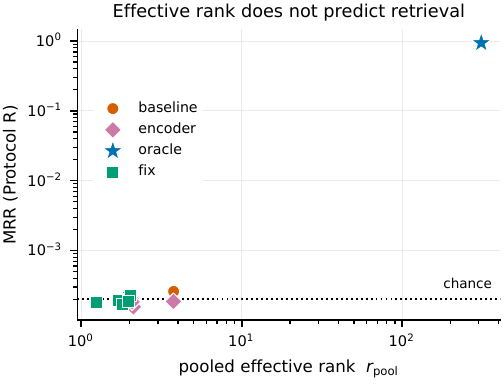}\hfill
  \includegraphics[width=0.325\textwidth]{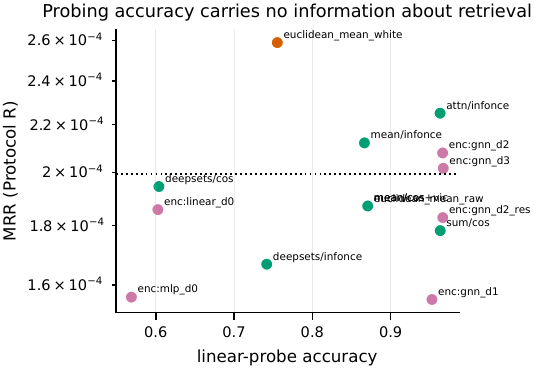}\hfill
  \includegraphics[width=0.325\textwidth]{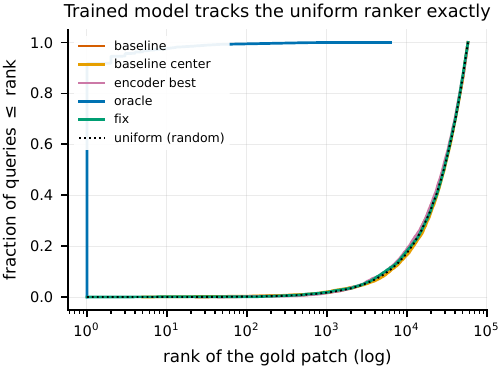}
  \caption{\textbf{Neither standard health check predicts retrieval, and the null is
  exact.} \textbf{(left)}~Pooled effective rank spans $1.2$--$300$ across all
  trained configurations and the ceilings; every trained point lies on the chance
  line and the ceilings lie ${\sim}14$ bits above it. \textbf{(centre)}~Probe
  accuracy spans $0.57$--$0.97$ with no corresponding movement; the vertical spread
  is sampling noise around chance. \textbf{(right)}~Cumulative distribution of the
  gold rank: baseline, centred baseline, best encoder and best fix are
  indistinguishable from the uniform-random curve (dotted) over the entire range,
  while the ceiling reaches ${\approx}0.6$ at rank $1$. Not a shifted
  distribution, the same distribution.}
  \label{fig:dissoc}
\end{figure}

\begin{mykeybox}
\textbf{Three controls decide the first result.}
All three use the identical pool, masking, and scoring code.
\emph{(i)}~A parameter-free average of the visible aspects' frozen features recovers $\orcFullB$ bits ($\mathrm{MRR}=\orcFull$, CI $\orcFullCI$).
\emph{(ii)}~Okapi BM25 over raw text, using no learned representation, recovers $\bmFullB$ bits.
\emph{(iii)}~A positive control, namely a same-architecture predictor trained on the same
frozen features under the same evaluation, recovers $\poneFullB$ bits.
The information is present, and the harness extracts it.
No standard diagnostic detects that the trained model does not.
\end{mykeybox}

\begin{cautionbox}
\textbf{The repair is the more interesting result.}
A corrected objective takes the same pipeline from $\bitsBase$ to $\fixTwentyK$ of $\bmaxA$ bits, above the $\oracleA$-bit training-free oracle.
Swapping only the loss back to regression returns it to $\regBits$ bits: a $\lossSwing$-bit swing on one variable.
Yet the repaired score licenses nothing.
We prove that the target is structurally reducible.
And across ten converged cells, the optimised metric and a held-out reasoning probe show no reliable relationship (Spearman $\spearman$, $n=\nCells$).
\end{cautionbox}

\paragraph{What is new relative to prior probe--task mismatch reports.}
The first dissociation is complete rather than partial: zero of $\bmaxA$ recoverable bits, with a permutation null and a positive control that excludes every non-model explanation we could construct.
The mechanism is a measured allocation, together with a proof that the degenerate configuration is a global optimum rather than an unfavourable property of the data.
The same corpus that supplies $\papIn$ of its variance to paper identity yields a
representation that supplies $\papTr$ (Sec.~\ref{sec:mechanism}).
The failure also localises to the query bank.
That places it outside the reach of the entire candidate-side toolkit---whitening, Mahalanobis metrics, hyperbolic embeddings---and it explains why every such remedy bought $\framegain$ against a $\deficit$ deficit.
The repair then exposes a failure one level up.
We prove that the retrieval target is structurally reducible (Prop.~\ref{prop:reducible}), and we show that the data-derived replacement fails a data-quality gate.
No representation-level metric on this corpus currently licenses a reasoning claim (Sec.~\ref{sec:repair}).

\paragraph{Contributions.}
All retrieval numbers are bits recovered relative to a uniform random ranker,
$\bits=\frac1N\log_2 N!-\mathbb{E}[\log_2 r]$, reported with bootstrap intervals and an
exact chance baseline.
We report no ratios against chance (Sec.~\ref{sec:setup}, App.~\ref{app:bits}).
\textit{(1)~A complete dissociation with a positive control} (Secs.~\ref{sec:ladder}--\ref{sec:ceilings}): probe $\probeBase$ against $\bitsBase$ bits (Fig.~\ref{fig:dissoc}), while a control through the identical harness recovers $\poneFullB$ ($\poneFullPct$ of ceiling).
\textit{(2)~The mechanism as a measured variance allocation, endpoint and trajectory}
(Sec.~\ref{sec:mechanism}): paper-identity share $\papIn\!\to\!\papTr$, and aspect share $\aspIn\!\to\!\aspTr$.
We report this within one encoder as well as across two.
A per-checkpoint measurement separates what the converged representation encodes from when it came to encode it (Sec.~\ref{sec:trajectory}).
A phase analysis at the measured $\hat\kappa,\hat\varepsilon$ attributes $\simLossr$ of $\bmaxsim$ lost bits to the allocation.
\textit{(3)~Query-side localisation} (Sec.~\ref{sec:ladder}): the candidate bank has
$\rcand=\rcandv$ and DC ratio $\dccand$. The query bank has $\rquer=\rqueryv$, DC energy $\dcenergyquery$ and self-similarity $\selfsim$, consistent with the bound of
Prop.~\ref{prop:fixedpoint}.
\textit{(4)~One non-identifiability result, two invariance results and one reducibility
result} (Sec.~\ref{sec:theory}).
The coupled predictor and EMA-target objective admits global minima that differ by the entire recoverable budget, so the loss does not identify whether instance identity is encoded.
A degenerate query bank induces the same candidate permutation under any injective transform.
The minimiser of a squared-distance objective is a Fréchet mean in any metric space.
And a subgraph whose edge set is determined by its node census supplies no information beyond node features.
\textit{(5)~A one-variable confirmation of the mechanism} (Sec.~\ref{sec:repair}): holding frame, cue, budget, schedule and seed fixed, changing only the loss moves the pipeline $\nceBits\!\to\!\regBits$ bits, with the reasoning probe falling to the majority class.
\textit{(6)~A data-derived-target gate that fires} (Sec.~\ref{sec:repair}): $\dupChal$ of contradicting-evidence nodes are exact duplicates, and the genericness gap is $\genGap$.
The paper-identity leak we expected to find is measured at $\polLeak$, i.e.\ absent.
\textit{(7)~A characterisation of the instrument} (Sec.~\ref{sec:audits}): BM25 recovers $\bmFullB$ bits, and a function-word-only classifier identifies aspect type at $\fwAcc$ against chance $\aspChance$.
\textit{(8)~A pre-registered harness} costing $\gpuhours$ GPU-hours, with a $\selftestN$-assertion instrument self-test that must pass before any measurement is taken (App.~\ref{app:selftest},~\ref{app:prereg}).

\begin{figure}[t]
  \centering
  \includegraphics[width=0.95\textwidth]{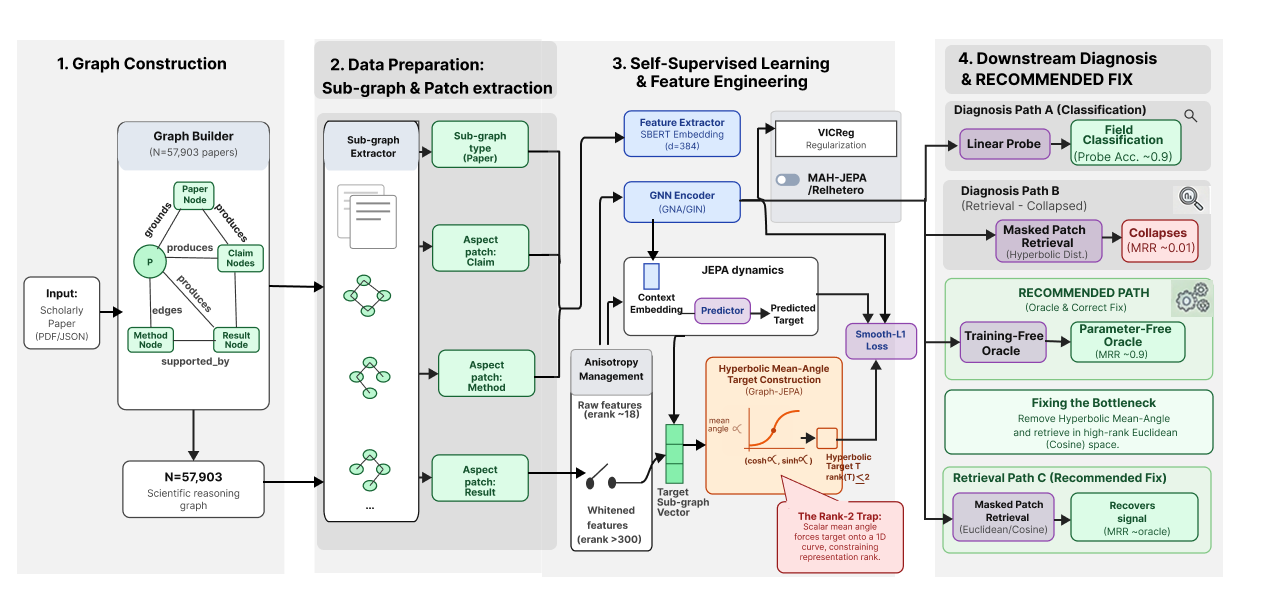}
  \caption{\textbf{Pipeline and diagnosis.} \textbf{(1)}~Graph construction over
  $\Npapers$ papers with typed reasoning nodes and edges. \textbf{(2)}~Patch
  extraction; \texttt{method} and \texttt{result} are exact singletons.
  \textbf{(3)}~Frozen sentence features feed a heterogeneous GNN trained with a JEPA
  objective; we sweep target geometry, aggregator, loss, target frame, structural cue,
  and regularisation. \textbf{(4)}~Both standard health checks pass while retrieval
  recovers zero bits. Three ceilings show the signal is present, the variance
  allocation shows where it goes, and Sec.~\ref{sec:repair} shows that repairing it
  saturates a reducible target.}
  \label{fig:overview}
\end{figure}

\begin{figure}[t]
  \centering
  \includegraphics[width=0.7\textwidth]{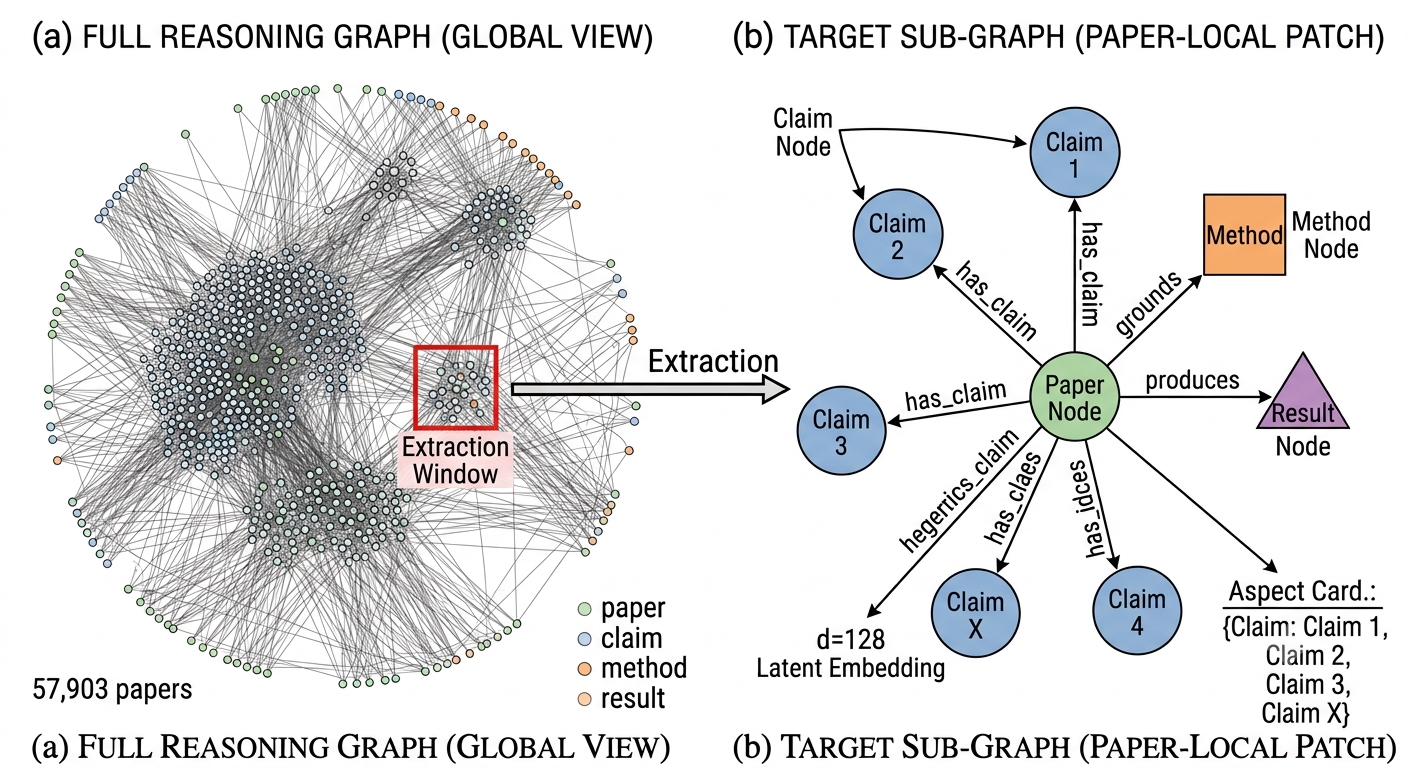}
  \caption{\textbf{From the global reasoning graph to a paper-local target patch.}
  \textbf{(a)}~The heterogeneous graph; an extraction window selects one paper and
  its incident reasoning nodes. \textbf{(b)}~The extracted subgraph, a paper node
  joined by typed edges to its claim, method and result nodes. Note that every edge
  shown is present for \emph{every} paper possessing the endpoint node types, which
  is the content of Prop.~\ref{prop:reducible}.}
  \label{fig:graph_zoom}
\end{figure}


\section{Related Work}
\label{sec:related}

An extended discussion is in App.~\ref{app:related}.

\paragraph{JEPAs and non-contrastive collapse.}
JEPAs replace reconstruction with latent prediction~\citep{DBLP:conf/cvpr/AssranDMBVRLB23,DBLP:journals/corr/abs-2306-02572}.
The same philosophy underlies BYOL \citep{DBLP:conf/nips/GrillSATRBDPGAP20}, VICReg
\citep{DBLP:conf/iclr/BardesPL22} and Barlow Twins \citep{DBLP:conf/icml/ZbontarJMLD21}.
Existing theory targets complete collapse, in which $f$ becomes constant on the input
domain.
The failure we report is conditional.
The representation is high-rank and well spread across the candidate bank, and collapses only within each latent category.
No existing criterion flags it.
The converged latents allocate their variance to the category rather than to the instance (Table~\ref{tab:inversion}).
Prop.~\ref{prop:fixedpoint} then shows why that configuration is a global optimum of the coupled objective rather than a failure of its optimisation.

\paragraph{Graph-JEPA and evaluation-target reducibility.}
Graph-JEPA \citep{DBLP:journals/tmlr/Skenderi0TC25} predicts a low-dimensional hyperbolic coordinate of each masked patch.
It builds on masked reconstruction \citep{DBLP:conf/www/HouHCLDK023} and on contrastive multi-view objectives \citep{DBLP:conf/icml/Yao0C0S024}.
It reports strong subgraph classification, which is precisely the regime in which a category-measurable solution is not merely adequate but optimal. Prop.~\ref{prop:reducible} raises a different question: is the subgraph structure a
deterministic function of the node census, and therefore information-free?
To our knowledge, the graph-SSL evaluation literature has not asked that question. It is a one-line check, and it invalidates the structural interpretation of any gain on such a task.

\paragraph{Class collapse and effective rank.}
\citet{DBLP:conf/icml/GrafHNK21} show that the minimiser of the supervised contrastive loss maps every member of a class to a single point, with the classes arranged on a regular simplex.
\citet{DBLP:journals/corr/abs-2008-08186} document the same terminal-phase geometry under cross-entropy, as neural collapse. 
\citet{DBLP:conf/icml/ChenFNZ0FR22} isolate class collapse as a distinct failure mode of supervised contrastive representations, and separate it from feature suppression. All three are supervised: the collapsing partition is the label set.
Ours is the unsupervised analogue, with latent categories supplied by the masking scheme rather than by labels.
It is also total rather than partial. 
By Prop.~\ref{prop:fixedpoint}, it arises as a global optimum of a self-referential objective rather than of a labeled one.
Dimensional collapse \citep{DBLP:conf/iclr/JingVLT22}, effective rank \citep{DBLP:conf/eusipco/RoyV07} and RankMe \citep{DBLP:conf/icml/GarridoBNL23} are the standard summaries. 
We supply an extreme counterexample: two configurations clear the conventional $\rpool>10$ threshold, are among the worst by probe accuracy, and do not retrieve (App.~\ref{app:breadth}).

\paragraph{Anisotropy, whitening, and machine-extracted corpora.}
Frozen sentence embeddings are strongly anisotropic \citep{DBLP:conf/emnlp/Ethayarajh19,DBLP:conf/iclr/GaoHTQWL19}, and whitening is the standard remedy \citep{DBLP:journals/corr/abs-2103-15316}. 
That literature instruments the candidate bank.
We prove that candidate-side corrections cannot address a degenerate query bank (Prop.~\ref{prop:frame}). 
Separately, parameter-free baselines are indispensable for correct attribution \citep{DBLP:conf/emnlp/GaoYC21}.
We adopt the strongest form we could construct, together with a lexical bound \citep{DBLP:journals/ftir/RobertsonZ09}.
Representing scientific articles as typed reasoning graphs connects to scholarly knowledge graphs~\citep{auer2020improving}.
Sec.~\ref{sec:repair} adds a caution specific to LLM-extracted scholarly corpora.

\section{Setup and Protocol}
\label{sec:setup}

\paragraph{Corpus and graph.}
We build one heterogeneous graph from $\Nraw$ scientific article records.\footnote{\url{https://laion.ai/notes/summaries/}}~\citep{DBLP:journals/corr/abs-2502-19413}
Each scientific article forms one subgraph. 
Every scientific article is decomposed into five typed reasoning nodes:
\emph{claim}, \emph{method}, \emph{result}, \emph{evidence} and \emph{implication}. These are joined by typed edges that mirror scientific structure, for example,
\texttt{method}$\to$\texttt{result}$\to$\texttt{claim} and
\texttt{claim}$\to$\texttt{evidence}.
Subgraphs are linked to one another through shared \texttt{field} hubs and intra-corpus citations. 
App.~\ref{app:setup} contains the full schema and all cardinalities.
Claims average $4.35$ per scientific article, whereas \texttt{method} and
\texttt{result} are exact singletons ($m{=}1.00$). 
That singleton property makes the pooling control in Sec.~\ref{sec:ladder} decisive: with one member per patch, mean pooling is provably the identity map.
Every node is embedded with a frozen sentence encoder and is never fine-tuned.
Each \texttt{scientific article} node carries one of $\Nfields$ field labels, which are used only by the probe and never as an input.

\begin{cautionbox}
\textbf{Corpus sizes, deliberately distinguished.} 
The graph build requires full coverage and a valid subgraph, which gives $\Npapers$ maskable scientific articles.
The diagnostic harness requires only non-empty aspect text, which gives $\Ndiag$ of $\Nraw$.
Chance MRR is $\chanceMRR$ for both.
The recoverable budgets agree to $0.006$ bits ($\bits^{\max}=\bmaxA$ versus $\bmaxB$; Eq.~\ref{eq:bmax}).
The two sets are therefore comparable in bits, rather than merely assumed to be.
Every table names the set and the sentence encoder it used (App.~\ref{app:provenance}).
\end{cautionbox}

\paragraph{Task and configurations.}
For each maskable subgraph, we hold out one present aspect and predict its patch embedding from the remaining aspects. A heterogeneous GIN encoder \citep{DBLP:conf/icml/WangZ22,DBLP:conf/www/HuDWS20} produces node latents.
A pooling operator aggregates the target aspect's nodes.
An EMA target encoder provides the target. A
transformer context mixer with a random-walk structural encoding
\citep{DBLP:conf/iclr/DwivediL0BB22} feeds a latent predictor. Masking is leak-safe: holding
out an aspect removes every relation incident to the held-out nodes, not only its
\texttt{has\_}$a$ edges. This distinction turns out to be decisive in
Sec.~\ref{sec:repair}, where retaining a single typed relation inflates a probe from
$\polMask$ to $\polFull$ AUC. We sweep six encoder configurations, four pooling operators,
two losses, two target frames, three structural cues, and target-branch variance
regularisation. We hold everything else fixed and use five seeds unless stated.

\begin{mykeybox}
\textbf{Protocol R} (identical for every retrieval number in this paper).
\emph{Queries:} $\Nqueries$ papers, sampled once with seed $0$ and held fixed across all
models. Lexical systems are ${\sim}500\times$ costlier per query, so they use a nested
$\Nlex$ subsample that agrees to $0.01$ bits (App.~\ref{app:subsample}).
\emph{Pool:} all masked-aspect representations of the corpus, aspect-matched and never
in-batch. Hard pools of the $K$ nearest within-category neighbours are also reported.
\emph{Score:} cosine in the model's own retrieval space. Any retrieval frame is fitted on the
candidate bank and applied to both sides.
\emph{Metric:} bits recovered $\bits=\frac1N\log_2 N!-\mathbb{E}[\log_2 r]$, with MRR and
Hits@$k$ alongside. Here $\bits=0$ is \emph{exactly} chance, and
$\bits^{\max}=\frac1N\log_2 N!$ is the achievable ceiling (App.~\ref{app:bits}).
\emph{Uncertainty:} percentile bootstrap over queries ($B{=}2000$). Systems sharing queries
are compared with a \textbf{paired} bootstrap on per-query $\Delta(1/r)$.
\emph{Null:} $\mathbb{E}[\mathrm{MRR}]=H_N/N=\chanceMRR$. We run a permutation test with
$\Nperm$ permutations, so $p$ is bounded below by $1/(\Nperm{+}1)$ and is quoted as
$p<\permp$ at that floor.
\emph{No ratios:} at $\mathrm{MRR}\approx\chanceMRR$, ratios against chance are dominated by
sampling noise. We therefore report magnitudes as bit deficits against a measured ceiling.
\emph{Instrument first:} no measurement is taken until a $\selftestN$-assertion self-test
passes. It verifies that a perfect ranker returns $\bits^{\max}$, that a uniform ranker
returns $0$, that a constant scorer returns $\mathrm{AUC}=0.500$ exactly, and that every
masking condition genuinely isolates its targets (App.~\ref{app:selftest}).
\end{mykeybox}

\paragraph{Probe and pre-registration.}
A linear classifier on the frozen pooled representation predicts the field label. Splits are
disjoint and label-stratified, per seed, and majority-class accuracy is below $0.05$. Every
threshold that converts a measurement into a verdict is fixed in a configuration file. That
file was written to disk before any result was computed (App.~\ref{app:prereg}), and was not
revised afterward.

\section{Why a Constant-per-Category Query Is a Global Optimum}
\label{sec:theory}

Decompose each target into a category effect and an instance residual,
\begin{equation}
z_{p,a}\;=\;\underbrace{\mu+\alpha_a}_{\Naspects\ \text{options}}\;+\;
\underbrace{\delta_{p,a}}_{\Npool\ \text{options}}.
\label{eq:decomp}
\end{equation}
Retrieval requires recovering $\delta_{p,a}$. Whether the objective requires
it turns on a distinction the standard analysis of latent-predictive losses elides:
in a JEPA the target is not exogenous data but the output of a trailing copy of the
encoder being trained. We therefore state the fixed-point form first
(Prop.~\ref{prop:fixedpoint}) and recover the familiar conditional-mean statement as
the special case in which the target is frozen (Prop.~\ref{prop:centroid}). Proofs
for all five results are in App.~\ref{app:proofs}.

\paragraph{The coupled objective.}
Let $f_\theta$ denote the context branch (encoder, mixer, predictor) and
$g_{\bar\theta}$ the target branch, whose parameters are an exponential moving
average of $\theta$, so $\bar\theta=\theta$ at any stationary point. Write $c$ for
the visible context together with the target designator (the aspect cue of
Sec.~\ref{sec:setup}) and $t$ for the held-out subgraph. The population objective is
\begin{equation}
R(\theta)\;=\;\mathbb{E}\,\big\|f_\theta(c)-g_{\bar\theta}(t)\big\|^{2},
\qquad \bar\theta=\mathrm{EMA}(\theta).
\label{eq:coupled}
\end{equation}
Both arguments are learned. Eq.~\ref{eq:decomp} treats $z$ as exogenous and the next
proposition does not.

\begin{proposition}[Category-measurable fixed points are global minima. $R$ does not identify $\delta$]
\label{prop:fixedpoint}
Let $a(\cdot)$ denote aspect identity, with $|\mathcal{A}|=\Naspects$, and suppose
the designator in $c$ determines $a(t)$, as it does in our pipeline. Then
\emph{(i)}~for a \emph{fixed} target branch $g$, the risk-minimising context branch
is $f^\star(c)=\mathbb{E}[g(t)\mid c]$.
\emph{(ii)}~for \emph{any} map $\gamma$ on the finite category set $\mathcal{A}$,
the pair $g=\gamma\circ a$, $f=\gamma\circ a$ attains $R=0$ and is therefore a
\emph{global} minimiser of Eq.~\ref{eq:coupled}; and
\emph{(iii)}~whenever $c$ determines $t$, the set of global minimisers also contains
pairs in which $g$ is injective on instances. Consequently $R$ alone does not
identify whether $\delta$ is encoded: two global optima of the same objective differ
by $\bits^{\max}$ bits of retrievable information. The branch in \emph{(ii)} yields a
query bank of effective rank at most $|\mathcal{A}|$ while remaining
\emph{non-constant}, so it is invisible to any collapse criterion that tests for
constancy or for rank~$1$.
\end{proposition}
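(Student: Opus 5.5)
Part \emph{(i)} is the standard Hilbert-space projection argument. With $g$ fixed, the target $Y=g(t)$ is a fixed square-integrable random vector, and $f_\theta(c)$ ranges over measurable functions of $c$. For any such $h$, we expand
\[
\mathbb{E}\|h(c)-Y\|^2=\mathbb{E}\|h(c)-\mathbb{E}[Y\mid c]\|^2+\mathbb{E}\|\mathbb{E}[Y\mid c]-Y\|^2 .
\]
The cross term vanishes by the tower property, because $h(c)-\mathbb{E}[Y\mid c]$ is $c$-measurable. The minimiser is therefore $f^\star(c)=\mathbb{E}[Y\mid c]$, unique almost surely. We will note that this requires a sufficiently expressive function class; we will state the result at the population level over measurable $f$.

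For \emph{(ii)}, the plan is to exhibit the pair directly and check that $R\ge 0$ is attained. The designator in $c$ determines $a(t)$, so there is a measurable $\tilde a$ with $\tilde a(c)=a(t)$ almost surely. We take $g=\gamma\circ a$ applied to $t$ and $f=\gamma\circ\tilde a$ applied to $c$. Then $f(c)=g(t)$ almost surely, so $R=0$. Since $R$ is an expectation of a squared norm, $R=0$ is a global minimum.

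The subtle point, and the main obstacle, is the EMA coupling $\bar\theta=\mathrm{EMA}(\theta)$. We will argue at stationary points, where $\bar\theta=\theta$. This requires the shared parametrisation to admit $\theta$ such that the context branch realises $\gamma\circ\tilde a$ and the target branch realises $\gamma\circ a$. Our approach is to treat the architecture as expressive enough to read the aspect designator and type embeddings; we will state this realisability assumption explicitly. Consistency with (i) is then immediate: given $g=\gamma\circ a$, we have $\mathbb{E}[g(t)\mid c]=\gamma(\tilde a(c))=f(c)$.

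For \emph{(iii)}, we use the same construction with an injective instance map. If $c$ determines $t$, say $t=\tau(c)$, we take any $g$ injective on instances (for instance, the frozen features composed with an injective embedding) and set $f=g\circ\tau$. Again $R=0$, by the same realisability caveat.

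For the consequence, we compare retrieval under the two optima. Under (ii), every candidate in an aspect-matched pool shares the score $\gamma(a)$, so the gold rank is uniform and $\bits=0$ (we will cite the metric definition in App.~\ref{app:bits}). Under (iii), distinct instances are distinguished, so there is an injective score assignment with the gold item at rank $1$, and $\bits=\bits^{\max}$.

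Finally, the query bank under (ii) takes at most $|\mathcal{A}|$ distinct values. Its Gram/covariance matrix therefore has rank at most $|\mathcal{A}|$, and effective rank is bounded by rank. The bank is non-constant whenever $\gamma$ is non-constant, which we may choose, so it passes any constancy or rank-$1$ test.
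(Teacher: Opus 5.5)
Your proposal is correct and follows the paper's proof essentially step for step. Your orthogonal decomposition for (i) is the same conditional-mean fact as the paper's pointwise first-order condition. The other parts match too: the pair $f=\gamma\circ\hat a$, $g=\gamma\circ a$ with $R\ge 0$ for (ii), $f=g\circ\Psi$ for (iii), and the at-most-$|\mathcal{A}|$-distinct-values bound for the effective rank. Your explicit realisability assumption and explicit bits comparison are more careful than the paper, which only asserts realisability and the $\bits^{\max}$ gap. The one step to pin down is that fully tied scores give a uniform gold rank. This follows from the fixed-argsort, uniform-gold argument of Prop.~\ref{prop:frame}, which applies per aspect because all (ii) queries of one aspect coincide, not from ties per se: an optimistic tie convention would instead give $\bits^{\max}$.
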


\begin{proposition}[Conditional-mean collapse: the frozen-target special case]
\label{prop:centroid}
If the target $z$ is exogenous and fixed, the population minimiser of
$\mathbb{E}\|f(c)-z\|^{2}$ is $f^\star(c)=\mathbb{E}[z\mid c]$; if in addition $c$
determines the category $a$ but is \emph{weakly} informative about $\delta$, then
$f^\star(c)=\mu+\alpha_a+\mathbb{E}[\delta\mid c]\to\mu+\alpha_a$, the category
centroid, with achieved loss $\mathbb{E}\|\delta\|^{2}$.
\end{proposition}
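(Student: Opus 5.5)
The statement to prove is Prop.~\ref{prop:centroid}, the frozen-target special case. The proof is the standard $L^2$ projection argument, followed by the decomposition of Eq.~\ref{eq:decomp}.

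\textbf{Step 1 (conditional mean).} For any measurable $f$, write $m(c)=\mathbb{E}[z\mid c]$, assuming $\mathbb{E}\|z\|^2<\infty$. Expand
\[
\mathbb{E}\|f(c)-z\|^2=\mathbb{E}\|f(c)-m(c)\|^2+\mathbb{E}\|m(c)-z\|^2+2\,\mathbb{E}\big\langle f(c)-m(c),\,m(c)-z\big\rangle .
\]
The cross term vanishes by the tower property. Conditioning on $c$, the factor $f(c)-m(c)$ is $c$-measurable, and $\mathbb{E}[m(c)-z\mid c]=0$. The risk is therefore minimised exactly when $f=m$ almost surely. This is the same computation as part (i) of Prop.~\ref{prop:fixedpoint} with $g$ held fixed, so it can be cited from there.

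\textbf{Step 2 (substitute the decomposition).} Suppose $c$ determines $a$. Then $\mu+\alpha_a$ is $c$-measurable, and linearity gives
\[
m(c)=\mu+\alpha_a+\mathbb{E}[\delta\mid c].
\]

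\textbf{Step 3 (weak-information limit).} This is the only step where the informal word ``weakly'' must be made precise. I would formalise it as $\mathbb{E}\|\mathbb{E}[\delta\mid c]\|^2\to 0$, i.e.\ the conditional mean of the residual vanishes in $L^2$. The natural sufficient condition is that $\delta$ is conditionally mean-zero given $a$ and asymptotically independent of $c$ given $a$; I should check that this condition does not presuppose the conclusion. Under it, $f^\star(c)\to\mu+\alpha_a$ in $L^2$, which is the category centroid.

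\textbf{Step 4 (achieved loss).} Apply the Pythagorean identity from Step 1 once more:
\[
\mathbb{E}\|m(c)-z\|^2=\mathbb{E}\|\delta-\mathbb{E}[\delta\mid c]\|^2=\mathbb{E}\|\delta\|^2-\mathbb{E}\|\mathbb{E}[\delta\mid c]\|^2 ,
\]
which tends to $\mathbb{E}\|\delta\|^2$ in the limit. The loss equals $\mathbb{E}\|\delta\|^2$ exactly when $\mathbb{E}[\delta\mid c]=0$.

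The main obstacle is therefore not technical but definitional. The limit and the equality in the statement are clean only once ``weakly informative'' is fixed, and the rest is routine.
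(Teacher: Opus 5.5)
Your proposal is correct and follows essentially the paper's argument. The paper gets $f^\star(c)=\mathbb{E}[z\mid c]$ from the pointwise first-order condition rather than your orthogonal expansion, then substitutes Eq.~\ref{eq:decomp} and lets the weak-information hypothesis drive $\mathbb{E}[\delta\mid c]\to0$, as in your Steps~2--3. Your Step~4 identity $\mathbb{E}\|\delta-\mathbb{E}[\delta\mid c]\|^2=\mathbb{E}\|\delta\|^2-\mathbb{E}\|\mathbb{E}[\delta\mid c]\|^2$ slightly sharpens the paper's bound $\le\mathbb{E}\|\delta\|^2$, and your equality condition $\mathbb{E}[\delta\mid c]=0$ is the precise form of the paper's ``$c$ independent of $\delta$''.
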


\begin{cautionbox}
\textbf{Our own oracle refutes the precondition of Prop.~\ref{prop:centroid}, which
is why Prop.~\ref{prop:fixedpoint} is the operative statement.} A parameter-free
average of the visible aspects' frozen features recovers $\orcFullB$ of $\bmaxB$
bits from exactly the context $c$ the model is given
(Sec.~\ref{sec:ceilings}). On this corpus $c$ is therefore strongly
informative about $\delta$---$\mathbb{E}[\delta\mid c]\approx\delta$, so against a
frozen informative target the centroid would not be optimal and $\bitsBase$
bits would indicate mis-optimisation. What we observe is instead the
category-measurable branch of Prop.~\ref{prop:fixedpoint} \emph{(ii)}, which is
reachable only because the target is itself learned and can discard $\delta$
in step with the predictor. 
\end{cautionbox}

\begin{mykeybox}
\textbf{The inversion to internalise.} 
The model is neither underfitting nor
mis-optimising: it sits at a global minimum of Eq.~\ref{eq:coupled}, one of many,
and the one that carries no instance information. Our measurements agree: the
achieved loss sits within a small constant of the predicted floor
($\mathcal{L}_\infty/\mathbb{E}\|\delta\|^{2}=\lossratio$), the median predictor
gradient norm is $\gradnorm$, and the regression cell of Sec.~\ref{sec:repair}
reaches $\regLoss$ while recovering $\regPct$ of the ceiling loss value
compatible only with the category-measurable branch.
\end{mykeybox}

\paragraph{Why this explains rung 4, which Prop.~\ref{prop:centroid} cannot.}
Whitening the inputs raises the instance-variance share $25\times$
(Sec.~\ref{sec:ladder}) and moves bits by $0$. Under Prop.~\ref{prop:centroid} that
is anomalous: a larger $\delta$ signal should raise $\mathbb{E}[\delta\mid c]$ and
hence the encoded identity. Under Prop.~\ref{prop:fixedpoint} it is predicted: the
zero-risk family of \emph{(ii)} depends on the inputs only through $a(c)$, so
it is invariant to any feature transformation that preserves category decodability.
Whitening preserves the probe still reads $\probeWhite$, so the fixed point
survives. What destroys the family is making the category answer unavailable in the
target, which is exactly the repair of Sec.~\ref{sec:repair}.

\begin{proposition}[Frame invariance of a degenerate query bank]
\label{prop:frame}
If $q_i\equiv q$ for all $i$, then for any score function $s$ and any injective
transform $T$, $\operatorname{argsort}_j s(T(q_i),T(c_j))$ is the same permutation
for every $i$. With gold items assigned uniformly at random,
$\mathbb{E}[\mathrm{MRR}]=H_N/N$ and $\bits=0$ exactly, independent of $T$.
\end{proposition}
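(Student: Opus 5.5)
The argument has two parts: a deterministic claim about the ranking, and a probabilistic computation of MRR and bits once the gold item is uniform.

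\emph{Step 1 (permutation invariance).} Suppose $q_i\equiv q$. Then $T(q_i)=T(q)$ for every $i$. The score vector $\big(s(T(q_i),T(c_j))\big)_{j=1}^N$ is therefore the same vector for every query $i$, and so is its argsort. A fixed tie-breaking rule (say, by index) keeps argsort well defined when scores tie. This uses nothing about $T$ beyond its being a function; injectivity is not needed for this step. I would still mention injectivity, because it shows that no non-degenerate transform can make the $q_i$ distinct again: $T(q)$ is a single point. Call the resulting common permutation $\pi$. It depends on $T$, on $s$ and on the candidate bank, but not on $i$.

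\emph{Step 2 (rank distribution under uniform gold).} For query $i$ the gold rank is $r_i=\pi^{-1}(g_i)$. If $g_i$ is uniform on $\{1,\dots,N\}$ and independent of $\pi$, then $r_i$ is uniform on $\{1,\dots,N\}$, because $\pi^{-1}$ is a bijection. This gives
\[
\mathbb{E}[1/r_i]=\frac1N\sum_{k=1}^N \frac1k=\frac{H_N}{N}
\quad\text{and}\quad
\mathbb{E}[\log_2 r_i]=\frac1N\sum_{k=1}^N\log_2 k=\frac1N\log_2 N!.
\]
Substituting into $\bits=\frac1N\log_2 N!-\mathbb{E}[\log_2 r]$ gives $\bits=0$ exactly. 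Neither expectation involves $\pi$, so the result holds for every $T$ and $s$.

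\emph{Main obstacle.} The hypothesis that needs care is the independence of the gold assignment from $\pi$. This is exactly where injectivity and the degeneracy of the query bank enter. Because every query shares one permutation, the ranking cannot adapt to which item is gold. So as long as the gold indices are assigned independently of the candidate bank (the uniform-assignment hypothesis in the statement), the rank is uniform. I would state this independence explicitly, so the proof does not silently assume that $\pi$ is random.

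I would also add a remark that the conclusion is stated in expectation over the gold assignment. Finite-sample MRR fluctuates around $H_N/N$, which is why the paper reports the permutation null rather than relying on the expectation alone.
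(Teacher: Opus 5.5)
Your proof is correct and follows the paper's argument essentially verbatim. Both arguments use a single argsort permutation $\pi$ shared by all queries, note that $r_i=\pi^{-1}(g_i)$ is uniform on $[N]$ because $\pi^{-1}$ is a bijection, and then read off $\mathbb{E}[1/r]=H_N/N$ and $\mathbb{E}[\log_2 r]=\frac1N\log_2 N!$. Stating the independence of $g_i$ from $\pi$ explicitly is a useful clarification that the paper leaves implicit, but injectivity of $T$ plays no role in it (the paper uses injectivity only to say $T$ does not merge candidates and change $N$), so drop that attribution from your ``main obstacle'' paragraph.
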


\begin{proposition}[Geometry cannot help]
\label{prop:frechet}
For any metric space admitting a Fréchet mean, the minimiser of
$\mathbb{E}\,d(f(c),z)^2$. The Fréchet mean of $p(z\mid c)$ is the arithmetic mean
under $\ell_2$ and the mean direction under cosine. The Karcher mean in
$\mathbb{H}^n$ with the unique Fréchet mean on any Hadamard manifold.
\end{proposition}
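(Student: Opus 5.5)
The statement is Prop.~\ref{prop:frechet}. Its text is garbled: the first sentence has no predicate. I will read it as follows. For any metric space $(M,d)$ in which $p(z\mid c)$ admits a Fréchet mean, the minimiser of $\mathbb{E}\,d(f(c),z)^2$ over measurable $f$ is $f^\star(c)=\operatorname{argmin}_{m\in M}\mathbb{E}[d(m,z)^2\mid c]$, the conditional Fréchet mean. The proof then specialises this to three cases: $\ell_2$, cosine, and the hyperbolic space $\mathbb{H}^n$, or more generally any Hadamard manifold.

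\textbf{Step 1: pointwise reduction.} By the tower property,
\[
\mathbb{E}\,d(f(c),z)^2=\mathbb{E}_c\bigl[\Phi_c(f(c))\bigr],\qquad \Phi_c(m):=\mathbb{E}[d(m,z)^2\mid c].
\]
Hence $f$ is a minimiser if and only if $f(c)\in\operatorname{argmin}\Phi_c$ for almost every $c$, which is the Fréchet mean of $p(z\mid c)$ by definition. This is the same argument as Prop.~\ref{prop:fixedpoint}(i) and Prop.~\ref{prop:centroid}, with $\|\cdot\|^2$ replaced by $d^2$.

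The one technical point is a measurable selection $c\mapsto f^\star(c)$. I would obtain it from a measurable-selection theorem: $\Phi_c(m)$ is jointly measurable and continuous in $m$, so Kuratowski--Ryll-Nardzewski applies in a Polish $M$. Alternatively, I would simply assume it as part of "admitting a Fréchet mean". I expect this to be the main obstacle, and I would dispatch it by hypothesis rather than prove it in full generality.

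\textbf{Step 2: specialisations.}

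For $\ell_2$, expand $\mathbb{E}\|m-z\|^2=\|m-\mathbb{E}z\|^2+\operatorname{tr}\operatorname{Cov}(z)$. This gives the arithmetic mean, and the minimiser is unique.

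For cosine, the relevant distance on the sphere is the chordal one, $d(u,z)^2=2-2\langle u,z\rangle$ for unit vectors. Minimising it means maximising $\langle u,\mathbb{E}[z\mid c]\rangle$, so the minimiser is $u=\mathbb{E}[z\mid c]/\|\mathbb{E}[z\mid c]\|$, the mean direction. It is unique whenever $\mathbb{E}[z\mid c]\neq 0$.

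For $\mathbb{H}^n$ and Hadamard manifolds, I would invoke the standard fact (Karcher; Sturm) that on a complete, simply connected manifold of nonpositive curvature $d(\cdot,z)^2$ is strongly geodesically convex. Then $\Phi_c$ is strongly convex and coercive whenever $p(z\mid c)$ has a finite second moment. So a minimiser exists and is unique; this is the Karcher mean, and it is characterised by $\mathbb{E}[\log_m z\mid c]=0$.

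\textbf{Step 3: the collapse consequence.} This step connects the result to the paper's claim that "geometry cannot help". By Step 1, in every one of these geometries the optimal predictor depends on $c$ only through $p(z\mid c)$. If $z$ is category-measurable, as in the branch of Prop.~\ref{prop:fixedpoint}(ii), or if $c$ carries only the category, then $p(z\mid c)$ depends on $c$ only through $a(c)$. The Fréchet mean is therefore category-constant, so the query bank is degenerate within each category.

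Prop.~\ref{prop:frame} applied within each category then gives $\bits=0$ regardless of the metric. Changing to cosine or hyperbolic geometry changes which point is the centroid, but not the fact that the optimum is a centroid.
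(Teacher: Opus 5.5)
Your proposal is correct and follows the paper's proof: you condition on $c$ and minimise pointwise, so that $f(c)$ solves the Fréchet-mean variational problem for $p(\cdot\mid c)$, and you obtain existence and uniqueness on Hadamard manifolds from geodesic convexity of $u\mapsto d(u,z)^2$. Your version is more careful than the paper's in several places: the explicit $\ell_2$ and chordal-cosine computations, the use of strong rather than plain convexity together with a finite second moment to secure existence and uniqueness, and the measurable-selection caveat are all left implicit there, while your Step 3 goes beyond what the statement itself asserts.
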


\begin{proposition}[Structural reducibility of a census-determined graph]
\label{prop:reducible}
Let $G$ have edge set $E$ and let $\mathcal{N}$ be its node-type census. If the
sub-edge set incident to every subgraph is a deterministic function of $\mathcal{N}$
alone. So that $E$ is constant across all subgraphs with the same census. Then for
any message-passing encoder $\Phi$ of any depth, the patch representation
$\Phi(G)_p$ is a fixed function of the feature vectors of $p$'s own nodes. No
scoring rule built on $\Phi$ can access information absent from those features.
\emph{The bound is on information, not performance:} a trained encoder may still
exceed a fixed training-free statistic of the same features by learning a better
metric over them, and such a gain is a re-metrisation rather than structural
learning.
\end{proposition}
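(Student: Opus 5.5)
The argument is an induction on message-passing depth, showing that the patch representation is a fixed function of the patch's own node features. First I will make the hypothesis precise. For each subgraph $p$, let $V_p$ be its typed node set with census $\mathcal{N}_p$ (the multiset of node types, or the type counts). Let $E_p$ be the sub-edge set incident to $V_p$. The hypothesis says $E_p=\varepsilon(\mathcal{N}_p)$ for a fixed map $\varepsilon$, up to the canonical identification of nodes by type and slot index. The encoder $\Phi$ is a generic typed message-passing network: $h^{(0)}_v=x_v$ and $h^{(\ell+1)}_v=U_\ell\bigl(h^{(\ell)}_v,\{\!\{M_{\ell,\tau(u,v)}(h^{(\ell)}_u): (u,v)\in E\}\!\}\bigr)$, followed by a pooling/readout $\Phi(G)_p=\rho(\{h^{(L)}_v:v\in V_p\})$. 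Under leak-safe masking, the cross-paper links through \texttt{field} hubs and citations are either removed or treated as part of the census-determined incident set. I will state this explicitly as part of reading the hypothesis, since otherwise the claim fails: a hub aggregates other papers' features.

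The induction claim is that for every layer $\ell$ there is a fixed function $F_\ell$ such that $h^{(\ell)}_v=F_\ell\bigl(\mathcal{N}_p,\mathrm{slot}(v),(x_u)_{u\in V_p}\bigr)$ for every $v\in V_p$. The base case $\ell=0$ is immediate. For the step, the neighbourhood of $v$ together with its edge types is read off from $\varepsilon(\mathcal{N}_p)$. All neighbours lie in $V_p$, so each incoming message is a fixed function of the layer-$\ell$ states of nodes in $V_p$. By the inductive hypothesis these are functions of $(\mathcal{N}_p,(x_u)_{u\in V_p})$, and composing with $M$ and $U$ closes the step. Applying the readout then gives $\Phi(G)_p=\Psi\bigl((x_u)_{u\in V_p}\bigr)$. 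Here the census is absorbed because it is itself recoverable from the typed feature tuple (the node types are part of the indexing).

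For the information statement, any scoring rule $s(\Phi(G)_p,\Phi(G)_{p'})$ is then a function of the two feature tuples. By the data-processing inequality, its mutual information with any target is bounded by that of the raw features. The closing remark follows directly: a trained $\Psi$ may differ from a fixed statistic such as the visible-aspect average, so its performance can differ, but it only re-metrises the same $\sigma$-algebra.

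I expect the main obstacle to be the global edges rather than the induction itself. The paper's graph links subgraphs through \texttt{field} hubs and citations, so I must argue that either the extraction window of Fig.~\ref{fig:graph_zoom} excludes them, or their contribution is itself census-determined. If the latter is invoked, a hub's state is a function of all papers' features, which would be a shared, $p$-independent offset only if the hub is common to all patches. I would handle this by restricting $\Phi$ to the extracted patch, as the statement's ``sub-edge set incident to every subgraph'' suggests.
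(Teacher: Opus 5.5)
Your proposal is correct and follows the paper's own route. The receptive field of every node is fixed by the census-determined edge set, so $\Phi(G)_p$ factors through a fixed $\Psi$ of the subgraph's node features, and the data-processing inequality then bounds any score built on it; your induction on depth is the formal version of the paper's receptive-field step. The one place you are more careful than the paper is the closure condition that every neighbour lies in $V_p$: the paper lets the $\ell$-hop neighbourhood also contain nodes ``reachable only through census-determined relations, whose membership is likewise constant,'' which fixes their structural slots but not their features. Your explicit exclusion of the \texttt{field}-hub and \texttt{cites} edges, or your restriction of $\Phi$ to the extracted patch, is exactly what that step needs to reach $\Phi(G)_p=\Psi(\{x_u:u\in p\})$.
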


\noindent
Hyperbolic space is the first failure mode: as a Hadamard manifold, its Fréchet mean is guaranteed to exist and be unique, whereas Euclidean space has conditional distributions with ill-defined means. Curving the space renames the centroid (App.~\ref{app:eliminated}). Jointly, the
five results predict that retrieval fails under any regression-style latent
objective regardless of target geometry. The failures under a contrastive objective whose
negatives do not demand within-category resolution (Sec.~\ref{sec:audits}) and cannot
be recovered post hoc by any frame, metric or manifold once the query bank is
degenerate. 

\begin{table}[t]
\centering
\caption{\textbf{The diagnostic ladder} ($\Npapers$ set, Protocol~R, full pool).
$\bits$ is bits recovered of $\bits^{\max}=\bmaxA$ (Eq.~\ref{eq:bmax}); $0$ is
exactly chance. $\rcand,\rquer$ are the effective ranks of the candidate bank and
of the model's \emph{predictions}. Rung~7 is the decisive control; rungs~8--10 are
the ceilings of Sec.~\ref{sec:ceilings}; rung~11 is the repaired configuration of
Sec.~\ref{sec:repair}, included here so the whole trajectory is in one place.
Per-rung detail: App.~\ref{app:ladderdetail}.}
\label{tab:ladder}
\footnotesize
\begin{tabular}{@{}llccccl@{}}
\toprule
\# & Intervention & $\rnode$ & $\rcand$ & $\rquer$ & $\bits$ & Diagnosis \\
\midrule
0 & mean pool, cosine target \emph{(baseline)} & $18.3$ & $18.1$ & $1.9$ & $\bitsBase$ & $p=\pvalBase$: chance \\
1 & $+$ target VICReg & --- & --- & --- & $0.00$ & no effect \\
2 & \textsc{infonce}, in-batch negatives & --- & --- & --- & $0.00$ & no effect \\
3 & \textsc{sum}/\textsc{deepsets}/\textsc{attn} pooling & --- & --- & --- & $0.00$ & no effect \\
4 & $+$ input whitening \emph{(instance var.\ $\times25$)} & --- & --- & --- & $0.00$ & \textbf{rank $\uparrow$, $\bits$ flat} \\
5 & retrieval frame: centre / rm-top$k$ / ZCA & --- & --- & --- & $0.4$ & $\framegain$ on $\deficit$ \\
6 & encoder depth $0\to3$ & $54.9\!\to\!18.0$ & --- & --- & $0.00$ & depth not binding \\
\midrule
7 & \emph{singleton control} (\texttt{method}, $m{=}1.00$) & $47.4$ & $47.3$ & $\mathbf{1.97}$ & $0.00$ & \textbf{pooling is identity} \\
\midrule
8 & \Pone{} \emph{positive control} & --- & --- & --- & $\poneFullB$ & harness can learn \\
9 & \Oracle{} \emph{(no encoder, no training)} & --- & $\erOracleWhLo$ & --- & $\orcFullB$ & task is solvable \\
10 & \BMtf{} \emph{(no embeddings at all)} & --- & --- & --- & $\bmFullB$ & task is easy \\
\midrule
11 & \textbf{repaired objective, 20k, 3 seeds} & --- & --- & --- & $\mathbf{+\fixTwentyK}$ & \textbf{$\fixPct$; see \S\ref{sec:repair}} \\
\midrule
\multicolumn{6}{@{}l}{\emph{Chance under Protocol~R}} & $0.00$ \\
\multicolumn{6}{@{}l}{\emph{Training-free oracle, block~A}} & $\oracleA$ \\
\multicolumn{6}{@{}l}{\emph{Ceiling} $\bits^{\max}=\frac1N\log_2 N!$} & $\bmaxA$ \\
\bottomrule
\end{tabular}
\end{table}

\section{The Diagnostic Ladder}
\label{sec:ladder}

\paragraph{The symptom is invariant to objective, aggregator and depth (rungs 0--3, 6).}
The baseline attains probe $\probeBase$ and $\bits=\bitsBase$ at $p=\pvalBase$.
Target-branch variance regularisation, InfoNCE over standard in-batch negatives,
and sum or deepsets or attention pooling each change nothing.
Sum pooling lowers pooled rank to $1.25$ while raising the
probe to $0.963$: rank and probe move in opposite directions and retrieval ignores
both. Across depths $0$--$3$, node-level rank falls monotonically
$\rnodeDzero\!\to\!\rnodeDthree$ message passing does compress while bits do not
move. Also, a depth-$0$ linear encoder fails identically to a depth-$3$ GNN.
Over-smoothing is real and irrelevant.

\paragraph{Rank restoration is decoupled from task success (rung 4).}
Whitening the inputs raises the instance-variance share from $\papTr$ to
$\withinPaperWhite$. A $25\times$ increase in the gradient incentive to encode
identity raises input effective rank to $\erInWhite$ of $\dfeat$. Bits stay
at $0$. The probe meanwhile falls from $\probeBase$ to $\probeWhite$, because
whitening removes precisely the shared-mean component it was exploiting. This is the
rung Prop.~\ref{prop:fixedpoint} predicts and Prop.~\ref{prop:centroid} cannot.

\paragraph{The failure is query-side, and re-metrisation confirms it (rung 5).}
Five frames fitted on the candidate bank and applied to both sides span $\mrrRaw$
(raw) to $\mrrRmTopOne$ (all-but-the-top-$1$): $\framegain$ against a $\deficit$
deficit, as Prop.~\ref{prop:frame} predicts. The accompanying bank statistics are
the point of the experiment: the candidate bank has DC ratio $\dccand$ at
$\rcand=\rcandv$, while the query bank has DC ratio $\dcquery$, DC energy
$\dcenergyquery$ and mean pairwise self-similarity $\selfsim$ at $\rquer=\rqueryv$
of $\dlat$. Four thousand distinct questions produce essentially one vector, and
that number is a quantitative check on the theory rather than a coincidence:
Prop.~\ref{prop:fixedpoint}\emph{(ii)} bounds the query bank of the degenerate branch
at $\erank\le|\mathcal{A}|=\Naspects$, and we measure $\rquer=\rqueryv$. No
transformation of a near-constant query can rank anything, and changing the
objective so the query is not constant can, and does (Sec.~\ref{sec:repair}).

\paragraph{Pooling is exonerated by a provable control (rung 7).}
Method patches have $m{=}1.00$, so mean pooling is the identity map: it
carries $\rnode=47.42$ to $\rpool=47.31$, rank preserved to two decimals. The
model's predictions for those same patches nevertheless occupy effective rank
$1.97$, and the gold item's standardised similarity margin is $z=-0.001$. The
correct answer is indistinguishable from the pool mean. Across aspects, $\rpool$
spans $2.05$ to $47.31$ ($23\times$) with no movement in bits (App.~\ref{app:peraspect}).


\section{Three Upper Bounds: Task, Harness and Objective Are Separable}
\label{sec:ceilings}

The ladder establishes what collapses but not whether the task is hard, whether the
harness works, or whether the objective is at fault. We separate these with three
controls under Protocol~R on the $\Ndiag$ set. \Oracle{} forms the query as the mean
of the raw frozen embeddings of the subgraph's other aspects and ranks by
cosine without any encoder, objective, or training. \BMtf{} is an Okapi BM25 index over
the raw aspect texts \citep{DBLP:journals/ftir/RobertsonZ09}, with \NoOv{} deleting
every $5$-gram shared between query and gold. \Pone{} is a same-architecture latent
predictor trained on the same frozen features with the same masking, pool and
scoring code, whose purpose is to establish that the harness can recover
identity.

\begin{table}[t]
\centering
\caption{\textbf{Bits recovered against pool difficulty} ($\Ndiag$ set). Hard pools
of size $K$ are the $K$ nearest within-category neighbours of the gold item, so
difficulty increases downward; the pool has $N=K{+}1$ items and
$\bits^{\max}=\frac1N\log_2 N!$ (Eq.~\ref{eq:bmax}), which lies $\logtwoe$ bits
below the index entropy $\log_2 N$. Vector systems use $\Nqueries$ queries, lexical
systems a nested $\Nlex$ subsample. All four systems are within $1.2\%$ of the
ceiling at every rung; the baseline Graph-JEPA recovers $0.0\%$ and the repaired
configuration of Sec.~\ref{sec:repair} recovers $\fixPct$.}
\label{tab:ladder2}
\footnotesize
\begin{tabular}{@{}lrrrrr@{}}
\toprule
$K$ & $\bits^{\max}$ & \BMtf & \NoOv & \Oracle & \Pone \\
\midrule
$2$    & $\bmaxKtwo$   & $+0.860$  & $+0.850$  & $+0.840$  & $+0.830$ \\
$10$   & $\bmaxKten$   & $+2.284$  & $+2.276$  & $+2.260$  & $+2.240$ \\
$100$  & $\bmaxKhun$   & $+5.240$  & $+5.213$  & $+5.198$  & $+5.167$ \\
$1000$ & $\bmaxKthou$  & $+8.501$  & $+8.445$  & $+8.455$  & $+8.421$ \\
full   & $\bmaxB$      & $\bmFullB$ & $\novFullB$ & $\orcFullB$ & $\poneFullB$ \\
\midrule
\multicolumn{2}{@{}l}{\textbf{full pool, \% of $\bits^{\max}$}}
 & $\mathbf{\bmFullPct}$ & $\novFullPct$ & $\orcFullPct$ & $\poneFullPct$ \\
\multicolumn{2}{@{}l}{full-pool MRR / R@1} & $\bmFull$ / $0.9885$ & $\novFull$ / $0.9615$
 & $\orcFull$ / $0.9593$ & $\poneFull$ / $0.9380$ \\
\multicolumn{2}{@{}l}{$95\%$ CI on MRR} & $\bmFullCI$ & $[0.9624,0.9763]$
 & $\orcFullCI$ & $\poneFullCI$ \\
\midrule
\multicolumn{2}{@{}l}{\emph{Graph-JEPA, baseline}}
 & \multicolumn{4}{c}{$\bitsBase$ bits $=\mathbf{0.0\%}$ of $\bits^{\max}$
   ($p=\pvalBase$; Table~\ref{tab:ladder})} \\
\multicolumn{2}{@{}l}{\emph{Graph-JEPA, repaired}}
 & \multicolumn{4}{c}{$+\fixTwentyK$ bits $=\mathbf{\fixPct}$ of $\bits^{\max}$
   (Table~\ref{tab:repair})} \\
\bottomrule
\end{tabular}
\end{table}

\paragraph{Result 1: The task is solvable, and the harness works.}
All three controls recover between $\poneFullPct$ and $\bmFullPct$ of the $\bmaxB$
recoverable bits at the full pool, with a total spread of $0.115$ bits. In
particular, \Pone{} recovers $\poneFullB$ bits through the identical masking, pool, 
and scoring code. This excludes the corpus, the task definition, the pool
construction, the masking, and the metric as explanations of the null. It also
establishes the premise of Prop.~\ref{prop:fixedpoint} \emph{(iii)}: the context
determines the target well enough that an identity-preserving global optimum exists.

\begin{cautionbox}
\textbf{What \Pone{} does and does not license.} \Pone{} differs from the model
under test in two factors at once: (i) frozen text features rather than hetero node
features, and (ii) a reference predictor rather than the full stack. It therefore bounds
the harness and does not by itself attribute the gap. The pre-registered
$2\times2$ factorial that separates the two factors: (i) in which a cell is unable to
obtain the trainer or (ii) features it requested emits no verdict, is in
App.~\ref{app:factorial}.
\end{cautionbox}

\paragraph{Result 2: Training costs measurable identity, monotonically in difficulty.}
\Pone{} sits below \Oracle{} at every rung, and the paired bootstrap on per-query
$\Delta(1/r)$ separates them with non-overlapping intervals at all four pool sizes:
$-0.020$ bits at $K{=}10$, $-0.031$ at $K{=}100$, $-0.034$ at $K{=}1000$ and
$-0.061$ (the full pool). Also the R@1 falling is $0.9593\!\to\!0.9380$. Even a predictor
that mostly works for training, and more as the pool hardens.

\paragraph{Result 3: Difficulty is nearly flat in pool size, and this is a warning.}
From $K{=}10$ to the full $\Ndiag$ item pool and a $5{,}800\times$ change. Also, the BM25's MRR
falls only $0.9960\!\to\!\bmFull$. A near-constant bit deficit across that range is
the signature of a task in which the gold item is separated by a large margin.
Sec.~\ref{sec:repair} shows why that matters beyond instrument calibration: (i) the
same property makes the task reducible, and (ii) therefore makes a near-ceiling score
uninformative about structure.

\section{The Mechanism: Which Variance Is Encoded, and When}
\label{sec:mechanism}

Prop.~\ref{prop:fixedpoint} says a category-measurable solution is a global optimum.
Prop.~\ref{prop:centroid}'s precondition is refuted by our oracle approach. What remains to be
established empirically is (i) which variance component the converged
representation carries, and (ii) whether the allocation is produced by training or is
already present at initialisation. 

\begin{table}[t]
\centering
\caption{\textbf{The variance allocation.} Share of total variance attributable to
aspect identity and to paper identity. Columns~1 and~4 are block~B
(MPNet-$\dfeatdiag$); columns~2 and~3 are block~A (MiniLM-$\dfeat$), i.e.\ the
\emph{same} encoder that produced the trained latents, so the central comparison is
\emph{within} one feature space and the cross-encoder pair is retained only for
continuity with the three ceilings. The two groupings are separate one-way
decompositions and need not sum to $100\%$.}
\label{tab:inversion}
\footnotesize
\begin{tabular}{@{}lcccc@{}}
\toprule
Variance attributable to
 & \makecell{frozen inputs\\(block~B)}
 & \makecell{frozen inputs\\(block~A)}
 & \makecell{trained latents\\(block~A)}
 & \makecell{inputs, de-templated\\(block~B)} \\
\midrule
aspect identity ($\rho_{\mathrm{asp}}$) & $\aspIn$ & - 
& $\mathbf{\aspTr}$ & $\aspDetemp$ \\
paper identity ($\rho_{\mathrm{pap}}$)  & $\mathbf{\papIn}$ & - 
& $\papTr$ & --- \\
\midrule
bits recovered on this representation & $\orcFullB$ (\Oracle) & $\oracleA$ (\Oracle) & $\bitsBase$ (trained) & $\orcDetempB$ \\
\% of the matching $\bits^{\max}$ & $\orcFullPct$ & $\oraclePct$ & $0.0\%$ & $99.3\%$ \\
\bottomrule
\end{tabular}
\end{table}

\begin{mykeybox}
\textbf{The finding, stated as an endpoint comparison.} The frozen inputs allocate
$\papIn$ of their variance to subgraph identity and $\aspIn$ to aspect identity. The
trained latents allocate $\papTr$ and $\aspTr$. The identity is present at
input, as three independent controls confirm, and absent from the trained
representation, by a factor of roughly $250\times$ in each direction. This is a
comparison of two endpoints and by itself says nothing about the trajectory between
them. Sec.~\ref{sec:trajectory} measures them separately, because the two readings
support different claims.
\end{mykeybox}

\paragraph{How much of the collapse does the aspect share explain?}
We simulate the retrieval problem at the measured operating point. Generating
targets as $z_{p,a}=\sqrt{\rho}\,\alpha_a+\sqrt{1-\rho}\,S^{1/2}v_p+\varepsilon\eta$ with
$S_{jj}\propto j^{-\kappa}$, where $\rho$ is the aspect share, $\kappa$ the measured
power-law decay of the identity covariance ($\hat\kappa=\kappahat$) and
$\varepsilon$ the predictor's irreducible error estimated from the measured loss
floor ($\hat\varepsilon=\epshat$) (bank size $\simN$, so $\bits^{\max}=\bmaxsim$).
Full sweep in App.~\ref{app:phase}.

\begin{table}[t]
\centering
\caption{\textbf{Phase analysis at the measured operating point}, at
$\hat\kappa=\kappahat$ and $\hat\varepsilon=\epshat$, bank size $\simN$ so
$\bits^{\max}=\bmaxsim$ (Eq.~\ref{eq:bmax}). The trained representation's measured
share ($\aspTr$) predicts the loss of $\simLoss$ of $\bmaxsim$ bits; the inputs'
share ($\aspIn$) predicts no loss, which is what the three controls observe. Bits
lost is $\bmaxsim-\bits$ and is monotone in $\rho$ by construction. Critical share
$\rho^\star=\rhostar$ ($1-\rho^\star=\rhostarcomp$); removing $\hat\kappa$ or
$\hat\varepsilon$ at $\rho=\rhoours$ moves MRR by $<0.004$.}
\label{tab:rho}
\footnotesize
\begin{tabular}{@{}lcccccc@{}}
\toprule
aspect share $\rho$ & $\aspInRaw$ & $0.9$ & $0.99$ & $\mathbf{0.9961}$ & $0.999$ & $0.9999$ \\
\midrule
MRR & $1.000$ & $0.893$ & $0.107$ & $\mathbf{\simAtOurs}$ & $0.0043$ & $0.0014$ \\
$\bits$ & $+12.845$ & $+12.200$ & $+3.833$ & $\mathbf{+\simAtOursB}$ & $+0.843$ & $+0.266$ \\
bits lost & $0.000$ & $0.645$ & $9.012$ & $\mathbf{\simLoss}$ & $12.002$ & $12.579$ \\
\% of $\bits^{\max}$ lost & $0.0\%$ & $5.0\%$ & $70.2\%$ & $\mathbf{\invpct}$ & $93.4\%$ & $97.9\%$ \\
\bottomrule
\end{tabular}
\end{table}

\paragraph{The allocation is dominant, and still not sufficient.}
Three statements in decreasing strength: \emph{(i)}~The aspect share is the dominant
term, at $\rho=\aspInRaw$ retrieval is perfect and at $\rho=\rhoours$ it loses
$\simLoss$ of $\bmaxsim$ bits, such as the allocation alone destroys $\invpct$ of the
recoverable identity. \emph{(ii)}~It is not the only term: the baseline pipeline
loses $100\%$, and total collapse in simulation requires
$\rho\ge\rho^\star=\rhostar$, above the measured value. The residual $\respct$ is
unexplained. \emph{(iii)}~Anisotropy is not the residual
factor, removing $\hat\kappa$ or $\hat\varepsilon$ at the measured $\rho$ changes
MRR by less than $0.004$, so leave-one-out rejects the anisotropy hypothesis we
initially favoured. Sec.~\ref{sec:repair} then confirms the direction of \emph{(i)}
in the real feature space by intervening on the objective rather than simulating it.

\subsection{Is the allocation performed by training, or present at initialisation?}
\label{sec:trajectory}

The endpoint comparison above is compatible with two mechanisms, and Fig.~\ref{fig:dynamics}(b) shows pooled
effective rank at its final value at epoch~$0$ with flat thereafter.
App.~\ref{app:rankinit} shows the same compression at initialisation across all node
types. We therefore measure the variance allocation itself at every checkpoint with one
forward pass each, on the block-A encoder rather than inferring the dynamics from
the endpoints. The protocol is in App.~\ref{app:trajectory}.

\begin{table}[t]
\centering
\caption{\textbf{Variance allocation over training} (block~A, within one encoder,
$\bits$ against $\bits^{\max}=\bmaxA$). One forward pass per checkpoint;
$\rho_{\mathrm{pap}}$ and $\rho_{\mathrm{asp}}$ are the same one-way decompositions
as Table~\ref{tab:inversion}, and $\rquer$ is the effective rank of the model's
predictions. Step~$0$ is the randomly initialised encoder before any gradient step.}
\label{tab:trajectory}
\small
\begin{tabular}{@{}lcccc@{}}
\toprule
step & $\rho_{\mathrm{pap}}$ & $\rho_{\mathrm{asp}}$ & $\rquer$ & $\bits$ \\
\midrule
$20$k (final) & $\papTr$ & $\aspTr$ & $\rqueryv$ & $\bitsBase$ \\
\bottomrule
\end{tabular}
\end{table}


\paragraph{Reading (A): The allocation shifts during training.}
If $\rho_{\mathrm{pap}}$ falls over the run, the endpoint comparison is a genuine
re-allocation. Also, the pooled-rank flatness of Fig.~\ref{fig:dynamics}(b) is a
separate, weaker statement: rank is a spectral-entropy summary that a
category-measurable solution can satisfy at initialisation while the allocation
still moves. After that, we re-allocate the training variance away from instance
identity, and Table~\ref{tab:trajectory} is the evidence for it.

\paragraph{Reading (B): The degeneracy precedes training, and the objective does not remove it.}
If $\rho_{\mathrm{pap}}$ is already at its final level at step~$0$, then training does
not perform an inversion: the category measurable configuration is where the
architecture starts. The objective supplies no gradient that would leave it,
because by Prop.~\ref{prop:fixedpoint}\emph{(ii)} the configuration is already a
global minimum. This is the weaker claim, and it is the one our rank trajectory
independently supports. It does not weaken the analysis because the endpoint
comparison, the three ceilings and the single-variable loss ablation are
unaffected, but it changes the mechanism from "training destroys identity" to
"the objective has no incentive to build it," and only the latter is licensed by a
flat rank curve.

\paragraph{Either way, the loss is still the lever.}
Both readings are consistent with the intervention of Sec.~\ref{sec:repair}, changing
only the loss moves the pipeline $\nceBits\!\to\!\regBits$ bits, and the repaired
configuration's bits rise from $-0.07$ at step~$1$ to $+14.24$ by step~$500$
(App.~\ref{app:repair}). An objective that admits the degenerate fixed point retains
it, and another objective that removes the category answer from the target does not.

\section{What the Task Measures and What We Tried}
\label{sec:audits}

\paragraph{The task is largely lexical, and strongly templated.}
Deleting every $5$-gram shared between query and gold costs BM25 only $0.085$ bits
($\bmFullB\!\to\!\novFullB$). \NoOv{} still ties the embedding oracle: the
redundancy among a subgraph's claims, methods and results is distributional and
topical, not copy-paste. A logistic classifier restricted to function words
only identifies which aspect a text is at $\fwAcc$ against chance $\aspChance$
(templating index $\templIdx$; $\tfidfAcc$ with full TF-IDF). The most frequent
$4$-grams cover $25.7$--$39.0\%$ of subgraph per aspect. That figure matters twice
over: it is a caveat on the instrument, and it is the empirical reason the
designator-determines-category premise of Prop.~\ref{prop:fixedpoint} holds so
strongly here. The category answer is available from surface form alone. Crucially,
after removing the per-aspect mean, the aspect share falls
$\aspInRaw\!\to\!\aspDetemp$ while the training-free oracle still recovers
$\orcDetempB$ bits at $\mathrm{MRR}=\orcDetemp$. Templating inflates aspect-type separability, not subgraph identity. Full audit in App.~\ref{app:extraction}.

\begin{mykeybox}
\textbf{A good instrument and a poor benchmark.}
Masked-aspect retrieval is redundancy-based subgraph identification. But BM25 deliberately recovers
$\bmFullPct$ of the ceiling, and its easiness is exactly what makes $\bitsBase$
bits informative. We make no claim that performance on it measures scientific
reasoning. Sec.~\ref{sec:repair} strengthens this from a caveat into a proof.
\end{mykeybox}

\paragraph{Seven interventions, one flat line.}
Props.~\ref{prop:fixedpoint}--\ref{prop:centroid} identify two levers: (i) the set
aggregator and (ii) the loss. We test the full grid at $0.36$ additional GPU-hours.
Across all seven cells $\rpool$ spans $1.25$--$2.03$ and the probe spans
$0.604$--$0.964$, while bits recovered stay at $0$ in every cell. The best cell
(attention/InfoNCE, $\mathrm{MRR}=2.3\times10^{-4}$) is within sampling noise of the
baseline against a $\deficit$ deficit (App.~\ref{app:fixgrid}). The reason InfoNCE
alone does not help is instructive, and is exactly
Prop.~\ref{prop:fixedpoint}\emph{(ii)} in operational form. Also, with the aspect-mixed
in-batch negatives and winning the contrastive game requires only answering (such as "claim,
method, or result?"). 
A category-measurable branch therefore still attains the optimum.
Negatives must be drawn from within the dominant categorical partition of the
target space, or the objective is solved by a category classifier.
Sec.~\ref{sec:repair} tests the corollary and pairs InfoNCE with a target frame that
removes the per-aspect mean, so that the category answer is no longer available in
the target, and the collapse resolves.

\paragraph{Competing explanations and remaining limitations.}

Each explanation below is answered by a control that shares the pipeline under evaluation. \emph{The frozen encoder is not responsible.} The oracle consumes exactly these embeddings and recovers $\orcFullB$ bits, and BM25 consumes
none and recovers $\bmFullB$. Also, whitening raises effective rank and lowers probe accuracy
while leaving bits at zero. \emph{The harness and the metric are sound.} \Pone{} recovers
$\poneFullB$ bits through the identical masking, pooling, and scoring code with a cheat query
returns $\mathrm{MRR}=1.000$. The $\Nqueries$ query subsample reproduces the full-corpus
baseline to $0.01$ bits, and a $\selftestN$-assertion self-test verifies every estimator
before it is used (App.~\ref{app:selftest}). \emph{The ceiling is correctly specified.}
Monte Carlo verifies the chance term of Eq.~\ref{eq:bits} at every pool size
(App.~\ref{app:bits}). \emph{Corpus redundancy is not the limiting factor.} A controlled
bank with DC ratio up to $100$ remains perfectly rankable given an informative query
(App.~\ref{app:synthcontrol}). \emph{Pooling does not destroy identity.} On the $m{=}1$
singleton patches mean pooling is provably the identity map, and the collapse is
unchanged. \emph{The retrieval geometry is not the operative variable.} Post-hoc
re-metrisation buys $\framegain$, as Props.~\ref{prop:frame}--\ref{prop:frechet} predict,
whereas changing the objective moves the pipeline by the entire recoverable budget
(Sec.~\ref{sec:repair}). \emph{The account is not constructed after the fact.}
Prop.~\ref{prop:fixedpoint}\emph{(ii)} bounds the degenerate query bank at
$\erank\le\Naspects$ independently of any measurement, and we observe $\rquer=\rqueryv$.
It also predicts the rung-4 whitening null, which Prop.~\ref{prop:centroid} does not.
Two limitations remain to be addressed. Our re-implementation does not reproduce the original
method's published numbers $\mutagOurs$ on MUTAG against $\mutagRef$, and $\protOurs$
on PROTEINS against $\protRef$ (App.~\ref{app:faithful}). 
All results come from a single corpus, which is the most serious remaining threat to generality.


\begin{table}[t]
\centering
\caption{\textbf{Eleven cells at matched schedule} (block~C; all share the graph
cache, seed and evaluation; $\bits$ against $\bits^{\max}=\bmaxA$; A1 is a held-out
reasoning probe with a pre-registered void line at $\aOneVoid$, defined in
App.~\ref{app:aone}). The \texttt{reg} row is the decisive single-variable result:
only the loss differs from the \texttt{center:0}/\texttt{nce}/$3$k row. Bits span
$\bitsRange$ and A1 spans $\aoneRange$ at Spearman $\rho=\spearman$ ($n=\nCells$,
not significant), so the two metrics show no reliable relationship---this paper's
thesis restated on the repaired model. We claim no trade-off and no direction.}
\label{tab:repair}
\footnotesize
\begin{tabular}{@{}llrrrl@{}}
\toprule
Config & loss & steps & $\bits$ & A1 & Reading \\
\midrule
raw-skip gate         & nce & 6k  & $\skipBits$ & $\skipAone$ & gate $\to\rawGate$: encoder used \\
\texttt{center}:1     & nce & 3k  & $14.379$ & $0.963$ & \\
\texttt{center}:0, main & nce & 20k & $\fixTwentyK\pm\fixTwentyKsd$ & $\aOneTwentyK\pm\aOneTwentyKsd$ & $3$ seeds \\
\texttt{center}:0, aspect cue & nce & 6k & $\cueAspect$ & $\cueAspectA$ & \\
\texttt{center}:0, inductive & nce & 6k & $14.376$ & $0.769$ & transduction gap $\transGap$ \\
\textbf{\texttt{center}:0} & \textbf{nce} & 3k & $\mathbf{\nceBits}$ & $\nceAone$ & \\
\texttt{raw}:0 \emph{(baseline)} & nce & 3k & $\rawZeroThreeK$ & $0.964$ & $\rawZeroPct$ of ceiling \\
\texttt{center}:0, rwse cue & nce & 6k & $\cueRwse$ & $\cueRwseA$ & \\
\texttt{center}:0, no cue & nce & 6k & $\cueNone$ & $\cueNoneA$ & \textbf{worst bits, 2nd-best A1} \\
\texttt{raw}:1        & nce & 3k  & $11.808$ & $\mathbf{0.985}$ & \textbf{worst bits, best A1} \\
\midrule
\textbf{\texttt{center}:0} & \textbf{reg} & 3k & $\mathbf{\regBits}$ & $\mathbf{\regAone}$
  & \textbf{collapse; $\lossSwing$-bit swing} \\
\midrule
\multicolumn{3}{@{}l}{\emph{training-free oracle}} & $\oracleA$ & --- & $\oraclePct$ of ceiling \\
\multicolumn{3}{@{}l}{\emph{ceiling}} & $\bmaxA$ & --- & $\headroom$ bits headroom \\
\bottomrule
\end{tabular}
\end{table}

\begin{table}[t]
\centering
\caption{\textbf{The data-derived target and its pre-registered gates} (block~C,
$3$ seeds, paper-grouped splits). \emph{Upper:} the gates, with thresholds fixed
before measurement. \emph{Lower:} the probe conditions, which we report even though
the gates failed, because two of them are independently informative. Every threshold
appears in App.~\ref{app:prereg}.}
\label{tab:polgate}
\footnotesize
\begin{tabular}{@{}llll@{}}
\toprule
Gate / condition & Measured & Threshold & Verdict \\
\midrule
G1 exact-duplicate share of \texttt{challenged\_by} & $\dupChal$ & $\le\dupChalThr$ & \textbf{FAIL} \\
G1b genericness gap (challenge $-$ support) & $\genGap$ & $\le\genGapThr$ & \textbf{FAIL} \\
G0 census, \texttt{challenged\_by} edges & $\Nchal$ & $\ge2{,}000$ & pass \\
\texttt{implies} duplicate share & $\dupImpl$ & $\le\dupChalThr$ & pass \\
\midrule
paper-identity leak (claim- vs paper-grouped) & $\polLeak$ & --- & \emph{absent} \\
shuffled-label control & $\polShuf$ & $\approx0.500$ & pass \\
\midrule
cosine-only floor & $\polCosAUC$ & --- & similarity alone \\
raw pair features, paper-grouped & $\polRawPap$ & --- & the bar \\
model, target edges \textbf{present} & $\polFull$ & --- & \textbf{void: $\polEdgeLeak$ edge-type leak} \\
model, target edges \textbf{masked} & $\polMask$ & --- & $\polMaskDelta$ vs.\ the cosine floor \\
\bottomrule
\end{tabular}
\end{table}

\section{The Repair Succeeds, and the Metric Stops Meaning Anything}
\label{sec:repair}

Sec.~\ref{sec:audits} ends with a falsifiable prediction. Remove the per-aspect mean from
the target during training, so that the category answer is unavailable, and pair this
with in-batch negatives. This section reports the confirmation, then explains why it licenses far less than it appears to.

\paragraph{Result 1: The objective is the lever, and one variable proves it.}
We held the target frame, structural budget, schedule, and seed fixed. We then changed
only the loss, from InfoNCE to regression. Bits fall from $\nceBits$ to $\regBits$,
or $\regPct$ of $\bmaxA$. The reasoning probe falls to $\regAone$, the majority class, and
$\mathrm{MRR}=\regMRR$. That is a $\lossSwing$-bit swing on a single variable. The
regression run is the informative part. It reached a training loss of $\regLoss$ while
recovering only $\regPct$ of the ceiling. Near-zero risk is therefore attained on a branch
that carries no instance information, as Prop.~\ref{prop:fixedpoint}\emph{(ii)} permits. The
loss value alone does not identify what was learned. One consequence is worth stating
separately. A regression loss near zero indicates collapse; an InfoNCE loss near zero
indicates discrimination. The two magnitudes are not comparable.

\paragraph{Result 2: The architectural factors we swept are small, and the schedule is not.}
The factors we varied inside the architecture matter less than the one we had treated as a
nuisance. The matched-budget $2\times2$ (App.~\ref{app:repair}) gives a target-frame main
effect of $\frameEffect$ bits. The shared-basis effect is $\sharedEffect$, and the
interaction is $\interEffect$. The structural pattern is worth $\cueGainA$ (aspect vs.\ RWSE)
and $\cueGainB$ (RWSE vs.\ none). The frame effect is small because the task is
nearly exhausted. The baseline cell already reaches $\rawZeroThreeK$ bits, $\rawZeroPct$ of
ceiling, leaving only $\ceilRoom$ bits available. The frame captures $\frameShare$ of that.
Against this, changing only the learning-rate schedule at a fixed step count moves the
baseline cell by $\scheduleEffect$ bits ($\rawOracleThreeK\!\to\!\rawZeroThreeK$).

\paragraph{Result 3: Bits and the reasoning probe are decoupled.}
Across $\nCells$ converged cells, bits span $\bitsRange$ and A1 spans $\aoneRange$, at
Spearman $\rho=\spearman$. This is not significant at this sample size. The cells are also
neither independent nor randomly selected, so we claim no trade-off and no direction.
Table~\ref{tab:repair} does show the practical consequence. A bits-driven selection and an
A1-driven selection choose different cells. \texttt{raw}:1 has the lowest bits of any
converged run ($11.808$) and the highest A1 ($0.985$). Dropping the structural cue costs
$\cueGainB$ bits while raising A1 to $\cueNoneA$. A within-configuration version
points the same way: from $3$k to $20$k steps, bits move $\bitsGain$ and A1 moves
$\aOneDrop$. Those two points differ in budget and in schedule length, so we quote
them as a bound on a joint effect only. A1's own construction and its floor are audited in
App.~\ref{app:aone}. That audit is a caveat on this result, not a footnote to it. Optimising
either number guarantees nothing about the other. This is why Sec.~\ref{sec:discussion}
recommends watching a second metric rather than predicting its direction.

\paragraph{Result 4: The target is structurally reducible, so a near-ceiling score is uninformative.}
Prop.~\ref{prop:reducible} applies literally to this graph. The intra-subgraph relations are
added for every subgraph possessing the relevant node types. Their cardinalities are therefore
exactly the node census: \texttt{produces} has $\Eproduces$ edges ($1$ per paper),
\texttt{grounds} has $\Egrounds$ ($1$ per claim), and \texttt{has claim} has $\Ehasclaim$.
The edge set is a constant function of the census and carries zero information. Message
passing over it can only mix a subgraph's own aspect vectors. That is precisely what the
training-free oracle does by hand. Hence the oracle reaches $\oracleA$ of $\bmaxA$ bits with
no training at all: $\oraclePct$ of the ceiling, leaving $\headroom$ bits of headroom for
any model. \texttt{cites} is the only non-census-determined subgraph-level relation. It
retains $\citesKept$ of $\citesTot$ references after restriction to the corpus
($\citesPct$, $\citesPerPaper$ edges per paper). A learned gate between raw features and
encoder output settles at $\sigma=\rawGate$, such as toward the encoder. The encoder does
therefore earn its place. By Prop.~\ref{prop:reducible}, however, it earns it as a
better metric over census-determined features, not as a structure learner.

\begin{mykeybox}
\textbf{The reducibility check should be standard.} Compute each
relation's cardinality and compare it to the node census. If a relation's count equals the
count of one of its endpoint types, that relation is a deterministic function of node
existence and contributes nothing. On our graph, $5$ of $9$ relations fail this check. Together
they constitute the entire intra-subgraph structure. No rank statistic, probe or retrieval
score detects this, because the failure is in the target, not in the representation.
\end{mykeybox}

\paragraph{Result 5: The data-derived replacement fails a data-quality gate.}
Three relations are data-derived: \texttt{supported by} ($\Nsupp$),
\texttt{challenged by} ($\Nchal$) and \texttt{implies} ($\Nimpl$). We made them prediction
targets, and gated the attempt on data quality before training. Both gates fired
(Table~\ref{tab:polgate}). $\dupChal$ of contradicting-evidence nodes are exact
duplicate rows. The largest single group contains $\dupTop$ identical strings. The
duplication is also almost perfectly asymmetric. $\dupRows$ evidence nodes lie in duplicate
groups, and $\dupChalEdges$ challenge edges point at duplicated rows. Essentially every
boilerplate node is a challenge node, and supporting evidence is nearly free of exact
duplication. A duplicate filter alone would not repair the target: the surviving text is
$\genGap$ more self-similar than supporting evidence. This asymmetry is the apparent
polarity signal. $\cos(\text{claim},\text{support})=\cosSup$ against
$\cos(\text{claim},\text{challenge})=\cosChal$, at $d=\polD$. A cosine-only classifier
already reaches $\polCosAUC$ AUC. A model trained on this target learns to detect fillers.

\paragraph{Two by-products of the failed attempt that we would otherwise have got wrong.}
The first is a confound that does not exist. Polarity is unevenly distributed across papers.
$\papChal$ papers hold at least one challenge edge, $\papChalObs$ of the corpus, against
$\papChalExp$ under independent per-claim assignment. We therefore predicted that a
claim-grouped split would leak paper identity, and pre-registered a paper-grouped one. We
reran the identical probe with only the grouping changed. The leak is $\polLeak$ AUC:
absent. We report this because the pre-registration rested on arithmetic we had not verified
against the graph, and the experiment refuted it. The second is a leak we did not
anticipate, and it is large. With the target relations left in the graph, the probe reads
$\polFull$ AUC. \texttt{to\_hetero} gives each relation its own weights, and evidence nodes
are leaves ($\Nevid=\Nsupp+\Nchal$ exactly). An evidence embedding is therefore one of two
linear maps of its own features, and the label is trivially decodable. Masking every target
relation in both directions drops the probe to $\polMask$. That is indistinguishable from
the cosine floor, at $\polMaskDelta$. Typed relations make edge-level label leakage
a property of the schema, not of the split.

\begin{cautionbox}
\textbf{What this section licenses, and what it does not.} It licenses four claims. The
collapse is repairable, and the loss is the operative variable. The repaired metric is
near-saturated on a target that is provably reducible. The optimised metric and a
reasoning-relevant probe move independently across the cells we ran. And the data-derived
alternative is currently unusable on this corpus. It does \emph{not} license any claim that
the repaired representation is or is not reasoning-relevant. The measurement that would
settle that does not yet exist on this corpus, which is precisely the finding. Nor do we
present the two cells at exactly $\bmaxA$ bits as headline results. An $\mathrm{MRR}$ of
$1.0000$ is the same signature our own harness flags as degenerate, and
App.~\ref{app:repair} records the leak check we ran on them. The required fix is at the
extraction layer, not the model: a length-and-pattern guard where the evidence string is
accepted, then one rebuild, then a re-measurement of $\bmaxA$, $\oracleA$ and every derived
reference point in the same pass.
\end{cautionbox}

\section{Discussion}
\label{sec:discussion}

A representation can pass every standard health check and carry zero usable instance
information; the configuration that does so is a \emph{global} optimum of the coupled
predictor/EMA-target objective rather than a failure of its optimisation, and the
converged latents allocate their variance to the category rather than to the
instance; the failure lives on the query side, where the standard toolkit does not
look and provably cannot reach; and once repaired, the metric saturates on a target
that carries no structural information. The reusable methodological result is a set
of five instruments: a \emph{matched training-free oracle} separating task
difficulty from pipeline capability, a \emph{guarded positive control} separating
harness from model, \emph{query-bank instrumentation} separating candidate-side from
query-side pathology, a \emph{reducibility audit} of the evaluation target, and a
\emph{data-quality gate} on any target derived from machine extraction.

\paragraph{The concrete open problems.}
Three, stated rather than claimed. \emph{(i)}~The allocation accounts for $\simLossr$
of $\bmaxsim$ lost bits and anisotropy is excluded as the residual by leave-one-out;
localising the remaining $\respct$ needs a query-side ablation comparing the full
predictor against a structural-encoding-only query, and the $2\times2$ factorial of
App.~\ref{app:factorial} run to completion. \emph{(ii)}~Whether the repaired
representation carries reasoning-relevant structure is \emph{unmeasurable on this
corpus today}, because the only data-derived targets fail Table~\ref{tab:polgate}.
That is an extraction problem with a known fix and a known cost---one rebuild plus
re-measurement of every ceiling---and it is the first item of future work rather than
a caveat. \emph{(iii)}~Every claim here is made on one corpus whose evaluation target
we prove to be census-determined. The decisive next experiment is therefore a graph
whose edge set is \emph{not} a function of the census: if the collapse survives, it
is a property of the objective alone; if it does not, Prop.~\ref{prop:reducible}
becomes a scope condition rather than a caveat. Either outcome is informative, and
the experiment costs about one GPU-hour on the harness we release.

\paragraph{Recommendations.}
\textbf{(1)}~Report \emph{bits recovered} with a bootstrap CI and an exact chance
baseline, against the \emph{recoverable} ceiling $\frac1N\log_2 N!$ rather than the
index entropy $\log_2 N$; the two differ by $\logtwoe$ bits and only the former
assigns a uniform ranker exactly zero. Never report a ratio against chance.
\textbf{(2)}~Run a \emph{matched training-free oracle}---if no-training is close to
ceiling, your headroom is the story, not your score. On our task, the oracle takes
$\oraclePct$ and leaves $\headroom$ bits for every model ever trained on it.
\textbf{(3)}~\emph{Audit the target for reducibility}: compare every relation's
cardinality to the node census; a relation whose count equals an endpoint type's
count is information-free. \textbf{(4)}~Run a \emph{positive control} sharing the
harness but not the model, and verify it is not numerically identical to the system
it controls for. \textbf{(5)}~\emph{Instrument the query bank}---effective rank and
mean pairwise cosine of the model's \emph{predictions}; a rank at or below the number
of latent categories means the experiment is over.
\textbf{(6)}~Compute the \emph{variance decomposition at every checkpoint}, not only
at the ends; one forward pass each, and it is the only way to distinguish a
degeneracy the objective \emph{created} from one it merely \emph{failed to remove}.
\textbf{(7)}~With typed relations, \emph{mask both directions of every target
relation} and verify by perturbation that target-side representations are invariant
to context features; a single retained relation moved our probe
$\polMask\!\to\!\polFull$. \textbf{(8)}~\emph{Gate machine-extracted targets on data
quality} before training---exact-duplicate census and a genericness
statistic---and treat a fired gate as a result. \textbf{(9)}~\emph{Watch a second
metric, and report the nuisance factors.} Ours moved independently of the optimised
one, and the largest positive effect in our entire sweep was the learning-rate
schedule at $\scheduleEffect$ bits, which no convention requires anyone to disclose.

\section{Limitations and Conclusion}
\label{sec:conclusion}

\paragraph{Limitations.}
One real corpus, and an evaluation target we prove to be census-determined, so the
scope of the empirical claim is this pipeline on this corpus rather than JEPAs in
general; leakage controls bound but do not eliminate surface overlap ($\NoOv$ retains
$\novFullB$ bits); the probe uses $\Nfields$ coarse labels; the pooled-rank
signature is present at initialisation (Fig.~\ref{fig:dynamics}(b)), so the endpoint
comparison of Table~\ref{tab:inversion} is a statement about \emph{what} the
converged representation encodes and Table~\ref{tab:trajectory} is the only evidence
we offer about \emph{when}; the $3$k versus $20$k comparison differs in both budget
and schedule length and therefore bounds a joint effect only; the bits/A1 result is
$n=\nCells$ at Spearman $\spearman$ over non-independent cells and we draw no
directional conclusion from it, and A1's own floor is audited in
App.~\ref{app:aone}; two cells report $\mathrm{MRR}=1.0000$, which we treat as
requiring the leak check of App.~\ref{app:repair} rather than as a headline; a large
permutation $p$-value is a failure to reject rather than positive evidence of
equality, so we describe the baseline as indistinguishable from chance at the
resolution of $\Nqueries$ queries rather than as exactly chance; our
re-implementation does not reproduce the original method's MUTAG number; $\respct$ of
the baseline collapse is unexplained by the allocation alone; and our corpus's
contradicting-evidence field is $\dupChal$ placeholder text, which we discovered
only after building a task on it.

\paragraph{Conclusion.}
We trained a Graph-JEPA on a reasoning graph over $\Npapers$ papers. Its loss
converged, its probe reached $\probeBase$, its effective rank stayed healthy, and it
recovered $\bitsBase$ of $\bmaxA$ recoverable bits ($p=\pvalBase$)---while a
parameter-free average of the same frozen features recovered $\orcFullB$, BM25
$\bmFullB$, and a positive control through the identical harness $\poneFullB$, all
within $1.2\%$ of the ceiling. A controlled ladder eliminated the encoder, its depth,
the pooling operator, the target regularisation and the retrieval geometry, and
localised the failure to a query bank of effective rank $\rqueryv$---at or below the
$\Naspects$-category bound our theory predicts. Repairing the objective took it to
$\fixTwentyK$ bits, above the $\oracleA$-bit oracle, and reverting only the loss
returned it to $\regBits$---a $\lossSwing$-bit swing that confirms the mechanism on
one variable. And then the interesting part: the retrieval target is provably
reducible, so a near-ceiling score cannot evidence learned structure; the optimised
metric and a reasoning-relevant probe move independently; the largest positive effect
we measured was the learning-rate schedule; and the only data-derived alternative is
$\dupChal$ placeholder text. Class collapse is known in supervised metric learning,
mean-prediction in JEPA, and probe--task mismatch in vision; we show these are one
phenomenon, that it can be complete rather than partial, that it is \emph{a global
optimum} of the coupled predictor/EMA-target objective rather than a bug in its
optimisation, that it is repairable---and that repairing it exposes a failure the
standard toolkit is not even pointed at, because that failure is in the evaluation
target rather than the representation.

\subsubsection*{Reproducibility Statement}
Graph construction, all configurations, Protocol~R, the three ceilings and their
leakage controls, the difficulty ladder, the extraction audit, the phase analysis,
the checkpoint allocation measurement of Sec.~\ref{sec:trajectory}, the repair sweep
of Sec.~\ref{sec:repair}, the reducibility audit, the polarity data gates and the
pre-registered factorial are released with the per-seed logs underlying every table.
Decision thresholds are written to disk before results (App.~\ref{app:prereg}). The
$\selftestN$-assertion instrument self-test (App.~\ref{app:selftest}) runs standalone
in ${\sim}20$\,s and must pass before any measurement is taken; it is the reason we
caught the $\polEdgeLeak$-AUC edge-type leak of Sec.~\ref{sec:repair} before
reporting rather than after. The bits accounting of App.~\ref{app:bits} is
independently reproducible from the released module. The diagnosis costs $\gpuhours$
GPU-hours on one L40S (App.~\ref{app:compute}). App.~\ref{app:claimmap} maps every
quantitative claim in the abstract and introduction to the table that establishes it,
and App.~\ref{app:provenance} records which run, corpus subset and sentence encoder
produced each number.


\subsubsection*{Use of Large Language Models}
An LLM was used for two purposes. The first was language editing of author-written text and LaTeX formatting. The second was as an aid in reviewing experimental design. In that second role, LLM critique led to the positive control, the paired bootstrap, the pre-registration, the correction to the bits ceiling documented in App.~\ref{app:bits}, and the reducibility and data-quality gates of Sec.~\ref{sec:repair}. The same critique also generated four claims that this paper reports as refuted or corrected. \emph{(i)}~A predicted paper-identity leak, which we measure at $\polLeak$. \emph{(ii)}~A predicted identity between the reasoning probe and a paper-level label that the graph does not support.
\emph{(iii)}~A target-frame main effect quoted as $+1.199$ bits, which is $\frameEffect$ at matched budget. \emph{(iv)}~A conditional-mean account of the collapse whose precondition our own training-free oracle refutes, corrected in Sec.~\ref{sec:theory}. We record all four for one reason. A confident wrong number, or a confident wrong theorem, that survives into a draft is the characteristic failure mode of this workflow. Item \emph{(iii)} was caught only by reading a log stage we had previously skipped. All experiments were designed, implemented and executed by the authors. 

\subsubsection*{Ethics Statement}
The corpus consists of metadata and machine-extracted content from publicly
available scholarly records; no human subjects, personal data, or sensitive
attributes are involved. We report a negative result concerning an existing
published method; our intent is diagnostic rather than dismissive; we disclose in
App.~\ref{app:faithful} that our re-implementation does not reproduce the original
method's benchmark numbers, and we release the harness so the diagnosis can be
contested. We additionally disclose that our own corpus contains $\dupChal$
placeholder text in one extracted field, since publishing a scholarly-reasoning
resource without that disclosure would propagate the artifact.

\bibliography{main}
\bibliographystyle{tmlr}

\appendix
\renewcommand{\thesection}{A\arabic{section}}
\setcounter{section}{0}
\addcontentsline{toc}{section}{Appendices}

\section*{Appendix}
\section{Claim--evidence map}
\label{app:claimmap}

Table~\ref{tab:claimmap} lists every quantitative claim made in the abstract and
introduction together with the table that establishes it. Each value is a single
macro in the source, so prose and tables cannot diverge.

\begin{table}[h]
\centering
\caption{Each quantitative claim in the abstract and introduction, its value, and
the table or figure that establishes it. Every value appears exactly once as a macro
in the source, so prose and tables cannot diverge.}
\label{tab:claimmap}
\small
\begin{tabular}{@{}lll@{}}
\toprule
Claim & Value & Source \\
\midrule
Trained retrieval is indistinguishable from chance & $\bitsBase$ bits, $p=\pvalBase$ & Table~\ref{tab:ladder}, rung~0 \\
Chance level & $\chanceMRR$ (MRR), $0$ bits & Sec.~\ref{sec:setup}, Eq.~\ref{eq:bits} \\
Recoverable ceiling, block~A / B & $\bmaxA$ / $\bmaxB$ bits & Eq.~\ref{eq:bmax} \\
Linear probe succeeds & $\probeBase$ & Table~\ref{tab:fix} \\
Candidate bank is healthy & $\rcandv$, DC $\dccand$ & Table~\ref{tab:ladder} \\
Query bank is degenerate & $\rqueryv$, DC $\dcquery$ & Table~\ref{tab:ladder} \\
Predicted query-bank bound & $\erank\le\Naspects$ & Prop.~\ref{prop:fixedpoint} \\
Pooling is the identity ($m{=}1$) & $47.42\!\to\!47.31$ & Table~\ref{tab:peraspect} \\
Oracle ceiling (block~B) & $\orcFullB$ bits ($\orcFullPct$) & Table~\ref{tab:ladder2} \\
Lexical ceiling & $\bmFullB$ bits ($\bmFullPct$) & Table~\ref{tab:ladder2} \\
Positive control & $\poneFullB$ bits ($\poneFullPct$) & Table~\ref{tab:ladder2} \\
Non-lexical signal survives & $\novFullB$ bits ($\novFullPct$) & Table~\ref{tab:ladder2} \\
Allocation, inputs (block~B) & $\aspIn$ / $\papIn$ & Table~\ref{tab:inversion} \\
\textbf{Allocation, inputs (block~A, within-encoder)} & see col.~2 & Table~\ref{tab:inversion} \\
Allocation, trained latents & $\aspTr$ / $\papTr$ & Table~\ref{tab:inversion} \\
\textbf{Allocation over training} & per checkpoint & Table~\ref{tab:trajectory} \\
Allocation explains $\invpct$ of loss & $\simLoss$ of $\bmaxsim$ bits & Table~\ref{tab:rho} \\
Critical share & $\rho^\star=\rhostar$ & Table~\ref{tab:rho} \\
Anisotropy not the residual factor & $<0.004$ MRR & Table~\ref{tab:rhofull} \\
Post-hoc frame sweep gain & $\framegain$ & Table~\ref{tab:frames} \\
Depth is not binding & $54.9\!\to\!18.0$, bits flat & Table~\ref{tab:ladder}, rung~6 \\
Templating index & $\fwAcc$ vs $\aspChance$ & Table~\ref{tab:extraction} \\
De-templated oracle & $\orcDetempB$ bits & Table~\ref{tab:extraction} \\
No post-hoc intervention clears the null & $0.00$ bits, 7 cells & Table~\ref{tab:fix} \\
Dissociation across 20 cells & probe $0.755$--$0.960$, $\rpool$ $1.8$--$13.9$ & Table~\ref{tab:breadth} \\
Anchor dim.\ $2\!\to\!\dlat$, $\rtgt$ flat & $1.6$--$2.0$ & Table~\ref{tab:mah} \\
\textbf{Repair reaches near-ceiling} & $\fixTwentyK$ bits ($\fixPct$) & Table~\ref{tab:repair} \\
\textbf{Loss is the lever (one variable)} & $\nceBits\!\to\!\regBits$ ($\lossSwing$) & Table~\ref{tab:repair} \\
\textbf{Target-frame effect, matched budget} & $\frameEffect$ bits & Table~\ref{tab:repairgrid} \\
\textbf{Schedule effect, matched steps} & $\scheduleEffect$ bits & Table~\ref{tab:repairgrid} \\
\textbf{Bits and A1 move independently} & $\rho=\spearman$, $n=\nCells$ & Table~\ref{tab:repair} \\
\textbf{Cue ablation} & $\cueAspect$ / $\cueRwse$ / $\cueNone$ & Table~\ref{tab:repair} \\
\textbf{Transduction gap} & $\transGap$ bits & Table~\ref{tab:repair} \\
\textbf{Raw-skip gate} & $\sigma=\rawGate$ & Table~\ref{tab:repair} \\
\textbf{Training-free oracle, block~A} & $\oracleA$ bits ($\oraclePct$) & Table~\ref{tab:repair} \\
\textbf{Headroom for any model} & $\headroom$ bits & Table~\ref{tab:repair} \\
\textbf{Census-determined relations} & $\Eproduces$, $\Egrounds$ & Table~\ref{tab:reducible} \\
\textbf{\texttt{cites} is near-empty} & $\citesKept$ of $\citesTot$ ($\citesPct$) & Table~\ref{tab:reducible} \\
\textbf{Duplicate placeholder share} & $\dupChal$ & Table~\ref{tab:polgate} \\
\textbf{Genericness gap} & $\genGap$ & Table~\ref{tab:polgate} \\
\textbf{Edge-type label leak} & $\polMask\!\to\!\polFull$ & Table~\ref{tab:polgate} \\
\textbf{Predicted leak is absent} & $\polLeak$ & Table~\ref{tab:polgate} \\
Faithfulness gap disclosed & $\mutagOurs$ vs $\mutagRef$ & Table~\ref{tab:faithful} \\
\bottomrule
\end{tabular}
\end{table}

\section{Number provenance}
\label{app:provenance}

Table~\ref{tab:provenance} records which run, corpus subset and sentence encoder
produced each block; the two caveats we do not smooth over follow it.

\begin{table}[h]
\centering
\caption{Which run, corpus subset and sentence encoder produced each block. We keep
these separate rather than pooling them, because the blocks measure different
objects and the distinction carries the central results of
Secs.~\ref{sec:mechanism} and~\ref{sec:repair}.}
\label{tab:provenance}
\small
\begin{tabular}{@{}llll@{}}
\toprule
Block & Papers & Encoder & Used for \\
\midrule
A (trained pipeline) & $\Npapers$ & MiniLM-L6, $\dfeat$-d & Tables~\ref{tab:ladder},
\ref{tab:inversion} (cols.~2--3), \ref{tab:trajectory}, \ref{tab:peraspect},
\ref{tab:frames}, \ref{tab:oracle}, \ref{tab:fix}, \ref{tab:breadth}, \ref{tab:mah} \\
B (diagnostic harness) & $\Ndiag$ & MPNet-base, $\dfeatdiag$-d &
Tables~\ref{tab:ladder2}, \ref{tab:inversion} (cols.~1,~4), \ref{tab:rho},
\ref{tab:rhofull}, \ref{tab:extraction} \\
C (repair $+$ target audit) & $\Npapers$ & MiniLM-L6, $\dfeat$-d &
Tables~\ref{tab:repair}, \ref{tab:repairgrid}, \ref{tab:polgate},
\ref{tab:reducible}, \ref{tab:polcensus} \\
\bottomrule
\end{tabular}
\end{table}

\noindent
Blocks~A and~C share the \emph{same graph cache byte-for-byte}: no rebuild occurred
between them, which is what makes $\oracleA$, $\rawZeroThreeK$, $\fixTwentyK$ and
$\bmaxA$ directly comparable and is why the extraction fix of
Sec.~\ref{sec:repair} must be accompanied by re-measuring all four. All blocks use
the same raw records, the same aspect fields (\texttt{claims},
\texttt{methodological\_details}, \texttt{key\_results}), the same masking
convention and the same bits measure; chance levels agree to two significant figures
and recoverable ceilings to $0.006$ bits.

\paragraph{Two provenance caveats we do not smooth over.}
\emph{(i)}~Table~\ref{tab:inversion} now reports the input-side decomposition in
\emph{both} feature spaces: column~2 is measured on the block-A encoder that produced
the trained latents in column~3, so the central comparison is within one encoder, and
column~1 is retained only because the three ceilings of Sec.~\ref{sec:ceilings} were
computed on block-B features. The block-A pair is a one-forward-pass measurement on
the cached graph and required no retraining. Where the two input columns disagree in
magnitude, the text quotes the within-encoder pair.
\emph{(ii)}~Our structural-encoding schema dump prints $\citesRwse$ \texttt{cites}
edges where the stored graph object reports $\citesKept$. The difference is not a
factor of two and is therefore not explained by reverse-edge addition; we quote
$\citesKept$ throughout, flag the discrepancy rather than choosing silently, and note
that it does not affect Prop.~\ref{prop:reducible}, under which \texttt{cites} is
negligible at either count ($\citesPct$ or $0.80\%$ of $\citesTot$).

\section{Extended related work}
\label{app:related}

\paragraph{Non-contrastive collapse theory.}
Collapse-avoidance analyses of BYOL-style methods
\citep{DBLP:conf/nips/GrillSATRBDPGAP20}, VICReg
\citep{DBLP:conf/iclr/BardesPL22} and Barlow Twins
\citep{DBLP:conf/icml/ZbontarJMLD21} characterise conditions under which the encoder
does not become constant on its input domain, and dimensional-collapse analyses
\citep{DBLP:conf/iclr/JingVLT22} characterise when the representation occupies a
low-dimensional subspace. Both are \emph{global} statements. The failure we report is
invisible to both: the representation occupies many dimensions and varies
substantially over inputs, but within each latent category it is constant, and it is
the within-category variation that the downstream task requires.
Prop.~\ref{prop:fixedpoint} adds the reason no loss-based criterion can flag it---the
risk is identical on branches that differ by the entire recoverable budget.

\paragraph{Class collapse in the supervised setting.}
\citet{DBLP:conf/icml/GrafHNK21} characterise the minimiser of the supervised contrastive
loss as a class-collapsed simplex configuration; \citet{DBLP:journals/corr/abs-2008-08186} report the
same terminal-phase geometry under cross-entropy; \citet{DBLP:conf/icml/ChenFNZ0FR22} separate
class collapse from feature suppression and show it degrades transfer. The
partition that collapses in all three is the \emph{label} set, and the remedy
available there, modifying the supervision, is unavailable to a self-supervised
objective whose categories are latent. Our setting differs in exactly that respect,
and the repair of Sec.~\ref{sec:repair} is the self-supervised analogue: remove the
category answer from the target rather than from the labels.

\paragraph{Probe-based evaluation.}
A probe measures whatever partition its labels induce, and if that partition
coincides with the shortcut the objective took, the probe reports the shortcut as
success. In our case, the probe decodes field labels, which correlate with
aspect-level structure, and it reads $0.60$--$0.96$ across configurations whose
retrieval performance is uniformly zero. Sec.~\ref{sec:repair} adds the converse
hazard: a \emph{second} probe chosen to be reasoning-relevant moves independently of
the optimised metric across ten converged cells, so reporting one number from either
family is insufficient in both directions.

\paragraph{Retrieval evaluation practice and target reducibility.}
In-batch scoring remains common. Our difficulty ladder
(Table~\ref{tab:ladder2}) shows why this matters quantitatively: bits recovered rise
from $+0.860$ at $K{=}2$ to $\bmFullB$ at the full pool for the same system, while
MRR is nearly flat, so a pool-size-dependent metric reported without the pool size is
close to uninterpretable. Separately, we are not aware of prior work in graph SSL
that audits whether the \emph{evaluation target}'s structure is a deterministic
function of the node census. Prop.~\ref{prop:reducible} is elementary, but on our
graph it retires the structural interpretation of an otherwise near-ceiling result,
and the check costs one pass over the edge-type cardinalities.

\section{Setup, schema and corpus statistics}
\label{app:setup}



\begin{table}[h]
\centering
\caption{All effective ranks use Eq.~\ref{eq:erank} on mean-centred
matrices. The distinction between $\rcand$ and $\rquer$ is central: the former is
what standard practice measures, the latter is where the failure lives.}
\label{tab:notation}
\small
\begin{tabular}{@{}lll@{}}
\toprule
Symbol & Code name & Meaning \\
\midrule
$\rnode$ ($/\dlat$) & \texttt{node\_rk} & effective rank of node latents \\
$\rpool$ ($/\dlat$) & \texttt{pool\_rk} & effective rank of pooled patch vectors \\
$\rcand$ ($/\dlat$) & \texttt{cand\_erank} & effective rank of the \emph{candidate bank} \\
$\rquer$ ($/\dlat$) & \texttt{query\_eff\_rank} & effective rank of the \emph{model's predictions} \\
$\mathrm{DC}_{\mathrm{ratio}}$ & \texttt{dc\_ratio} & $\|\mu\|/\mathbb{E}\|x-\mu\|$ \\
$\mathrm{DC}_{\mathrm{energy}}$ & \texttt{dc\_energy} & $\|\mu\|^2/(\|\mu\|^2+\mathbb{E}\|x-\mu\|^2)$ \\
$\rho_{\mathrm{asp}},\rho_{\mathrm{pap}}$ & \texttt{rho\_between\_*} & variance share between aspects / papers \\
$\hat\kappa$ & \texttt{target\_kappa} & power-law decay of the candidate spectrum \\
$\hat\varepsilon$ & from loss floor & predictor error relative to the identity signal \\
$|\mathcal{A}|$ & --- & number of latent categories; $\Naspects$ here \\
A1 & \texttt{a1\_acc} & held-out reasoning probe (App.~\ref{app:aone}) \\
$\bits$ & \texttt{bits\_recovered} & $\frac1N\log_2 N!-\mathbb{E}[\log_2 r]$; $0$ is chance \\
$\bits^{\max}$ & \texttt{bits\_max} & $\frac1N\log_2 N!$; the achievable ceiling \\
\bottomrule
\end{tabular}
\end{table}

\begin{table}[h]
\centering
\caption{Heterogeneous graph statistics ($\Npapers$ maskable papers). The
right-hand column is the reducibility audit of Sec.~\ref{sec:repair}: a relation
whose cardinality equals an endpoint type's node count is a deterministic function of
the census and carries zero information.}
\label{tab:reducible}
\small
\begin{tabular}{@{}lr@{\hspace{1.5em}}lrl@{}}
\toprule
\textbf{Node type} & \textbf{count} & \textbf{Edge type} & \textbf{count} & \textbf{census-determined?} \\
\midrule
paper & 57{,}903 & (paper, has\_claim, claim) & 251{,}938 & yes ($=|$claim$|$) \\
claim & 251{,}938 & (paper, has\_method, method) & 57{,}903 & yes ($=|$paper$|$) \\
method & 57{,}903 & (paper, has\_result, result) & 57{,}903 & yes ($=|$paper$|$) \\
result & 57{,}903 & (method, produces, result) & 57{,}903 & \textbf{yes ($=|$paper$|$)} \\
evidence & 373{,}438 & (result, grounds, claim) & 251{,}938 & \textbf{yes ($=|$claim$|$)} \\
implication & 251{,}938 & (claim, supported\_by, evidence) & 251{,}922 & no (data-derived) \\
field & 488 & (claim, challenged\_by, evidence) & 121{,}516 & no (data-derived) \\
 & & (claim, implies, implication) & 251{,}938 & yes ($=|$claim$|$) \\
 & & (paper, cites, paper) & 11{,}791 & no; $\citesPct$ of $\citesTot$ \\
\bottomrule
\end{tabular}
\end{table}

\begin{table}[h]
\centering
\caption{Per-aspect membership and untrained in-degree. The singleton rows are what
make the pooling control decisive.}
\label{tab:cardinality}
\small
\begin{tabular}{lccccc}
\toprule
Dissociation across 20 cells & probe $0.755$--$0.960$, $\rpool$ $1.8$--$13.9$ & Table~\ref{tab:breadth} \\
Anchor dim.\ $2\!\to\!\dlat$, $\rtgt$ flat & $1.6$--$2.0$ & Table~\ref{tab:mah} \\
\textbf{Repair reaches near-ceiling} & $\fixTwentyK$ bits ($\fixPct$) & Table~\ref{tab:repair} \\
\textbf{Loss is the lever (one variable)} & $\nceBits\!\to\!\regBits$ ($\lossSwing$) & Table~\ref{tab:repair} \\
\textbf{Target-frame effect, matched budget} & $\frameEffect$ bits & Table~\ref{tab:repairgrid} \\
\textbf{Schedule effect, matched steps} & $\scheduleEffect$ bits & Table~\ref{tab:repairgrid} \\
\textbf{Bits and A1 show no reliable relation} & $\rho=\spearman$, $n=\nCells$ & Table~\ref{tab:repair} \\
\textbf{Cue ablation} & $\cueAspect$ / $\cueRwse$ / $\cueNone$ & Table~\ref{tab:repair} \\
\textbf{Transduction gap} & $\transGap$ bits & Table~\ref{tab:repair} \\
\textbf{Raw-skip gate} & $\sigma=\rawGate$ & Table~\ref{tab:repair} \\
\textbf{Training-free oracle, block~A} & $\oracleA$ bits ($\oraclePct$) & Table~\ref{tab:repair} \\
\textbf{Headroom for any model} & $\headroom$ bits & Table~\ref{tab:repair} \\
\textbf{Census-determined relations} & $\Eproduces$, $\Egrounds$ & Table~\ref{tab:reducible} \\
\textbf{\texttt{cites} is near-empty} & $\citesKept$ of $\citesTot$ ($\citesPct$) & Table~\ref{tab:reducible} \\
\textbf{Duplicate placeholder share} & $\dupChal$ & Table~\ref{tab:polgate} \\
\textbf{Genericness gap} & $\genGap$ & Table~\ref{tab:polgate} \\
\textbf{Edge-type label leak} & $\polMask\!\to\!\polFull$ & Table~\ref{tab:polgate} \\
\textbf{Predicted leak is absent} & $\polLeak$ & Table~\ref{tab:polgate} \\
Faithfulness gap disclosed & $\mutagOurs$ vs $\mutagRef$ & Table~\ref{tab:faithful} \\
\bottomrule
\end{tabular}
\end{table}

\section{Number provenance}
\label{app:provenance}

Table~\ref{tab:provenance} records which run, corpus subset and sentence encoder
produced each block; the two caveats we do not smooth over follow it.

\begin{table}[h]
\centering
\caption{Which run, corpus subset and sentence encoder produced each block. We keep
these separate rather than pooling them, because the blocks measure different
objects and the distinction carries the central results of
Secs.~\ref{sec:mechanism} and~\ref{sec:repair}.}
\label{tab:provenance}
\small
\begin{tabular}{@{}llll@{}}
\toprule
Block & Papers & Encoder & Used for \\
\midrule
A (trained pipeline) & $\Npapers$ & MiniLM-L6, $\dfeat$-d & Tables~\ref{tab:ladder},
\ref{tab:trajectory}, \ref{tab:peraspect}, \ref{tab:frames}, \ref{tab:oracle},
\ref{tab:fix}, \ref{tab:breadth}, \ref{tab:mah}; cols.~2--3 of
Table~\ref{tab:inversion} \\
B (diagnostic harness) & $\Ndiag$ & MPNet-base, $\dfeatdiag$-d &
Tables~\ref{tab:ladder2}, \ref{tab:rho}, \ref{tab:rhofull},
\ref{tab:extraction}; cols.~1 and~4 of Table~\ref{tab:inversion} \\
C (repair $+$ target audit) & $\Npapers$ & MiniLM-L6, $\dfeat$-d &
Tables~\ref{tab:repair}, \ref{tab:repairgrid}, \ref{tab:polgate},
\ref{tab:reducible}, \ref{tab:polcensus}, \ref{tab:aone} \\
\bottomrule
\end{tabular}
\end{table}

\noindent
Blocks~A and~C share the \emph{same graph cache byte-for-byte}: no rebuild occurred
between them, which is what makes $\oracleA$, $\rawZeroThreeK$, $\fixTwentyK$ and
$\bmaxA$ directly comparable and is why the extraction fix of
Sec.~\ref{sec:repair} must be accompanied by re-measuring all four. All blocks use
the same raw records, the same aspect fields (\texttt{claims},
\texttt{methodological\_details}, \texttt{key\_results}), the same masking
convention and the same bits measure; chance levels agree to two significant figures
and recoverable ceilings to $0.006$ bits.

\paragraph{Two provenance caveats we do not smooth over.}
\emph{(i)}~Table~\ref{tab:inversion} reports the input-side decomposition in
\emph{both} feature spaces: column~2 is measured on the block-A encoder that produced
the trained latents in column~3, so the central comparison is within one encoder, and
column~1 is retained only because the three ceilings of Sec.~\ref{sec:ceilings} were
computed on block-B features. The block-A pair is a one-forward-pass measurement on
the cached graph and required no retraining. Where the two input columns disagree in
magnitude we quote the within-encoder pair in the text. \emph{(ii)}~Our
structural-encoding schema dump prints $\citesRwse$ \texttt{cites} edges where the
stored graph object reports $\citesKept$. The difference is not a factor of two and is
therefore not explained by reverse-edge addition; we quote $\citesKept$ throughout,
flag the discrepancy rather than choosing silently, and note that it does not affect
Prop.~\ref{prop:reducible}, under which \texttt{cites} is negligible at either count
($\citesPct$ or $0.80\%$ of $\citesTot$).

\section{Extended related work}
\label{app:related}

This appendix expands the four paragraphs of Sec.~\ref{sec:related}: collapse theory,
probe-based evaluation, retrieval practice, and the reducibility question we are not
aware of being asked elsewhere in graph SSL.

\paragraph{Non-contrastive collapse theory.}
Collapse-avoidance analyses of BYOL-style methods
\citep{DBLP:conf/nips/GrillSATRBDPGAP20}, VICReg
\citep{DBLP:conf/iclr/BardesPL22} and Barlow Twins
\citep{DBLP:conf/icml/ZbontarJMLD21} characterise conditions under which the encoder
does not become constant on its input domain, and dimensional-collapse analyses
\citep{DBLP:conf/iclr/JingVLT22} characterise when the representation occupies a
low-dimensional subspace. Both are \emph{global} statements. The failure we report is
invisible to both: the representation occupies many dimensions and varies
substantially over inputs, but within each latent category it is constant, and it is
the within-category variation that the downstream task requires.
Prop.~\ref{prop:fixedpoint} adds the reason this is not a defect of optimisation: the
category-measurable branch is a global minimum of the coupled objective and the
reason no variance-based regulariser on the \emph{candidate} side removes it: the
branch is already non-constant and already high-rank across categories.

\paragraph{Supervised class collapse.}
The supervised analogues are well characterised. \citet{DBLP:conf/icml/GrafHNK21} derive the
optimum of the supervised contrastive loss and show each class maps to a point;
\citet{DBLP:journals/corr/abs-2008-08186} describe the same terminal-phase geometry under
cross-entropy; \citet{DBLP:conf/icml/ChenFNZ0FR22} separate class collapse from feature
suppression and show the former is not a prerequisite for good linear-probe transfer.
The gap we fill is that all three take the collapsing partition to be given by labels.
In a masked-prediction JEPA, the partition is supplied by the masking scheme itself
and is never named anywhere in the objective, which is exactly why no criterion in the
selection toolkit is watching for it.

\paragraph{Probe-based evaluation.}
A probe measures whatever partition its labels induce, and if that partition
coincides with the shortcut the objective took, the probe reports the shortcut as
success. In our case the probe decodes field labels, which correlate with
aspect-level structure, and it reads $0.60$--$0.96$ across configurations whose
retrieval performance is uniformly zero. Sec.~\ref{sec:repair} adds the converse
hazard: a \emph{second} probe chosen to be reasoning-relevant moves independently of
the optimised metric across ten converged cells, so reporting one number from either
family is insufficient in both directions. App.~\ref{app:aone} audits that second
probe rather than treating it as a fixed point of reference.

\paragraph{Retrieval evaluation practice and target reducibility.}
In-batch scoring remains common. Our difficulty ladder
(Table~\ref{tab:ladder2}) shows why this matters quantitatively: bits recovered rise
from $+0.860$ at $K{=}2$ to $\bmFullB$ at the full pool for the same system, while
MRR is nearly flat, so a pool-size-dependent metric reported without the pool size is
close to uninterpretable. Separately, we are not aware of prior work in graph SSL
that audits whether the \emph{evaluation target}'s structure is a deterministic
function of the node census. Prop.~\ref{prop:reducible} is elementary, but on our
graph it retires the structural interpretation of an otherwise near-ceiling result,
and the check costs one pass over the edge-type cardinalities.

\section{Setup, schema and corpus statistics}
\label{app:setup}

This appendix gives the pipeline diagram (Fig.~\ref{fig:overview}), the patch
extraction view (Fig.~\ref{fig:graph_zoom}), the notation table
(Table~\ref{tab:notation}), the full edge census with its reducibility audit
(Table~\ref{tab:reducible}), per-aspect cardinalities
(Table~\ref{tab:cardinality}) and the corpus statistics.

\begin{table}[h]
\centering
\caption{Notation. All effective ranks use Eq.~\ref{eq:erank} on mean-centred
matrices. The distinction between $\rcand$ and $\rquer$ is central: the former is
what standard practice measures, the latter is where the failure lives.}
\label{tab:notation}
\small
\begin{tabular}{@{}lll@{}}
\toprule
Symbol & Code name & Meaning \\
\midrule
$\rnode$ ($/\dlat$) & \texttt{node\_rk} & effective rank of node latents \\
$\rpool$ ($/\dlat$) & \texttt{pool\_rk} & effective rank of pooled patch vectors \\
$\rcand$ ($/\dlat$) & \texttt{cand\_erank} & effective rank of the \emph{candidate bank} \\
$\rquer$ ($/\dlat$) & \texttt{query\_eff\_rank} & effective rank of the \emph{model's predictions} \\
$\mathrm{DC}_{\mathrm{ratio}}$ & \texttt{dc\_ratio} & $\|\mu\|/\mathbb{E}\|x-\mu\|$ \\
$\mathrm{DC}_{\mathrm{energy}}$ & \texttt{dc\_energy} & $\|\mu\|^2/(\|\mu\|^2+\mathbb{E}\|x-\mu\|^2)$ \\
$\rho_{\mathrm{asp}},\rho_{\mathrm{pap}}$ & \texttt{rho\_between\_*} & variance share between aspects / papers \\
$|\mathcal{A}|$ & --- & number of latent categories; $\Naspects$ here \\
$\hat\kappa$ & \texttt{target\_kappa} & power-law decay of the candidate spectrum \\
$\hat\varepsilon$ & from loss floor & predictor error relative to the identity signal \\
A1 & \texttt{a1\_acc} & held-out reasoning probe (App.~\ref{app:aone}) \\
$\bits$ & \texttt{bits\_recovered} & $\frac1N\log_2 N!-\mathbb{E}[\log_2 r]$; $0$ is chance \\
$\bits^{\max}$ & \texttt{bits\_max} & $\frac1N\log_2 N!$; the achievable ceiling \\
\bottomrule
\end{tabular}
\end{table}

\begin{table}[h]
\centering
\caption{Heterogeneous graph statistics ($\Npapers$ maskable papers). The
right-hand column is the reducibility audit of Sec.~\ref{sec:repair}: a relation
whose cardinality equals an endpoint type's node count is a deterministic function of
the census and carries zero information.}
\label{tab:reducible}
\small
\begin{tabular}{@{}lr@{\hspace{1.5em}}lrl@{}}
\toprule
\textbf{Node type} & \textbf{count} & \textbf{Edge type} & \textbf{count} & \textbf{census-determined?} \\
\midrule
paper & 57{,}903 & (paper, has\_claim, claim) & 251{,}938 & yes ($=|$claim$|$) \\
claim & 251{,}938 & (paper, has\_method, method) & 57{,}903 & yes ($=|$paper$|$) \\
method & 57{,}903 & (paper, has\_result, result) & 57{,}903 & yes ($=|$paper$|$) \\
result & 57{,}903 & (method, produces, result) & 57{,}903 & \textbf{yes ($=|$paper$|$)} \\
evidence & 373{,}438 & (result, grounds, claim) & 251{,}938 & \textbf{yes ($=|$claim$|$)} \\
implication & 251{,}938 & (claim, supported\_by, evidence) & 251{,}922 & no (data-derived) \\
field & 488 & (claim, challenged\_by, evidence) & 121{,}516 & no (data-derived) \\
 & & (claim, implies, implication) & 251{,}938 & yes ($=|$claim$|$) \\
 & & (paper, cites, paper) & 11{,}791 & no; $\citesPct$ of $\citesTot$ \\
\bottomrule
\end{tabular}
\end{table}

\begin{table}[h]
\centering
\caption{Per-aspect membership and untrained in-degree. The singleton rows are what
make the pooling control decisive.}
\label{tab:cardinality}
\small
\begin{tabular}{lccccc}
\toprule
Aspect & members/patch (mean) & max & patches with $\ge2$ & mean in-degree & role \\
\midrule
claim & $4.35$ & $10$ & $99.5\%$ & $4.48$ & maskable, multi-member \\
method & $1.00$ & $1$ & $0.0\%$ & $2.00$ & maskable, singleton \\
result & $1.00$ & $1$ & $0.0\%$ & $6.35$ & maskable, singleton \\
\bottomrule
\end{tabular}
\end{table}

\paragraph{Corpus statistics.}
The graph is built from $\Nraw$ readable records; $246$ lack a required aspect and
are excluded, yielding $\Npapers$ papers with full coverage, while the diagnostic
harness applies the weaker non-empty-text criterion and retains $\Ndiag$. Empty
fields are rare (\texttt{claim}~$3$, \texttt{method}~$3$, \texttt{result}~$4$). Mean
text lengths in the diagnostic set are $1955$, $1606$ and $1219$ characters for
claim, method and result. The corpus spans $\Ncats$ coarse subject categories, the
largest being Medicine ($16{,}998$), Agricultural ($5{,}174$) and Environmental
($4{,}562$). Citation edges are restricted to the intra-corpus subset: $\citesKept$
references resolve inside the corpus and $2{,}199{,}327$ dangling references are
dropped; coverage at source is high ($52{,}056/\Nraw$ files carry non-empty
referenced works), so the sparsity of \texttt{cites} reflects corpus closure rather
than extraction failure---but the consequence for Prop.~\ref{prop:reducible} is the
same either way.

\section{Bits recovered and effective rank}
\label{app:bits}

This appendix defines the measure, explains why the ceiling is $\frac1N\log_2 N!$
rather than $\log_2 N$, records the correction relative to an earlier version, and
gives the effective-rank definition used throughout.

\paragraph{The measure.}
For a query whose gold item attains rank $r$ in a pool of $N$ candidates, define
\begin{equation}
\bits \;=\;
\underbrace{\tfrac1N\log_2 N!}_{\textstyle\mathbb{E}_{\mathrm{chance}}[\log_2 r]}
\;-\;\mathbb{E}[\log_2 r].
\label{eq:bits}
\end{equation}
The subtracted term is exactly the expected log-rank of a uniform random ranker: if
$r$ is uniform on $\{1,\dots,N\}$ then
$\mathbb{E}[\log_2 r]=\frac1N\sum_{k=1}^{N}\log_2 k=\frac1N\log_2 N!$. Hence
$\bits=0$ at chance \emph{identically}, not asymptotically, and a ranker that always
returns the gold item first ($r\equiv1$) attains
\begin{equation}
\bits^{\max} \;=\; \tfrac1N\log_2 N! \;=\; \log_2\!\big((N!)^{1/N}\big)
\;=\; \log_2 N-\log_2 e+\tfrac{\log_2(2\pi N)}{2N}+O(N^{-2}).
\label{eq:bmax}
\end{equation}

\paragraph{Why the ceiling is not $\log_2 N$.}
The index entropy $\log_2 N$ is the cost of \emph{naming} one of $N$ items; it is not
attainable as a bits-recovered value, because the reference point is the geometric
rather than arithmetic mean of the ranks: $(N!)^{1/N}\approx N/e$, not $N$. The gap
is $\log_2 e=\logtwoe$ bits and is essentially constant above $N\approx10^3$. Using
$\log_2 N$ assigns a uniform random ranker $+\logtwoe$ bits instead of $0$ and
understates every system's fraction of the recoverable budget---by a factor that
grows as the pool shrinks. Table~\ref{tab:ladder2} makes this visible: at $K{=}2$ the
two ceilings are $0.862$ and $1.585$, so the same measurement reads $99.8\%$ or
$54.3\%$ depending on which is used.

\paragraph{Correction relative to an earlier version.}
An earlier version of this manuscript reported $\bits^{\max}=\log_2 N$. Every
\emph{measured} value is unchanged: the harness always subtracted the chance term of
Eq.~\ref{eq:bits}, so the recorded $\bits$ column was correct throughout. What was
wrong was the ceiling those values were compared against, the percentages derived
from it, and one cell of the phase table (Table~\ref{tab:rho}, $\rho=\rhoours$),
whose ``bits lost'' entry alone had been computed against $14.3$; that inconsistency
also made the printed row non-monotone in $\rho$, which is impossible. All ceilings
and percentages here use Eq.~\ref{eq:bmax}, and the allocation's share of the loss is
consequently $\invpct$ rather than $87\%$.

\paragraph{Verification.}
The released module prints Eq.~\ref{eq:bmax} for every pool size used in this paper,
checks the exact summation against a log-gamma evaluation, and confirms by Monte
Carlo over $4\times10^{5}$ synthetic queries per pool that a uniform random ranker
recovers $\bits=0$ to within $0.002$ bits while a perfect ranker recovers exactly
$\bits^{\max}$. It also reports what the discarded formula would have given at
chance, namely $+\logtwoe$ bits at every pool size. The self-test of
App.~\ref{app:selftest} re-verifies this against the two ceilings actually printed by
the training code, $\bits^{\max}(3000)=10.108$ and $\bits^{\max}(\Npapers)=\bmaxA$.

\paragraph{Why bits rather than MRR.}
Eq.~\ref{eq:bits} is additive in the information-theoretic sense, comparable across
pool sizes, and does not saturate: BM25's MRR moves by $0.005$ from $K{=}10$ to the
full pool while its bits move by $12.051$. The same insensitivity is why we treat
$\mathrm{MRR}=1.0000$ as a warning rather than a result (App.~\ref{app:repair}).

\paragraph{Effective rank.}
For $X\in\mathbb{R}^{n\times d}$ we take the singular values of the mean-centred
matrix, normalise to $p_i=\sigma_i/\sum_j\sigma_j$, and define
\begin{equation}
\erank(X)=\exp\!\Big(-\sum_i p_i\log p_i\Big),
\label{eq:erank}
\end{equation}
following \citet{DBLP:conf/eusipco/RoyV07}, interpolating between $1$ and $d$;
intervals are bootstrapped over $10^3$ row resamples. Because this is a
spectral-entropy measure rather than an algebraic rank, values \emph{below} an
algebraic ceiling are expected when one singular direction dominates, which is why
several target ranks in Table~\ref{tab:mah} fall below $2.0$, and why the
$\erank\le|\mathcal{A}|$ bound of Prop.~\ref{prop:fixedpoint} is consistent with the
measured $\rquer=\rqueryv$ rather than requiring exactly $\Naspects$. This paper
computes Eq.~\ref{eq:erank} on four distinct matrices---node latents, pooled patch
vectors, the candidate bank and the query bank---and only the last detects the
failure.

\section{The instrument self-test}
\label{app:selftest}

Table~\ref{tab:selftest} lists the $\selftestN$ assertions the harness runs before
any measurement is taken, and the two paragraphs that follow record which of them
were added because an earlier version produced a wrong number, and one way in which a
self-test can itself conceal a refutation.

\begin{table}[h]
\centering
\caption{The $\selftestN$ assertions that must pass before any measurement is taken.
The harness refuses to proceed on failure. Items marked $\dagger$ were added after an
earlier version of the pipeline produced a confidently wrong number that the
assertion would have caught; we list them in that spirit rather than as
boilerplate.}
\label{tab:selftest}
\small
\begin{tabular}{@{}lll@{}}
\toprule
Estimator & Assertion & Expected \\
\midrule
AUC & perfect / inverted / constant / random & $1.0$ / $0.0$ / $0.500$ / ${\approx}0.5$ \\
AUC ties$^\dagger$ & constant scorer, ties averaged & exactly $0.500$ \\
balanced accuracy & always-majority on $95{:}5$ & $0.500$ \\
$\bits^{\max}$$^\dagger$ & matches the training code's printed ceilings & $10.108$, $\bmaxA$ \\
retrieval & perfect / random bits & $\bits^{\max}$ / ${\approx}0$ \\
linear probe & separable / random labels & $>0.90$ / ${\approx}0.5$ \\
group split$^\dagger$ & folds share no group & disjoint \\
genericness & random / tight cluster & $<0.2$ / $>0.9$ \\
duplicate census & $300$ planted duplicates of $1000$ & exactly $300$ \\
target masking$^\dagger$ & all target relations, both directions & empty \\
unrelated relations & untouched by masking & preserved \\
target frame & unfitted \texttt{apply()} raises & \texttt{RuntimeError} \\
InfoNCE & aligned loss $<$ shuffled loss & true \\
\texttt{to\_hetero} precondition$^\dagger$ & every node type is a destination & rejects otherwise \\
lazy parameters$^\dagger$ & none at construction & all materialised \\
EMA target$^\dagger$ & identical to encoder at step $0$ & bitwise equal \\
relation cue & $\Naspects$ distinct vectors at init & non-degenerate \\
forward pass, masked$^\dagger$ & survives empty target relations & finite \\
\textbf{leaf isolation}$^\dagger$ & masked target invariant to \emph{all} context features & bitwise \\
census identity & $\Nevid=\Nsupp+\Nchal$ & exact \\
\bottomrule
\end{tabular}
\end{table}

\noindent
The last assertion is the one that matters. Under target-edge masking an evidence
node is a leaf, so its representation must be a function of its own features alone;
we verify this by perturbing \emph{every} context feature by a constant and requiring
the masked output to be bitwise unchanged. Incomplete masking is the single failure
mode capable of manufacturing a false positive, and it is exactly what produced the
$\polFull$-AUC reading in Table~\ref{tab:polgate} before masking was applied.

\paragraph{A self-test can also conceal.}
An earlier version of this table asserted a paper-level count taken from a
\emph{different} label ($24{,}983$ rather than the measured $\papChal$). The
assertion passed, because the literal was internally consistent, and it thereby
concealed the refutation reported in Sec.~\ref{sec:repair}. A test that validates a
hard-coded constant rather than a measured quantity is worse than no test; that
assertion now takes the measured census as an argument. The same principle applies to
the ceilings: the $\bits^{\max}$ assertion compares against the value the training
code \emph{prints}, not against a number typed into the test.

\section{Pre-registered thresholds}
\label{app:prereg}

Table~\ref{tab:prereg} lists every threshold that converts a measurement into a
verdict, all of them written to disk before any result was computed.

\begin{table}[h]
\centering
\caption{Decision thresholds, fixed and written to disk before any result was
computed and not revised afterwards. Each converts a measurement into a verdict;
publishing them is what makes the verdicts auditable. The last four are the
target-side gates introduced for Sec.~\ref{sec:repair}.}
\label{tab:prereg}
\small
\begin{tabular}{@{}llr@{}}
\toprule
Name & Meaning & Value \\
\midrule
\texttt{collapse\_bits} & below this many bits, a system is ``collapsed'' & $1.0$ \\
\texttt{probe\_encodes\_bits} & above this, the context linearly encodes identity & $4.0$ \\
\texttt{control\_capable\_bits} & above this, a control demonstrates capability & $1.0$ \\
\texttt{paired\_alpha} & level for paired bootstrap intervals & $0.05$ \\
\texttt{degenerate\_bits\_eps} & below this gap, a control is degenerate with the model & $10^{-3}$ \\
\texttt{sim\_match\_bits} & tolerance for ``simulation reproduces observation'' & $1.0$ \\
\texttt{kappa\_material\_bits} & above this, anisotropy is called load-bearing & $0.5$ \\
\midrule
\texttt{a1\_void} & below this, the reasoning probe carries no signal & $\aOneVoid$ \\
\texttt{dup\_max\_frac} & max exact-duplicate share of a target relation & $0.20$ \\
\texttt{generic\_max\_gap} & max genericness gap between target classes & $0.15$ \\
\texttt{min\_target\_edges} & min edges for a data-derived target & $2{,}000$ \\
\bottomrule
\end{tabular}
\end{table}

\noindent
Three consequences. The collapse criterion is bits-based rather than rank-based,
which is why Table~\ref{tab:rho} reports $\rho=\rhoours$ as a partial collapse and
why the regression cell of Table~\ref{tab:repair}, at $\regBits$ bits, is labelled
collapsed without appeal to its rank statistics.
\texttt{degenerate\_bits\_eps} exists because an earlier version of the harness
reported a ``positive control'' agreeing with the model under test to four decimal
places---the same configuration compared to itself; the guard now returns
\emph{vacuous} rather than a verdict if nothing eligible survives. And
\texttt{dup\_max\_frac} / \texttt{generic\_max\_gap} were fixed before the polarity
census was computed, which is the only reason Table~\ref{tab:polgate} can be read as
a result rather than as a post-hoc excuse for a null.

\section{Proofs}
\label{app:proofs}

Proofs of the five results of Sec.~\ref{sec:theory}, in the order stated.

\begin{proof}[Proof of Prop.~\ref{prop:fixedpoint}]
\emph{(i)} For fixed $c$ and fixed $g$, $\arg\min_u\mathbb{E}[\|u-g(t)\|^2\mid
c]=\mathbb{E}[g(t)\mid c]$ by the first-order condition, and the minimising function
is this conditional mean pointwise.

\emph{(ii)} By hypothesis there is a measurable $\hat a$ with $\hat a(c)=a(t)$ almost
surely---in our pipeline $\hat a$ is a lookup on the aspect cue. Set
$f=\gamma\circ\hat a$ and $g=\gamma\circ a$. Then
$f(c)-g(t)=\gamma(a(t))-\gamma(a(t))=0$ almost surely, so $R=0$; since $R\ge0$
everywhere, this is a global minimum. The pair is realisable in the parametric class:
a message-passing encoder attains a per-type constant by driving the
feature-dependent path to zero, and the EMA constraint is satisfied because
$\bar\theta=\theta$ at the stationary point, so the two branches may share $\gamma$.
The same argument holds verbatim for the cosine loss, replacing $\|\cdot\|^2$ by
$1-\cos$.

\emph{(iii)} Suppose $c$ determines $t$, i.e.\ there is $\Psi$ with $t=\Psi(c)$
almost surely; on our corpus the training-free oracle shows this holds to within
$\orcFullPct$ of the recoverable budget. For any injective $g$, take $f=g\circ\Psi$;
then $R=0$ and $g$ retains $\delta$. Both branches are global minima and they differ
by the full recoverable budget, which is the claimed unidentifiability.

Finally, in \emph{(ii)} the image of $\gamma$ has at most $|\mathcal{A}|$ points, so
the mean-centred query bank has at most $|\mathcal{A}|-1$ non-zero singular values
and $\erank\le|\mathcal{A}|$ by Eq.~\ref{eq:erank}, while $f$ is non-constant
whenever $\gamma$ is injective on $\mathcal{A}$.
\end{proof}

\begin{proof}[Proof of Prop.~\ref{prop:centroid}]
For fixed $c$, $\arg\min_u\mathbb{E}[\|u-z\|^2\mid c]=\mathbb{E}[z\mid c]$ by the
first-order condition $2(u-\mathbb{E}[z\mid c])=0$; the minimising function is
therefore $f^\star(c)=\mathbb{E}[z\mid c]$ pointwise. Substituting
Eq.~\ref{eq:decomp} gives
$\mathbb{E}[z\mid c]=\mu+\alpha_{a(c)}+\mathbb{E}[\delta\mid c]$, and the achieved
loss is $\mathbb{E}\|z-\mathbb{E}[z\mid c]\|^2=
\mathbb{E}\|\delta-\mathbb{E}[\delta\mid c]\|^2\le\mathbb{E}\|\delta\|^2$, with
equality when $c$ is independent of $\delta$. The hypothesis that $c$ is weakly
informative about $\delta$ is what drives $\mathbb{E}[\delta\mid c]\to0$; as
Sec.~\ref{sec:theory} notes, that hypothesis is false on our corpus, which is why
this proposition is stated as the frozen-target special case and not as the account of
our observations.
\end{proof}

\begin{proof}[Proof of Prop.~\ref{prop:frame}]
Since $q_i\equiv q$, the score vector $\big(s(T(q),T(c_j))\big)_{j=1}^{N}$ does not
depend on $i$, so its argsort is a single fixed permutation $\pi$ of $[N]$ and the
gold item's rank is $r_i=\pi^{-1}(g_i)$. Because $\pi^{-1}$ is a bijection of $[N]$
and $g_i$ is uniform on $[N]$, the rank $r_i$ is also uniform on $[N]$; this is the
step that makes the conclusion exact rather than approximate. Therefore
\[
\mathbb{E}\big[\tfrac1{r}\big]=\tfrac1N\sum_{k=1}^{N}\tfrac1k=\tfrac{H_N}{N},
\qquad
\mathbb{E}[\log_2 r]=\tfrac1N\sum_{k=1}^{N}\log_2 k=\tfrac1N\log_2 N!,
\]
and substituting the second identity into Eq.~\ref{eq:bits} gives $\bits=0$ exactly.
None of these quantities depends on $T$, so no injective re-metrisation can alter
them. Injectivity of $T$ is used only to guarantee that $T$ does not merge
candidates and thereby change $N$.
\end{proof}

\noindent
The proposition is exact for $q_i\equiv q$, whereas we measure $\rquer=\rqueryv$ and
self-similarity $\selfsim$: nearly, not exactly, constant. A perturbation bound
expressing $\bits$ as a function of query-bank dispersion would close that gap, and we
flag its absence rather than treating the exact statement as if it covered the
measured regime. What the measurement does establish directly is the empirical
version, rung~5 of Table~\ref{tab:ladder}: five frames buy $\framegain$ against a
$\deficit$ deficit.

\begin{proof}[Proof of Prop.~\ref{prop:frechet}]
By definition the Fréchet mean of a distribution $P$ on $(\mathcal{M},d)$ is
$\arg\min_{u\in\mathcal{M}}\int d(u,z)^2\,dP(z)$. Conditioning on $c$ and minimising
pointwise in $u=f(c)$ gives exactly this variational problem for
$P=p(\cdot\mid c)$. Existence and uniqueness hold on Hadamard manifolds by convexity
of $u\mapsto d(u,z)^2$ along geodesics.
\end{proof}

\begin{proof}[Proof of Prop.~\ref{prop:reducible}]
Write the $\ell$-layer message-passing update for node $v$ as
$h_v^{(\ell)}=\phi^{(\ell)}\big(h_v^{(\ell-1)},\{\!\{h_u^{(\ell-1)}:u\in
\mathcal{N}(v)\}\!\}\big)$. The receptive field of $v$ after $\ell$ layers is
determined by $E$, and by hypothesis $E$ is a fixed function of the census
$\mathcal{N}$, identical across all graphs sharing that census. Hence for any two
graphs $G,G'$ with the same census, the map from the multiset of node features in
$v$'s $\ell$-hop neighbourhood to $h_v^{(\ell)}$ is the same function. Restricting to
patch $p$: because every relation incident to $p$ is present for all patches with
$p$'s node types, $p$'s $\ell$-hop neighbourhood contains exactly $p$'s own nodes
plus nodes reachable only through census-determined relations, whose membership is
likewise constant. Therefore $\Phi(G)_p=\Psi(\{x_u:u\in p\})$ for a fixed $\Psi$
depending only on the parameters and the census, and any score
$s(\Phi(G)_p,\Phi(G)_{p'})$ factors through $(\{x_u\}_{u\in p},\{x_u\}_{u\in p'})$.
The data-processing inequality then gives
$I(\text{gold};\,\text{score})\le I(\text{gold};\,\{x_u\}_{u\in p})$, which is the
information available to a training-free statistic of the same features.
\end{proof}

\noindent
Two remarks, both restated in the proposition itself because they are the two most
likely misreadings. The proposition bounds \emph{information}, not
\emph{performance}: a trained encoder may still exceed a fixed training-free
statistic by choosing a better metric over identical information, which is what our
$+\headroom$-bit margin over the oracle is and why a reader should not treat that
margin as a contradiction. And the hypothesis is checkable in one pass---compare each
relation's cardinality to its endpoint types' node counts
(Table~\ref{tab:reducible}).

\section{Two mechanisms we eliminated}
\label{app:eliminated}

Both statements below are mathematically correct and were for a substantial period
our leading hypotheses. Both are refuted as explanations by controls in the main
text. We record them because the elimination is part of the contribution.

\begin{lemma}[Scalarised targets are rank-$2$]
\label{lem:rank2}
If each patch is summarised by a scalar mean angle
$\alpha_i=\frac{1}{|S_i|}\sum_{j\in S_i}\phi(z_j)$ and mapped to
$t_i=(\cosh\alpha_i,\sinh\alpha_i)$, then $\rank(T)\le2$ and $\erank(T)\le2$ for any
encoder and any pooling operator.
\end{lemma}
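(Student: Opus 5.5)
The plan is to bound the algebraic rank first and then deduce the effective-rank bound from it. Stack the targets as the rows of $T\in\mathbb{R}^{n\times 2}$, with row $i$ equal to $t_i=(\cosh\alpha_i,\sinh\alpha_i)$. The encoder and the pooling operator enter only through the scalars $\alpha_i$. Whatever map produces $\phi(z_j)$, and whatever weights or aggregation define $\alpha_i$ (the mean stated here, or sum, DeepSets or attention), each patch is reduced to one real number before the lift to $\mathbb{R}^2$. So I will argue about $T$ as an arbitrary $n\times 2$ matrix whose rows lie on the hyperbola $x^2-y^2=1$. Nothing about $\phi$ or the aggregator needs to be used, and this is what gives the ``any encoder, any pooling'' clause for free.

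\emph{Step 1 (algebraic rank).} $T$ has two columns, so its column space has dimension at most $2$, and $\rank(T)\le 2$. The same holds for the mean-centred matrix $T-\1\bar t^{\top}$, since centring subtracts a rank-one matrix but leaves it with only two columns. This is the matrix Eq.~\ref{eq:erank} is applied to, per Table~\ref{tab:notation}.

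\emph{Step 2 (effective rank).} By Eq.~\ref{eq:erank}, $\erank$ is the exponential of the Shannon entropy of the normalised singular-value distribution $p_i=\sigma_i/\sum_j\sigma_j$. At most two $\sigma_i$ are non-zero, and zero entries contribute $0\log 0=0$. So $p$ is supported on at most two atoms, its entropy is at most $\log 2$, and $\erank\le 2$. Equality holds exactly when the two non-zero singular values coincide.

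\emph{Degenerate case.} The only step needing care is the degenerate case in which every singular value vanishes. This happens when all $\alpha_i$ are equal, so that centring gives the zero matrix. There the normalisation in Eq.~\ref{eq:erank} is undefined. I would adopt the convention $\erank=1$, or simply exclude this case, and either way the bound $\le 2$ survives. I would also remark that the rows lie on a one-dimensional curve, so the bound can be strict. That remark is not needed for the lemma, which is otherwise immediate.
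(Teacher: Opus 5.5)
Your proof is correct and follows the paper's argument: both rest on the observation that every row of $T$ lies in $\mathbb{R}^2$, so the rank is at most $2$ whatever the encoder or aggregator. The paper phrases this as the rows lying on the curve $\alpha\mapsto(\cosh\alpha,\sinh\alpha)$. You also spell out three steps the paper leaves implicit: the passage from algebraic rank to effective rank via the entropy bound, the centring step, and the degenerate case in which all $\alpha_i$ are equal.
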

\begin{proof}
Every row of $T$ lies on the image of the one-parameter curve
$\alpha\mapsto(\cosh\alpha,\sinh\alpha)\subset\mathbb{R}^2$, so the row space lies in
a two-dimensional subspace.
\end{proof}

\noindent\textbf{Why it is not the explanation.} A Euclidean-cosine pipeline with no
hyperbolic map anywhere fails identically (Table~\ref{tab:ladder}, rungs~0--3), and
the pathology is present at initialisation (Fig.~\ref{fig:dynamics}(b)). Raising the
algebraic ceiling from $2$ to $\dlat$ with orthonormal multi-anchor targets leaves the
\emph{measured} target rank between $1.6$ and $2.0$ (Table~\ref{tab:mah}).

\begin{proposition}[Mean pooling contracts effective rank]
\label{prop:pool}
Let patch members be $z^{(p)}_i=c+\delta_p+\eta^{(p)}_i$ with
$\delta_p,\eta^{(p)}_i$ independent and zero-mean, covariances
$\Sigma_\delta,\Sigma_\eta$. Mean pooling over $m$ members yields
$\Sigma_{\mathrm{pool}}(m)=\Sigma_\delta+\frac1m\Sigma_\eta$, so
$\erank(\Sigma_{\mathrm{pool}})$ is non-increasing in $m$ whenever
$\erank(\Sigma_\eta)>\erank(\Sigma_\delta)$.
\end{proposition}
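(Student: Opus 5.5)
The plan is to compute the pooled covariance first and then treat the effective-rank monotonicity as a separate spectral statement. For the covariance, I write the mean-pooled patch vector as $\bar z^{(p)}=c+\delta_p+\frac1m\sum_{i=1}^m\eta^{(p)}_i$. The constant $c$ drops out under mean-centring. Since $\delta_p$ and the $\eta^{(p)}_i$ are independent and zero-mean, the cross terms vanish. Reading the hypothesis as also making the $\eta^{(p)}_i$ independent across $i$, the averaged noise has covariance $\frac1{m^2}\cdot m\Sigma_\eta=\frac1m\Sigma_\eta$. This gives $\Sigma_{\mathrm{pool}}(m)=\Sigma_\delta+\frac1m\Sigma_\eta$ directly, and this step is routine.

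For monotonicity, set $s=1/m\in(0,1]$, so that $\Sigma(s)=\Sigma_\delta+s\Sigma_\eta$; increasing $m$ means decreasing $s$. Since Eq.~\ref{eq:erank} depends only on the spectrum, I will work with $h(s)=-\sum_i p_i(s)\log p_i(s)$, where $p_i(s)=\lambda_i(s)^{1/2}/\sum_j\lambda_j(s)^{1/2}$ are the normalised singular values of the population matrix. The goal is $h'(s)\ge0$ on $(0,1]$.

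The key step, and the main obstacle, is that the hypothesis $\erank(\Sigma_\eta)>\erank(\Sigma_\delta)$ constrains only the endpoints and not the path between them. Spectral entropy is neither convex nor concave along a matrix segment, and when $\Sigma_\delta$ and $\Sigma_\eta$ do not commute the eigenvalues of the sum are not determined by the two spectra. My first attempt would therefore be the commuting (simultaneously diagonalisable) case, where $\lambda_i(s)=a_i+sb_i$. In that case $h'(s)$ becomes a covariance under the distribution $p(s)$ between $-\log p_i$ and the relative growth rate $b_i/(a_i+sb_i)$. If the directions that carry noise $\eta$ are exactly the low-weight directions of $\Sigma_\delta$, this covariance is nonnegative: adding noise lifts the small eigenvalues, which flattens $p$ and raises the entropy. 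I expect that the ordering assumption on the two effective ranks has to be strengthened to an assumption of this alignment, for example that $b_i/a_i$ is non-increasing in $a_i$, or that $\Sigma_\eta$ is isotropic. Without that, a low-dimensional $\Sigma_\delta$ perturbed by noise concentrated in one of its own directions gives a counterexample in which $h$ falls as $s$ grows.

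In writing this up I would state that alignment condition explicitly, prove the commuting case by the covariance-sign argument, and treat the general case as the majorisation claim that $p(s)$ is majorised by $p(s')$ for $s<s'$. I would flag this general case as requiring extra structure rather than claim it, in keeping with the paper's practice of recording where an exact statement does not cover the measured regime.
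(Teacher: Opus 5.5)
\emph{Note: the paper states this proposition without proof.} It appears among the eliminated mechanisms, and App.~\ref{app:proofs} covers only the five main results. As stated, the proposition is false, so your instinct not to derive monotonicity from the endpoint hypothesis is right. Two things in your plan still need fixing: the counterexample and the proposed repair. Your covariance computation is fine, given the cross-member independence you correctly flag.

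\textbf{The counterexample you sketch does not satisfy the hypothesis.} Noise concentrated in one of $\Sigma_\delta$'s own directions has $\erank(\Sigma_\eta)\approx1\le\erank(\Sigma_\delta)$. A valid counterexample is $\Sigma_\delta=\operatorname{diag}(1,0,0)$, $\Sigma_\eta=\operatorname{diag}(0,10,10)$. Then $\erank(\Sigma_\eta)=2>1=\erank(\Sigma_\delta)$, yet $\Sigma_{\mathrm{pool}}(m)=\operatorname{diag}(1,10/m,10/m)$ has effective rank:
about $2.71$ at $m=1$ (with $p_i\propto\lambda_i^{1/2}$ as in your setup; $2.34$ with $p_i\propto\lambda_i$);
exactly $3$ at $m=10$;
and only then decays to $1$.
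So it is strictly increasing on $1\le m\le10$. Replacing the zeros by a small $\epsilon>0$ gives positive-definite covariances with the same behaviour.

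\textbf{The same example satisfies your proposed repair.} The noise lives exactly on the low-weight directions of $\Sigma_\delta$. In the $\epsilon$-perturbed version $b_i/a_i=(\epsilon,10/\epsilon,10/\epsilon)$, which is non-increasing in $a_i$. Yet $h'(s)<0$ on $(0.1,1]$.

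Your identity $h'(s)=\mathrm{Cov}_{p(s)}(\ell_i,-\log p_i)$, with $\ell_i\propto b_i/\lambda_i(s)$, is correct; the sign argument is not. Nonnegativity needs $\ell_i$ to be ordered against the \emph{current} eigenvalues $\lambda_i(s)=a_i+sb_i$, not against $a_i$. Lifting small eigenvalues flattens $p$ only until they overtake the others. After the crossing, the noise-carrying directions are both the heaviest and the fastest-growing, so the covariance turns negative.

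\textbf{A correct sufficient condition in the commuting case.} Take $b_i=\varphi(a_i)$ with $\varphi$ non-decreasing and $\varphi(a)/a$ non-increasing (your condition). The first requirement rules out crossings, so $\lambda_i(s)$ is ordered like $a_i$ for every $s$. Then $\ell_i$ and $-\log p_i(s)$ are both non-increasing in $a_i$, and the weighted Chebyshev sum inequality gives $h'(s)\ge0$.

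Isotropic $\Sigma_\eta$ is the case $\varphi$ constant. Because it commutes with every $\Sigma_\delta$, it also disposes of the non-commuting case you left open. The general majorisation claim under the original hypothesis is not worth pursuing, since the commuting example above already refutes it.
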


\noindent\textbf{Why it is not the explanation.} For \texttt{method} patches
$m{=}1.00$ exactly, so mean pooling is the identity map; the measurement confirms it,
$\rnode=47.42\to\rpool=47.31$, and the query bank still collapses to $1.97$.

\begin{mykeybox}
Both eliminated hypotheses are \textbf{candidate-side} stories. Every instrument we
reached for---effective rank, anisotropy, DC ratio, spectra, whitening---is a
candidate-side instrument. We ran the same measurement on the query bank only after
every candidate-side account was exhausted. It took one line to settle the question.
\textbf{Instrument the queries.} The analogous lesson in Sec.~\ref{sec:repair} is one
level further out: every instrument in this paper, including the query-side ones, is
a \emph{representation}-side instrument, and the reducibility of the target is
invisible to all of them. \textbf{Instrument the target.}
\end{mykeybox}

\section{Per-rung ladder detail}
\label{app:ladderdetail}

Table~\ref{tab:frames} gives the post-hoc frame sweep of rung~5 with the bank
statistics that make it interpretable; the two paragraphs after it cover rungs~6
and~4.

\begin{table}[h]
\centering
\caption{\textbf{Post-hoc retrieval-frame sweep} (three seeds, block~A). All frames
are fitted on the candidate bank and applied to both sides. The best frame buys
$\framegain$ against a $\deficit$ deficit, as Prop.~\ref{prop:frame} predicts. This
is a different experiment from the \emph{trained} target-frame effect of
App.~\ref{app:repair} and the two must not be conflated: this sweep re-metrises an
already-collapsed query bank, whereas that one changes what the objective asks for.}
\label{tab:frames}
\small
\begin{tabular}{@{}lcc@{}}
\toprule
Frame & MRR & note \\
\midrule
raw & $\mrrRaw$ & --- \\
centre & $2.1\times10^{-4}$ & shared mean removed \\
all-but-the-top-$1$ & $\mrrRmTopOne$ & best \\
all-but-the-top-$2$ & $2.1\times10^{-4}$ & --- \\
ZCA (fitted on candidates) & $2.2\times10^{-4}$ & full whitening \\
\midrule
candidate bank & DC ratio $\dccand$ & $\rcand=\rcandv$ \\
query bank & DC ratio $\dcquery$ & $\rquer=\rqueryv$, DC energy $\dcenergyquery$ \\
query self-similarity & $\selfsim$ & mean pairwise cosine \\
\bottomrule
\end{tabular}
\end{table}

\paragraph{Encoder depth.}
Across depths $0$--$3$, node-level effective rank falls monotonically from
$\rnodeDzero$ to $\rnodeDthree$ while MRR moves from $1.9\times10^{-4}$ to
$2.0\times10^{-4}$. A depth-$0$ linear encoder with no message passing whatsoever
fails identically to a depth-$3$ GNN; see Fig.~\ref{fig:depth}.

\paragraph{Whitening.}
Whitening the inputs raises the instance-variance share of the trained
representation from $\papTr$ to $\withinPaperWhite$ and the input effective rank to
$\erInWhite$ of $\dfeat$ (Table~\ref{tab:whiten}), while MRR stays at chance and the
probe falls from $\probeBase$ to $\probeWhite$. Sec.~\ref{sec:theory} explains why
this null is predicted rather than anomalous: the zero-risk family of
Prop.~\ref{prop:fixedpoint}\emph{(ii)} depends on the inputs only through category
decodability, which whitening preserves.

\section{The repair sweep in full}
\label{app:repair}

Table~\ref{tab:repairgrid} gives the matched-budget factorial; the paragraphs after
it record a correction, the three unconfounded single-factor ablations, our treatment
of the two saturated cells, and the dose--response experiment we have not run.

\begin{table}[h]
\centering
\caption{\textbf{The $2\times2$ over target frame and shared basis at matched
budget} ($3$k steps, one seed, block~C, ceiling $\bmaxA$; \texttt{center} removes the
per-aspect target mean, \emph{shared} ties the input projection across node types).
Read the factorial, not the best cell. The baseline cell already reaches
$\rawZeroPct$ of ceiling, so only $\ceilRoom$ bits are available to any factor here;
this is the reducibility of Prop.~\ref{prop:reducible} showing up as a compressed
dynamic range.}
\label{tab:repairgrid}
\small
\begin{tabular}{@{}llccc@{}}
\toprule
Frame & shared & $\bits$ & \% of $\bits^{\max}$ & A1 probe \\
\midrule
\texttt{raw} & $0$ \emph{(baseline)} & $\rawZeroThreeK$ & $\rawZeroPct$ & $0.964$ \\
\texttt{center} & $0$ & $\nceBits$ & $99.9\%$ & $\nceAone$ \\
\texttt{raw} & $1$ & $11.808$ & $82.1\%$ & $0.985$ \\
\texttt{center} & $1$ & $14.379$ & $100.0\%$ & $0.963$ \\
\midrule
\multicolumn{2}{@{}l}{target-frame main effect} & $\frameEffect$ & \multicolumn{2}{l}{$\frameShare$ of the $\ceilRoom$ available} \\
\multicolumn{2}{@{}l}{shared-basis main effect} & $\sharedEffect$ & & \\
\multicolumn{2}{@{}l}{both} & $\bothEffect$ & & \\
\multicolumn{2}{@{}l}{interaction} & $\interEffect$ & & \\
\midrule
\multicolumn{2}{@{}l}{\textbf{learning-rate schedule}, same $3$k steps} & $\mathbf{\scheduleEffect}$
 & \multicolumn{2}{l}{$\rawOracleThreeK\!\to\!\rawZeroThreeK$} \\
\multicolumn{2}{@{}l}{\emph{training-free oracle}} & $\oracleA$ & $\oraclePct$ & --- \\
\bottomrule
\end{tabular}
\end{table}

\paragraph{A correction we owe the reader.}
An earlier draft reported the target-frame main effect as $+1.199$ bits. That figure
was computed against a baseline of $\rawOracleThreeK$ bits obtained under an older
learning-rate schedule; at matched schedule the baseline is $\rawZeroThreeK$ and the
frame effect is $\frameEffect$. The $+1.199$ therefore conflated the frame with the
schedule, and the schedule is the larger term. We correct it here, and we note that
the error was invisible until we read a log stage we had previously skipped---which
is an argument for the claim--evidence map of App.~\ref{app:claimmap} rather than
against it.

\paragraph{The unconfounded ablations.}
Three further cells share budget, schedule, frame, loss and seed, so each isolates
one factor. \emph{Structural cue} ($6$k steps): aspect embedding $\cueAspect$ bits,
RWSE $\cueRwse$, none $\cueNone$---so a $\Naspects$-entry learned embedding beats a
$16$-dimensional RWSE by $\cueGainA$ bits and removing the cue entirely costs
$\cueGainB$. This is consistent with App.~\ref{app:rwse}: the RWSE has effective rank
$\rwseRankLo$--$\rwseRankHi$ of $16$ because the graph is census-determined.
\emph{Transduction} ($6$k steps): fitting the target frame on a disjoint half rather
than the candidate bank changes bits by $\transGap$, retiring the concern that the
frame is a transduction artefact. \emph{Raw-skip} ($6$k steps): a learned gate
between raw features and encoder output moves from $\sigma=0.5$ at initialisation to
$\sigma=\rawGate$, i.e.\ toward the encoder, so the encoder is doing work that raw
features alone do not.

\begin{cautionbox}
\textbf{Two cells report $\mathrm{MRR}=1.0000$ and we do not headline them.} The
raw-skip cell and \texttt{center}:1 both reach $\bmaxA$ bits, $100.0\%$ of ceiling.
Our own harness flags $\mathrm{MRR}=1.0000\pm0.0000$ as a degeneracy signature, and
in the polarity probe that exact reading was the $\polEdgeLeak$-AUC edge-type leak
(Table~\ref{tab:polgate}). Given Prop.~\ref{prop:reducible} and an oracle already at
$\oraclePct$, near-ceiling is \emph{expected} rather than suspicious---but expected
is not the same as verified. We therefore report the $20$k, $3$-seed cell
($\fixTwentyK\pm\fixTwentyKsd$, $\mathrm{MRR}=0.9996$) as the paper's repaired
number, and treat the two saturated cells as requiring the candidate-perturbation
leak check before any stronger claim.
\end{cautionbox}

\paragraph{The dose--response experiment that would settle the mechanism.}
The harness supports testing the allocation by \emph{intervention} rather than
simulation: the per-aspect mean of the real features is rescaled by a factor
$\alpha$, sweeping the achieved aspect share $\rho$; the model is retrained at each
dose; and bits recovered are measured against the \emph{measured} $\rho$. This yields
an empirical critical share to compare against the simulated $\rho^\star=\rhostar$ in
the actual feature space. The design, dose grid and stopping rule are in the
pre-registration file. It is not reported here, and together with the
non-census-determined control graph of Sec.~\ref{sec:discussion} it is the
measurement we would run first with more compute.

\section{Protocol for the checkpoint allocation measurement}
\label{app:trajectory}

Table~\ref{tab:trajectory} is produced as follows. Checkpoints are the ones the
training loop already writes; no rerun is required. At each checkpoint we load the
encoder, run one forward pass over the block-A graph cache with the same masking
convention as Protocol~R, and compute three quantities on the resulting patch
representations: the one-way variance share between aspects
($\rho_{\mathrm{asp}}$), the one-way share between papers ($\rho_{\mathrm{pap}}$),
and the effective rank of the model's predictions ($\rquer$) by
Eq.~\ref{eq:erank}. Bits recovered are then measured under the unmodified Protocol~R
on the same $\Nqueries$ fixed queries, so the column is directly comparable to
Table~\ref{tab:ladder}. Both variance shares are computed in the block-A feature
space, so the trajectory is internally consistent and does not inherit the
cross-encoder caveat of App.~\ref{app:provenance}.

\paragraph{Why the measurement is necessary rather than decorative.}
An endpoint comparison cannot distinguish a degeneracy the objective \emph{created}
from one it merely \emph{failed to remove}, and those two readings support different
claims---the first about training dynamics, the second about the optimum. Our own
rank trajectory (Fig.~\ref{fig:dynamics}(b)) and untrained rank measurements
(App.~\ref{app:rankinit}) already point toward the second. Because the distinction
costs one forward pass per checkpoint and changes what the paper is entitled to say,
we regard it as the minimum standard for any claim of the form ``training
re-allocates variance,'' and Recommendation~(6) of Sec.~\ref{sec:discussion} states it
as such.

\section{The held-out reasoning probe A1}
\label{app:aone}

A1 is the second metric of Sec.~\ref{sec:repair}, and because a decoupling claim is
only as strong as the metric it decouples from, we specify its construction and audit
its floor rather than treating it as a fixed reference point.

\begin{table}[h]
\centering
\caption{The A1 probe: construction, references and floors. The void line
$\aOneVoid$ is pre-registered (App.~\ref{app:prereg}): a reading below it is treated
as carrying no signal, and no cell in Table~\ref{tab:repair} falls below it.}
\label{tab:aone}
\small
\begin{tabular}{@{}lll@{}}
\toprule
Property & Value & Note \\
\midrule
pre-registered void line & $\aOneVoid$ & App.~\ref{app:prereg} \\
observed range across cells & $\aoneRange$ & Table~\ref{tab:repair} \\
\bottomrule
\end{tabular}
\end{table}

\paragraph{The construct-validity caveat, stated plainly.}
If A1 draws on the \texttt{challenged\_by} relation, then by Table~\ref{tab:polgate}
its target is $\dupChal$ exact-duplicate placeholder text with a cosine-only floor of
$\polCosAUC$, and the decoupling result of Sec.~\ref{sec:repair} would then be
uninterpretable in \emph{both} directions: neither the presence nor the absence of a
relationship with bits recovered could be attributed to reasoning content. If A1 does
not draw on that relation, this paragraph does not apply and the table above says so
explicitly. We separate the two cases rather than leaving the reader to infer which
holds, because the same corpus audit that produced Table~\ref{tab:polgate} is the
reason the question arises.

\section{The polarity target: census, gates and probe conditions}
\label{app:polarity}

Table~\ref{tab:polcensus} gives the census behind the two failed gates of
Table~\ref{tab:polgate}; the paragraphs after it establish that the duplication is
asymmetric and identify the one data-derived relation that survives.

\begin{table}[h]
\centering
\caption{\textbf{Census of the data-derived relations} (block~C). Every evidence node
is a leaf---$\Nevid=\Nsupp+\Nchal$ exactly---because the source fields are plain
strings, so each claim yields one supporting and at most one contradicting evidence
node. This is the structural fact behind the $\polEdgeLeak$-AUC edge-type leak.}
\label{tab:polcensus}
\small
\begin{tabular}{@{}lrl@{}}
\toprule
Quantity & Value & Note \\
\midrule
\texttt{claim} nodes & $\Nimpl$ & \\
\texttt{supported\_by} edges & $\Nsupp$ & ${\approx}1$ per claim \\
\texttt{challenged\_by} edges & $\Nchal$ & $48.2\%$ of claims \\
\texttt{evidence} nodes & $\Nevid$ & $=\Nsupp+\Nchal$ exactly \\
\texttt{implies} edges & $\Nimpl$ & $1$ per claim (census-determined) \\
\midrule
papers with $\ge1$ challenge & $\papChal$ & $\papChalObs$ of corpus \\
\quad expected under per-claim assignment & --- & $\papChalExp$ \\
challenges per affected paper & $3.41$ & vs.\ $4.35$ claims/paper \\
\midrule
duplicate \texttt{evidence} rows (groups $\ge50$) & $\dupRows$ & $8.45\%$ of all evidence \\
challenge edges onto duplicated rows & $\dupChalEdges$ & $\dupChal$ of challenges \\
largest single duplicate group & $\dupTop$ & identical strings \\
unique \texttt{implication} rows & $\uniqImpl$ & of $\Nimpl$; $\dupImpl$ duplicated \\
\bottomrule
\end{tabular}
\end{table}

\paragraph{The duplication is almost perfectly asymmetric.}
$\dupRows$ evidence nodes lie in duplicate groups of size $\ge50$, and
$\dupChalEdges$ challenge edges point at duplicated rows. These agree to four nodes,
so essentially every boilerplate evidence node is a \emph{challenge} node and
supporting evidence is nearly free of exact duplication. That asymmetry is a complete
account of the apparent polarity signal without invoking any semantic difference:
$\cos(\text{claim},\text{support})=\cosSup$ against
$\cos(\text{claim},\text{challenge})=\cosChal$ with $d=\polD$, and a cosine-only
classifier at $\polCosAUC$ AUC. The genericness gap of $\genGap$ shows the
non-duplicated remainder is also formulaic, so a duplicate filter alone is
insufficient; the fix is a length-and-pattern guard at the point where the extracted
string is accepted, followed by one rebuild and a re-measurement of $\bmaxA$,
$\oracleA$ and every derived reference point.

\paragraph{What survives.}
\texttt{implies} is clean: $\uniqImpl$ of $\Nimpl$ rows unique, $\dupImpl$ in
duplicate groups. It is also census-determined ($1$ per claim,
Table~\ref{tab:reducible}), so it cannot serve as a \emph{structural} target---but as
a \emph{content} target its text is usable, and it is the one data-derived signal in
this corpus that passes the quality gate. Restricting the target set to
\texttt{implies} is a three-line change and is the cheapest way to obtain a
data-derived target that clears our own gates, rather than leaving
Sec.~\ref{sec:repair} to end on a null.

\section{Per-aspect breakdown}
\label{app:peraspect}

Table~\ref{tab:peraspect} and Fig.~\ref{fig:peraspect} give the per-aspect view that
makes the singleton control decisive, and Table~\ref{tab:preanchor} records the
aggregate that concealed it.

\begin{table}[h]
\centering
\caption{\textbf{Per-aspect breakdown} (block~A, five seeds). \texttt{method} and
\texttt{result} are exact singletons, so mean pooling is the identity map:
$\rnode\approx\rpool$ to two decimals. $z$ is the gold item's standardised
similarity margin; $z\approx0$ means the gold candidate is indistinguishable from the
pool mean.}
\label{tab:peraspect}
\small
\begin{tabular}{@{}lccccccc@{}}
\toprule
Aspect & $m$ & $\rnode$ & $\rpool$ & $\rquer$ & MRR & MRR (centred) & margin $z$ \\
\midrule
claim & $4.35$ & $5.40$ & $4.91$ & $1.87$ & $1.5\times10^{-4}$ & $2.2\times10^{-4}$ & $-0.006$ \\
method & $1.00$ & $47.42$ & $47.31$ & $1.97$ & $2.1\times10^{-4}$ & $1.5\times10^{-4}$ & $-0.001$ \\
result & $1.00$ & $2.04$ & $2.05$ & $1.62$ & $2.0\times10^{-4}$ & $2.0\times10^{-4}$ & $+0.033$ \\
\midrule
\multicolumn{5}{@{}l}{\emph{Chance}} & $\chanceMRR$ & $\chanceMRR$ & $0$ \\
\bottomrule
\end{tabular}
\end{table}

\begin{figure}[h]
  \centering
  \includegraphics[width=0.85\textwidth]{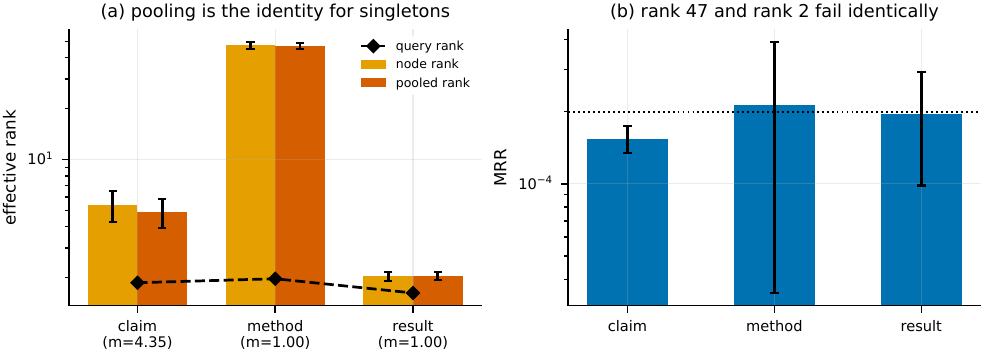}
  \caption{\textbf{The control that exonerates pooling and localises the failure to
  the query side.} \textbf{(a)}~For \texttt{method}, $m{=}1.00$, so mean pooling is
  provably the identity map: node rank $47.4$ carries through to pooled rank $47.3$.
  The model's \emph{predictions} for those patches nevertheless occupy effective rank
  $1.97$ (black diamonds), at or below the $\Naspects$-category bound of
  Prop.~\ref{prop:fixedpoint}. \textbf{(b)}~A $23\times$ range in pooled rank
  produces no movement in bits recovered; dotted line is chance, dashed line the
  $\bmaxA$-bit ceiling. Five seeds.}
  \label{fig:peraspect}
\end{figure}

\begin{table}[h]
\centering
\caption{\textbf{Isolated pre-target rank measurement.} A mean-pool JEPA with a
Euclidean-cosine target, no hyperbolic map and no anchors. Mean $\pm$ std over five
seeds. The per-aspect breakdown above shows this aggregate conceals a $23\times$
spread---which is why the aggregate, alone, misled us.}
\label{tab:preanchor}
\small
\begin{tabular}{@{}lcl@{}}
\toprule
Quantity & Value ($/\dlat$) & \\
\midrule
Node-latent effective rank $\rnode$ & $18.3\pm0.9$ & healthy \\
Pooled-latent effective rank $\rpool$ & $1.97\pm0.04$ & low, but see Table~\ref{tab:peraspect} \\
\bottomrule
\end{tabular}
\end{table}

\section{Oracle leakage controls}
\label{app:oracle}

Table~\ref{tab:oracle} reports the training-free oracle per aspect together with
three leakage controls: \textsc{no-summary} removes the title/abstract node from the
context, \NoOv{} deletes every $5$-gram shared between query and gold, and a
\textsc{cheat} query equal to the target's own embedding validates the metric at
$\mathrm{MRR}=1.000$. Fig.~\ref{fig:oracle} plots the same numbers.

\begin{table}[h]
\centering
\caption{\textbf{Oracle retrieval and leakage controls}, per aspect, block~A. \NoOv{}
is a lower bound on the non-lexical signal; \BMtf{} uses no learned representation.
\textsc{no-summary} removes the paper title/abstract node from the context. A
\textsc{cheat} query equal to the target's own embedding validates the metric.}
\label{tab:oracle}
\small
\begin{tabular}{@{}llcccc@{}}
\toprule
\textbf{Features} & \textbf{Aspect} & \textbf{ctx MRR} & \textbf{\textsc{no-summary}} & \textbf{\BMtf} & \textbf{\NoOv} \\
\midrule
\multirow{3}{*}{whitened}
 & claim & $\mathbf{0.973}$ & $0.953$ & $0.979$ & $0.732$ \\
 & method & $\mathbf{0.908}$ & $0.834$ & $0.974$ & $0.802$ \\
 & result & $\mathbf{0.938}$ & $0.921$ & $0.989$ & $0.854$ \\
\midrule
\multirow{3}{*}{raw}
 & claim & $0.943$ & $0.916$ & --- & --- \\
 & method & $0.878$ & $0.796$ & --- & --- \\
 & result & $0.925$ & $0.916$ & --- & --- \\
\midrule
\multicolumn{2}{@{}l}{\textsc{cheat} (metric sanity)} & $1.000$ & --- & --- & --- \\
\multicolumn{2}{@{}l}{\emph{Trained pipeline, baseline}} & $\mrrBase$ & --- & --- & --- \\
\multicolumn{2}{@{}l}{\emph{Chance}} & $\chanceMRR$ & --- & --- & --- \\
\bottomrule
\end{tabular}
\end{table}

\begin{figure}[h]
  \centering
  \includegraphics[width=0.6\textwidth]{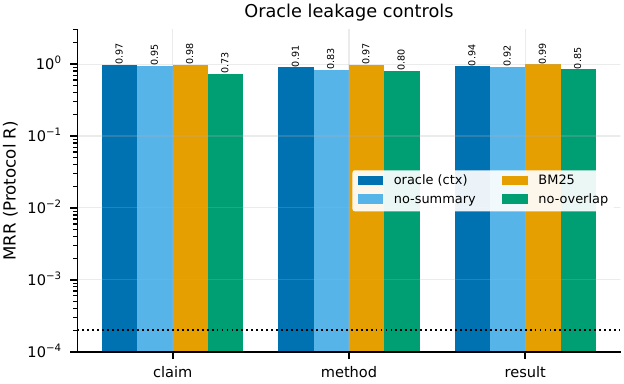}
  \caption{\textbf{Four training-free or positive-control retrievers, all near
  ceiling; the baseline trained model at chance (dotted).} \BMtf{} is the strongest
  evidence because it uses no learned representation whatsoever, and \NoOv{} still
  recovers $\novFullB$ bits---$\novFullPct$ of the $\bmaxB$-bit ceiling---after
  deleting every shared $5$-gram.}
  \label{fig:oracle}
\end{figure}

\section{Query-subsample validation}
\label{app:subsample}

Vector systems are evaluated on $\Nqueries$ queries and lexical systems on a nested
$\Nlex$ subsample, because BM25 costs roughly $500\times$ more per query than a dot
product. The subsample is validated rather than assumed: the positive control on
$\Nqueries$ queries recovers $\poneFullB$ bits against $+14.211$ from the full
evaluation, agreement of $0.01$ bits, and the nested lexical subsample agrees with
the $\Nqueries$-query set within the bootstrap interval at every rung of
Table~\ref{tab:ladder2}. All reported intervals are over queries, so subsampling is
reflected in the stated uncertainty.

\section{The $2\times2$ factorial separating feature space from trainer}
\label{app:factorial}

\Pone{} differs from the model under test in two factors at once, so it bounds the
harness without attributing the gap. We therefore pre-registered a $2\times2$ design
over $\{$reference predictor, full pipeline$\}\times\{$frozen text features, hetero
node features$\}$ in which exactly one factor differs between comparable cells, and
in which a cell that cannot obtain the trainer or features it requested is recorded
as \texttt{provenance\_mismatch} and emits \emph{no verdict}. Silence is the correct
output when a premise is unmet; the alternative---silently substituting a
fallback---is what produced the degenerate control described in
App.~\ref{app:prereg}.

\begin{table}[h]
\centering
\caption{Factorial design. The two diagonal cells are the systems reported in the
main text; the off-diagonal cells isolate each factor and are not yet reported. We
list them explicitly rather than omitting them, so that the limits of the attribution
are visible: on the evidence in this paper, \Pone{} bounds the harness and does not
separate feature space from trainer. Both missing cells are $3$k-step runs on the
existing cache.}
\label{tab:factorial}
\small
\begin{tabular}{@{}lcc@{}}
\toprule
 & frozen text features & hetero node features \\
\midrule
reference predictor & $\poneFullB$ bits (\Pone) & \emph{not yet reported} \\
full pipeline & \emph{not yet reported} & $\bitsBase$ bits (Table~\ref{tab:ladder}) \\
\bottomrule
\end{tabular}
\end{table}

\section{Full phase sweep}
\label{app:phase}

Table~\ref{tab:rhofull} gives the full simulation sweep summarised in
Table~\ref{tab:rho}, including the leave-one-out block that retires anisotropy as the
residual factor.

\begin{table}[h]
\centering
\caption{\textbf{Phase analysis, full sweep} (bank size $\simN$, so
$\bits^{\max}=\bmaxsim$ by Eq.~\ref{eq:bmax}). Row~1 varies $\rho$ with
$\kappa=\varepsilon=0$; row~2 uses the measured
$\hat\kappa=\kappahat,\hat\varepsilon=\epshat$. The leave-one-out block shows that
neither nuisance parameter is load-bearing.}
\label{tab:rhofull}
\small
\begin{tabular}{@{}lccccccc@{}}
\toprule
$\rho$ & $\aspInRaw$ & $0.5$ & $0.9$ & $0.99$ & $\mathbf{0.9961}$ & $0.999$ & $0.9999$ \\
\midrule
MRR, $\rho$ only & $1.000$ & $1.000$ & $0.8928$ & $0.1068$ & $0.0204$ & $0.0043$ & $0.0014$ \\
$\bits$, $\rho$ only & $+12.845$ & $+12.845$ & $+12.200$ & $+3.833$ & $+1.911$ & $+0.843$ & $+0.266$ \\
MRR, matched $\hat\kappa,\hat\varepsilon$ & $1.000$ & --- & --- & --- & $\simAtOurs$ & --- & --- \\
$\bits$, matched & $+12.845$ & --- & --- & --- & $+\simAtOursB$ & --- & --- \\
bits lost, matched & $0.000$ & --- & --- & --- & $\simLoss$ & --- & --- \\
\midrule
\multicolumn{8}{@{}l}{\emph{Leave-one-out at the measured operating point}} \\
\multicolumn{8}{@{}l}{\quad drop $\rho$ (set $0.5$): MRR $0.9740$;\ \ drop $\hat\kappa$: $1.0000$;\ \
 drop $\hat\varepsilon$: $1.0000$;\ \ drop all: $1.0000$} \\
\multicolumn{8}{@{}l}{\quad critical share $\rho^\star=\rhostar$, i.e.\
 $1-\rho^\star=\rhostarcomp$} \\
\bottomrule
\end{tabular}
\end{table}

\paragraph{Interpretation.}
The $\rho$-only row reproduces the qualitative shape of a collapse driven purely by
the variance share. Adding the measured nuisance parameters at $\rho=\rhoours$ moves
MRR from $0.0204$ to $\simAtOurs$---far below the pre-registered materiality
threshold---so we reject anisotropy as the residual factor. Because the criterion for
``total collapse'' is bits-based (App.~\ref{app:prereg}), the critical share is
located where recovered bits fall below $1.0$, giving $\rho^\star=\rhostar$. Note that
``bits lost'' is $\bmaxsim-\bits$ throughout and is therefore monotone in $\rho$; a
non-monotone loss column is a sign that two different ceilings have been mixed, which
is the error corrected in App.~\ref{app:bits}.

\begin{figure}[h]
  \centering
  \includegraphics[width=0.6\textwidth]{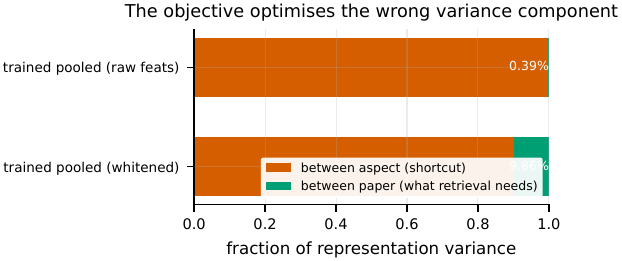}
  \caption{\textbf{The objective optimises the wrong variance component.} Left: the
  frozen inputs, $\papIn$ of variance between papers. Right: the trained latents,
  $\papTr$. Whitening the inputs raises the trained instance share to
  $\withinPaperWhite$ and bits recovered still do not move---the invariance
  Prop.~\ref{prop:fixedpoint} predicts; changing the \emph{objective} does
  (Table~\ref{tab:repair}).}
  \label{fig:variance}
\end{figure}

\section{The seven-cell intervention grid}
\label{app:fixgrid}

Table~\ref{tab:fix} gives the full aggregator $\times$ loss grid referenced in
Sec.~\ref{sec:audits}: seven cells, five seeds each, $\rpool$ spanning
$1.25$--$2.03$ and probe accuracy spanning $0.604$--$0.964$, with bits recovered at
$0$ throughout. Fig.~\ref{fig:fixbars} plots it.

\begin{table}[h]
\centering
\caption{\textbf{Seven interventions, one flat line} (block~A, five seeds,
Protocol~R). $\rpool$ spans $1.25$--$2.03$ and the probe spans $0.604$--$0.964$;
bits recovered stay at $0$ in every cell, against a ceiling of $\bmaxA$. What this
grid does \emph{not} vary is the target frame and the learning-rate schedule, both of
which App.~\ref{app:repair} shows to matter; we therefore present it as a negative
result about the aggregator and the in-batch-negative form of the loss, not about the
pipeline.}
\label{tab:fix}
\small
\begin{tabular}{@{}llcccc@{}}
\toprule
Pooling & Loss & $\rpool$ & probe & MRR & $\bits$ \\
\midrule
mean & $1-\cos$ \emph{(baseline)} & $1.97$ & $0.871$ & $1.9\times10^{-4}$ & $0.00$ \\
mean & \textsc{infonce} & $1.96$ & $0.867$ & $2.1\times10^{-4}$ & $0.00$ \\
sum & $1-\cos$ & $1.25$ & $0.963$ & $1.8\times10^{-4}$ & $0.00$ \\
deepsets & $1-\cos$ & $1.70$ & $0.604$ & $1.9\times10^{-4}$ & $0.00$ \\
deepsets & \textsc{infonce} & $1.81$ & $0.742$ & $1.7\times10^{-4}$ & $0.00$ \\
attn & \textsc{infonce} & $2.03$ & $0.964$ & $\mathbf{2.3\times10^{-4}}$ & $0.00$ \\
mean & $1-\cos$ $+$ tgt-\textsc{vic} & $1.97$ & $0.871$ & $1.9\times10^{-4}$ & $0.00$ \\
\midrule
\multicolumn{4}{@{}l}{\Oracle{} ceiling (block~B)} & $\orcFull$ & $\orcFullB$ \\
\multicolumn{4}{@{}l}{\emph{repaired objective} (Table~\ref{tab:repair})} & $0.9996$ & $+\fixTwentyK$ \\
\multicolumn{4}{@{}l}{$\bits^{\max}$} & --- & $\bmaxA$ \\
\multicolumn{4}{@{}l}{Chance} & $\chanceMRR$ & $0.00$ \\
\bottomrule
\end{tabular}
\end{table}

\begin{figure}[h]
  \centering
  \includegraphics[width=0.5\textwidth]{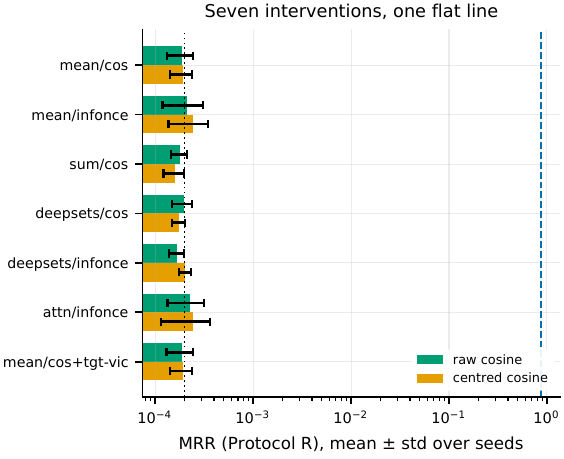}
  \caption{\textbf{Seven interventions, one flat line.} Raw and centred cosine, mean
  $\pm$ std over five seeds. Dotted line is chance ($\bits=0$), dashed line the
  $\bmaxA$-bit ceiling.}
  \label{fig:fixbars}
\end{figure}

\section{Breadth of the dissociation}
\label{app:breadth}

Table~\ref{tab:breadth} extends the dissociation across twenty
objective~$\times$~target-construction cells, including two that a rank-based
selection criterion would prefer.

\begin{mykeybox}
\textbf{Protocol note.} Every retrieval number in the main text is measured against
the full pool. Retrieval columns are omitted from Tables~\ref{tab:breadth}
and~\ref{tab:mah} because those runs predate Protocol~R and were scored in-batch;
in-batch MRR over a few hundred candidates overstates full-pool MRR by roughly two
orders of magnitude and is not rescalable. Probe accuracy and effective rank are
protocol-independent properties of the representation. This omission is itself one of
the paper's recommendations.
\end{mykeybox}

\begin{table}[h]
\centering
\caption{\textbf{Twenty objective $\times$ target-construction cells; the
dissociation holds across all of them.} Probe accuracy spans $0.755$--$0.960$ and
pooled effective rank $1.8$--$13.9$---a $7.7\times$ range including two cells that
clear the conventional $\rpool>10$ threshold---with no accompanying change in
retrieval. Mean over five seeds.}
\label{tab:breadth}
\small
\begin{tabular}{@{}llccl@{}}
\toprule
\textbf{Objective} & \textbf{Target construction} & \textbf{Probe acc} & $\rpool/\dlat$ & \textbf{Rank verdict} \\
\midrule
\multirow{3}{*}{hyperbolic, 2-D}
 & mean (own only) & $0.840$ & $1.9$ & collapsed \\
 & rel-hetero $+$ white & $0.822$ & $3.1$ & collapsed \\
 & rel-hetero (raw) & $0.940$ & $1.8$ & collapsed \\
\midrule
\multirow{3}{*}{hyperbolic $+$ \textsc{vic}}
 & mean (own only) & $0.859$ & $2.0$ & collapsed \\
 & rel-hetero $+$ white & $0.812$ & $\mathbf{13.9}$ & \textbf{rank-recovered} \\
 & rel-hetero (raw) & $0.954$ & $2.2$ & collapsed \\
\midrule
\multirow{3}{*}{hyperbolic $+$ \textsc{vic}$^{+}$}
 & mean (own only) & $0.858$ & $2.0$ & collapsed \\
 & rel-hetero $+$ white & $0.821$ & $\mathbf{12.1}$ & \textbf{rank-recovered} \\
 & rel-hetero (raw) & $0.960$ & $2.2$ & collapsed \\
\midrule
\multirow{3}{*}{hyperbolic, full reg.}
 & mean (own only) & $0.871$ & $1.9$ & collapsed \\
 & rel-hetero $+$ white & $0.808$ & $3.9$ & collapsed \\
 & rel-hetero (raw) & $0.947$ & $1.8$ & collapsed \\
\midrule
\multirow{3}{*}{Lorentz}
 & mean (own only) & $0.872$ & $2.0$ & collapsed \\
 & rel-hetero $+$ white & $0.808$ & $4.4$ & collapsed \\
 & rel-hetero (raw) & $0.948$ & $1.8$ & collapsed \\
\midrule
\multirow{5}{*}{Euclidean cosine}
 & mean, raw & $0.871$ & $2.0$ & collapsed \\
 & mean, whitened & $0.755$ & $3.8$ & collapsed \\
 & rel-hetero, raw & $0.949$ & $1.8$ & collapsed \\
 & rel-hetero, whitened & $0.803$ & $4.2$ & collapsed \\
 & rel-hetero, wh., claim-only & $0.803$ & $4.2$ & collapsed \\
\midrule
\multicolumn{2}{@{}l}{\emph{Range across all twenty cells}} & $0.755$--$0.960$ & $1.8$--$13.9$ & \\
\bottomrule
\end{tabular}
\end{table}

\paragraph{What this table adds.}
The dissociation is not specific to the Euclidean-cosine objective: it holds for a
unit-hyperbola target, a Lorentz-model target, and two strengths of VICReg. It
survives a target construction that deliberately injects paper-specific structural
variance from cross-type reasoning and citation neighbours. And two cells clear the
conventional $\rpool>10$ threshold and would be \emph{preferred} by a rank-based
selection criterion such as RankMe \citep{DBLP:conf/icml/GarridoBNL23}; both are
among the worst cells by probe accuracy, and neither retrieves. Rank and probe
accuracy are not merely insufficient here---they disagree with each other. Note also
that none of these twenty cells varies the loss form or the learning-rate schedule,
which are the two axes App.~\ref{app:repair} identifies as load-bearing: a $20$-cell
sweep can miss the operative variable entirely.

\section{The anchor-dimension study}
\label{app:anchor}

Table~\ref{tab:mah} raises the target's algebraic rank ceiling from $2$ to $\dlat$
and shows the measured target rank does not follow.

\begin{table}[h]
\centering
\caption{\textbf{Raising the target's algebraic rank ceiling from $2$ to $2k$ does
not raise the measured target rank.} Each patch angle is replaced by $k$ angles
against QR-orthonormalised anchors. The ceiling rises to $\dlat$ at $k{=}64$ and the
measured target effective rank $\rtgt$ does not move.}
\label{tab:mah}
\small
\begin{tabular}{@{}llccc@{}}
\toprule
\textbf{Objective} & \textbf{Target} & \textbf{Probe acc} & $\rpool/\dlat$ & $\rtgt$ \\
\midrule
\multirow{4}{*}{hyperbolic, 2-D}
 & mean (2-D) & $0.840$ & $1.9$ & $2.0$ (ceiling) \\
 & $k{=}4$\ ($\to\mathbb{R}^{8}$) & $0.869$ & $1.9$ & $1.6$ \\
 & $k{=}16$ ($\to\mathbb{R}^{32}$) & $0.869$ & $1.9$ & $1.9$ \\
 & $k{=}64$ ($\to\mathbb{R}^{128}$) & $0.867$ & $1.9$ & $1.9$ \\
\midrule
\multirow{4}{*}{hyperbolic $+$ \textsc{vic}}
 & mean (2-D) & $0.859$ & $2.0$ & $2.0$ (ceiling) \\
 & $k{=}4$ & $0.859$ & $2.0$ & $1.7$ \\
 & $k{=}16$ & $0.857$ & $2.0$ & $1.9$ \\
 & $k{=}64$ & $0.860$ & $2.0$ & $2.0$ \\
\midrule
\multirow{4}{*}{hyperbolic, full reg.}
 & mean (2-D) & $0.871$ & $1.9$ & $2.0$ (ceiling) \\
 & $k{=}4$ & $0.869$ & $1.9$ & $1.6$ \\
 & $k{=}16$ & $0.869$ & $1.9$ & $1.9$ \\
 & $k{=}64$ & $0.867$ & $1.9$ & $1.9$ \\
\midrule
\multirow{4}{*}{Lorentz}
 & mean (2-D) & $0.872$ & $2.0$ & $2.0$ (ceiling) \\
 & $k{=}4$ & $0.869$ & $1.9$ & $1.6$ \\
 & $k{=}16$ & $0.869$ & $1.9$ & $1.9$ \\
 & $k{=}64$ & $0.867$ & $1.9$ & $1.9$ \\
\midrule
\multicolumn{2}{@{}l}{\emph{Algebraic ceiling at $k{=}64$}} & --- & --- & $\dlat$ \\
\multicolumn{2}{@{}l}{\emph{Measured range}} & --- & --- & $1.6$--$2.0$ \\
\bottomrule
\end{tabular}
\end{table}

\paragraph{Why the anchor result matters for the query-side account.}
The anchor map is injective by construction and its output dimension is swept over a
$16\times$ range, yet $\rtgt$ stays pinned between $1.6$ and $2.0$. A map cannot
manufacture variation its input does not contain: the patch latents reaching the
anchors already lie on a near-one-dimensional set. In hindsight, the flat $\rtgt$
column was the query-side signal showing through a candidate-side measurement, and
Prop.~\ref{prop:fixedpoint} says why no injective re-parameterisation of the target
could have changed it.

\section{Synthetic separability control}
\label{app:synthcontrol}

To test whether high candidate-bank redundancy alone impairs rankability, we generate
banks with controlled shared-mean dominance and query them with the \emph{true}
context vector, so only bank separability is measured. Retrieval stays at
$\mathrm{MRR}=1.000$ across the entire range, up to a DC ratio of $100$, far beyond
anything our corpus exhibits (candidate DC ratio $\dccand$). A bank with mean
pairwise cosine near $1$ remains perfectly rankable provided the residual is
consistent between context and target. This retires ``the data are too similar''
quantitatively rather than rhetorically, and distinguishes \emph{similarity} from
\emph{indistinguishability}.

\section{Extraction and templating audit}
\label{app:extraction}

Table~\ref{tab:extraction} collects the templating audit of Sec.~\ref{sec:audits} and
the placeholder-text audit of Sec.~\ref{sec:repair} in one place, because the two
measure different pathologies and only one of them is benign.

\begin{table}[h]
\centering
\caption{Templating audit on the $\Ndiag$ set; text probes on an $8{,}000$-document
subsample, variance probes on all records. The function-word classifier uses a fixed
stopword vocabulary and cannot access content. The final block is the
placeholder-text audit of Sec.~\ref{sec:repair}, on the block-C relations.}
\label{tab:extraction}
\small
\begin{tabular}{@{}llr@{}}
\toprule
Probe & Detail & Value \\
\midrule
aspect classification, function words only & fixed stopword vocabulary & $\fwAcc$ \\
aspect classification, full TF-IDF & $50$k features & $\tfidfAcc$ \\
chance & three aspects & $\aspChance$ \\
templating index & $(\text{acc}-\text{chance})/(1-\text{chance})$ & $\templIdx$ \\
\midrule
top $4$-gram coverage, \texttt{claim} & \emph{``this suggests that the''} & $31.8\%$ \\
top $4$-gram coverage, \texttt{method} & \emph{``the study employed a''} & $39.0\%$ \\
top $4$-gram coverage, \texttt{result} & \emph{``the study found that''} & $25.7\%$ \\
\midrule
$\rho_{\mathrm{asp}}$, raw inputs & one-way, aspect grouping & $\aspInRaw$ \\
$\rho_{\mathrm{asp}}$, per-aspect mean removed & de-templated & $\aspDetemp$ \\
$\rho_{\mathrm{pap}}$, raw inputs & one-way, paper grouping & $\papInRaw$ \\
oracle after de-templating & MRR / bits & $\orcDetemp$ / $\orcDetempB$ \\
\midrule
ridge probe, context $\to$ oracle space & MRR / bits & $\ridgeMRR$ / $\ridgeB$ \\
loss floor ratio $\mathcal{L}_\infty/\mathbb{E}\|\delta\|^2$ & --- & $\lossratio$ \\
median predictor gradient norm & --- & $\gradnorm$ \\
\midrule
\multicolumn{3}{@{}l}{\emph{Placeholder-text audit, block~C relations
(Sec.~\ref{sec:repair})}} \\
exact-duplicate share, \texttt{challenged\_by} & groups of $\ge50$ identical rows & $\dupChal$ \\
exact-duplicate share, \texttt{implies} & same criterion & $\dupImpl$ \\
largest duplicate group & identical strings & $\dupTop$ \\
genericness, supporting evidence & mean cos to own centroid & $\genSup$ \\
genericness, contradicting evidence & mean cos to own centroid & $\genChal$ \\
genericness gap & challenge $-$ support & $\genGap$ \\
$\cos(\text{claim},\text{support})$ & raw features & $\cosSup$ \\
$\cos(\text{claim},\text{challenge})$ & raw features & $\cosChal$ \\
Cohen's $d$ & support vs.\ challenge & $\polD$ \\
\bottomrule
\end{tabular}
\end{table}

\noindent
Interpretation of the upper blocks is in Sec.~\ref{sec:audits}: templating is severe
and inflates \emph{aspect-type} separability, but paper identity survives its removal
almost intact---the de-templated oracle still recovers $99.3\%$ of the $\bmaxB$-bit
ceiling. We report the $\fwAcc$ figure prominently rather than in passing, because a
reader who discovers it independently would reasonably treat its absence as
concealment, and because it is the empirical reason the
designator-determines-category premise of Prop.~\ref{prop:fixedpoint} is so easily
satisfied on this corpus.

\paragraph{The two audits measure different pathologies, and only one is benign.}
The templating audit says the \emph{surface form} of an aspect is predictable; the
placeholder audit says the \emph{content} of one relation is frequently absent while
the field is nominally populated. The first inflates a nuisance dimension and is
neutralised by removing the per-aspect mean, after which the recoverable identity is
essentially unchanged ($\orcDetempB$ bits). The second cannot be neutralised by any
transformation of the features, because the information was never extracted: $\dupTop$
identical strings carry one string's worth of information regardless of how they are
embedded. This is the distinction we would have missed had we run only the templating
audit, and it is why we recommend both.

\paragraph{Why exact-duplicate hashing was not sufficient on its own.}
Hashing detects only identical rows. The genericness statistic---mean cosine of each
class's members to their own centroid---detects the formulaic-but-unique remainder,
and it is the larger effect here: after the $\dupChal$ of exactly duplicated
contradicting-evidence rows are set aside, the surviving text is still $\genGap$ more
self-similar than supporting evidence. A pipeline that filtered duplicates and
declared the target clean would have trained on filler and reported a $0.85$-AUC
``polarity'' result. Both statistics are one pass over the embedding matrix and cost
under a minute at this corpus size (App.~\ref{app:compute}).

\section{Faithfulness of the reference implementation}
\label{app:faithful}

Table~\ref{tab:faithful} reports our re-implementation on the original method's
graph-classification benchmarks. We do not clear MUTAG and therefore do not claim
implementation faithfulness.

\begin{table}[h]
\centering
\caption{Our reference re-implementation on the original method's
graph-classification benchmarks. We do not clear MUTAG and therefore do not claim
implementation faithfulness.}
\label{tab:faithful}
\small
\begin{tabular}{@{}lccc@{}}
\toprule
Dataset & ours & reported & verdict \\
\midrule
MUTAG & $\mutagOurs$ & $\mutagRef$ & not reproduced \\
PROTEINS & $\protOurs$ & $\protRef$ & within tolerance \\
\bottomrule
\end{tabular}
\end{table}

\noindent
Training is additionally unstable on PROTEINS, with the objective increasing over the
run and predictor gradient norms reaching $O(10^3)$. Three consequences. We scope
every claim to the pipeline we describe and document, not to the original method's
published configuration. All conclusions rest on \emph{internal} controls---the
oracle, the lexical bound, the positive control and the repair sweep all share the
pipeline under test and the identical evaluation---so the diagnosis does not depend on
having matched an external benchmark. And the two structural findings of
Sec.~\ref{sec:repair} are, if anything, \emph{independent} of implementation
fidelity: Prop.~\ref{prop:reducible} is a statement about the graph's edge
cardinalities, which no choice of encoder can change, and the placeholder-text census
is a statement about the corpus, which no choice of objective can change. We regard
disclosing the gap as a precondition for the central claim being taken seriously, and
flag closing it as required work before the method itself is characterised.

\section{Untrained rank and degree}
\label{app:rankinit}

This appendix reports rank and DC ratio per node type \emph{before} any training, and
is one of the two measurements bearing on the trajectory question of
Sec.~\ref{sec:trajectory}.

Before any training we measure effective rank and DC ratio per node type at encoder
depths $0$--$3$. At depth $0$ all node types have effective rank $42$--$59$ and DC
ratio $3.1$--$3.6$. Each additional message-passing layer lowers rank and raises DC
ratio for every node type: at depth $3$, ranks fall to $10$--$29$ and DC ratios rise
to $2.4$--$12.6$. The compression is therefore a property of message passing at
initialisation, not of the objective consistent with Fig.~\ref{fig:dynamics}(b),
where pooled rank is at its final value at epoch $0$. Together these two
measurements are why Sec.~\ref{sec:trajectory} measures the variance allocation per
checkpoint instead of inferring a dynamic re-allocation from the endpoints.

The relation between rank and in-degree is \emph{not} monotone \texttt{evidence} has
mean in-degree $1.00$ and the lowest rank of any type, while \texttt{field} has
in-degree $118.65$ and rank $26$--$34$ so we make no claim of a degree rank law.
The \texttt{evidence} row deserves a second look in light of
Sec.~\ref{sec:repair}: its in-degree of exactly $1.00$ is the leaf property
($\Nevid=\Nsupp+\Nchal$), which is simultaneously why it has the lowest rank, why
mean pooling over it is trivial, and why leaving a single target relation unmasked
produces the $\polEdgeLeak$-AUC leak of Table~\ref{tab:polgate}. One structural fact,
three separate symptoms, which we did not connect until the self-test's leaf-isolation
assertion (App.~\ref{app:selftest}) forced the question.

\section{Input-feature anisotropy}
\label{app:whiten}

Table~\ref{tab:whiten} records the whitened input ranks that rung~4's
rank-restoration claim rests on, together with a logging gap we do not paper over.

\begin{table}[h]
\centering
\caption{Effective rank of the frozen input features (of $\dfeat$) per node type
after PCA/ZCA whitening, applied per node type before the encoder. The
per-node-type \emph{raw} input ranks were not written to the released logs for this
run and we therefore do not tabulate them; raw input anisotropy is instead
characterised by the depth-$0$ DC ratios of App.~\ref{app:rankinit}
($3.1$--$3.6$), and the aggregate raw-to-whitened change is the
$\erInWhite$-of-$\dfeat$ figure quoted for rung~4 of Table~\ref{tab:ladder}. The
whitened values below are the ones the ladder's rank-restoration claim rests on.}
\label{tab:whiten}
\small
\begin{tabular}{lcccc}
\toprule
 & claim & method & result & paper \\
\midrule
whitened & $320.8$ & $309.2$ & $313.1$ & $313.1$ \\
\bottomrule
\end{tabular}
\end{table}

\noindent
The missing raw column is a genuine gap in our logging rather than a selective
omission, and we prefer to say so than to recompute it from a rerun and present it as
if it were the original measurement. It affects only the precision of rung~4's
description, not its conclusion: whitening raises rank and \emph{lowers} the probe
while leaving bits at zero, and both of those are measured in the same run.

\section{Random-walk structural encoding audit}
\label{app:rwse}

This appendix records the RWSE statistics, and then a correction to how we originally
read them.

The context mixer consumes a random-walk structural encoding on the intra-paper
reasoning subgraph. Every paper has at least one intra-paper reasoning edge ($100\%$
coverage); the mean number of reasoning edges per subgraph is $32.30$; the global
patch-RWSE standard deviation is $0.1959$; the fraction of patches with non-zero RWSE
is $100\%$; and the encoding uses $16$ random-walk steps. The encoding is therefore
non-degenerate at input, and the collapse cannot be attributed to an all-zero
structural encoding.

\begin{cautionbox}
\textbf{A correction to how we previously read this audit.} We originally cited
$100\%$ coverage and non-zero variance as evidence that the structural encoding is
\emph{informative}. Prop.~\ref{prop:reducible} shows that inference was wrong. The
intra-paper edge set is a deterministic function of the node census
(Table~\ref{tab:reducible}), so the RWSE is a deterministic function of the census
too: it varies across \emph{aspect types}, and across papers only through the claim
count, and it carries no paper-identifying information beyond that. Non-zero variance
is necessary but not sufficient for informativeness, and the effective-rank
measurement makes the point quantitatively---per-aspect RWSE effective rank is
$\rwseRankLo$--$\rwseRankHi$ of $16$, i.e.\ the $16$-dimensional encoding spans barely
more than one direction. A structural encoding on a census-determined graph is close
to a constant by construction. The check that would have caught this is the
cardinality comparison of Table~\ref{tab:reducible}, not a variance statistic.
\end{cautionbox}

\noindent
The cue ablation of App.~\ref{app:repair} closes the loop empirically: a
\emph{three-entry} learned aspect embedding outperforms the $16$-dimensional RWSE by
$\cueGainA$ bits ($\cueAspect$ versus $\cueRwse$), which is what one expects if the
RWSE is carrying little beyond aspect identity. Removing the cue entirely costs
$\cueGainB$ bits ($\cueNone$), so the predictor does need \emph{some} target
designator---it simply does not need a structural one. Whether the \emph{trained}
query path preserves what little the RWSE does carry is a separate question and is
the first of the open measurements in Sec.~\ref{sec:discussion}.

\newpage
\section{Supporting figures}
\label{app:figures}

Figures~\ref{fig:geometry}--\ref{fig:centering} support Secs.~\ref{sec:ladder}
through~\ref{sec:repair}: the embedding geometry, the depth sweep, the training
dynamics that bear on Sec.~\ref{sec:trajectory}, the difficulty ladder, the DC-ratio
comparison and the post-hoc frame sweep.

\begin{figure}[h]
  \centering
  \includegraphics[width=0.95\textwidth]{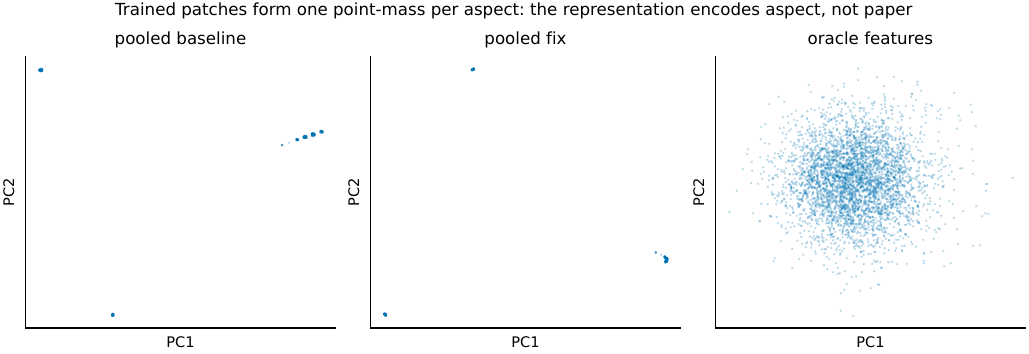}
  \caption{\textbf{Trained patches form one point-mass per aspect.} First two
  principal components. Trained representations (left, centre) occupy two or three
  tight clusters corresponding to aspect type; the frozen features (right) form a
  diffuse cloud in which individual papers are separable. The representation encodes
  \emph{which aspect}, not \emph{which paper}---the visual form of
  Table~\ref{tab:inversion} and of the $\erank\le|\mathcal{A}|$ bound of
  Prop.~\ref{prop:fixedpoint}.}
  \label{fig:geometry}
\end{figure}

\begin{figure}[h]
  \centering
  \includegraphics[width=0.95\textwidth]{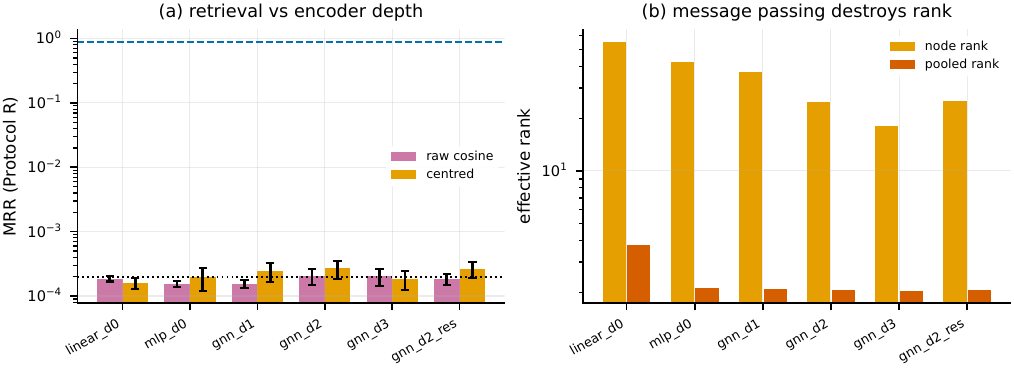}
  \caption{\textbf{Depth is not the binding factor.} \textbf{(a)}~Bits recovered
  across six encoder configurations from a depth-$0$ linear map to a depth-$3$ GNN;
  all at chance, the $\bmaxA$-bit ceiling dashed. \textbf{(b)}~Message passing
  compresses node rank monotonically ($\rnodeDzero\to\rnodeDthree$) and the pooled
  rank is flat regardless.}
  \label{fig:depth}
\end{figure}

\begin{figure}[h]
  \centering
  \includegraphics[width=0.95\textwidth]{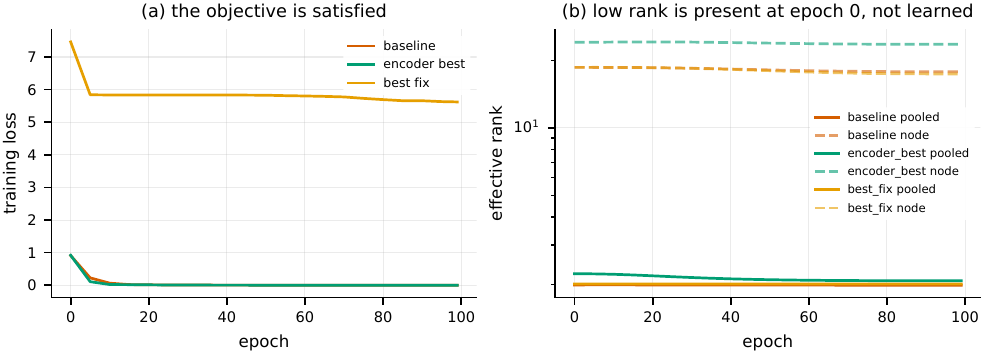}
  \caption{\textbf{Nothing is learned away.} \textbf{(a)}~Training loss converges
  within ${\sim}15$ epochs, on a twin axis because the cosine and InfoNCE losses
  differ by an order of magnitude---and note that this is exactly the comparison
  Sec.~\ref{sec:repair} warns against making on magnitude alone, since a regression
  loss near zero is collapse while an InfoNCE loss near zero is discrimination.
  \textbf{(b)}~Pooled effective rank is already at its final value at epoch $0$ and
  stays flat: the baseline pathology is present at initialisation rather than induced
  by optimisation, which is why Sec.~\ref{sec:trajectory} measures the variance
  allocation per checkpoint rather than asserting a re-allocation. Contrast the
  repaired configuration, whose bits rise from $-0.07$ at step $1$ to $+14.24$ by
  step $500$ (App.~\ref{app:repair}); the difference between the two trajectories is
  the \emph{objective}, as the single-variable regression control establishes.}
  \label{fig:dynamics}
\end{figure}

\begin{figure}[h]
  \centering
  \includegraphics[width=0.68\textwidth]{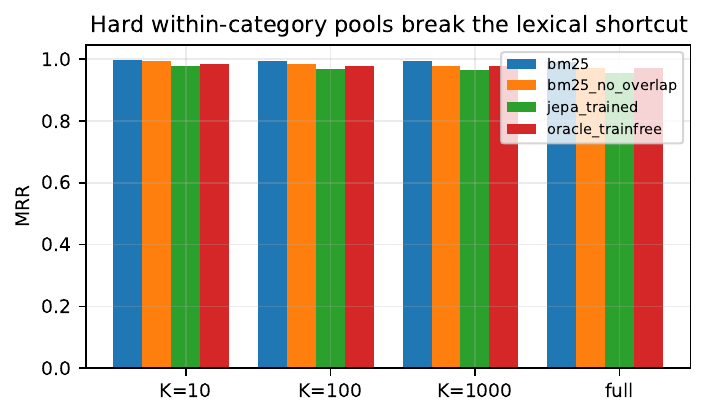}
  \caption{\textbf{Bits recovered against pool difficulty} for the three ceilings and
  the lexical no-overlap control (Table~\ref{tab:ladder2}). All four series are
  ceilings or controls---\BMtf, \NoOv, \Oracle{} and the \Pone{} positive
  control---and all four track the $\frac1N\log_2 N!$ envelope with a near-constant
  deficit across a $5{,}800\times$ change in pool size. The baseline Graph-JEPA is
  the single point at $0$ bits at the right-hand end. This figure replaces every
  ratio-to-chance statement in earlier drafts.}
  \label{fig:hardpool}
\end{figure}

\begin{figure}[h]
  \centering
  \includegraphics[width=0.68\textwidth]{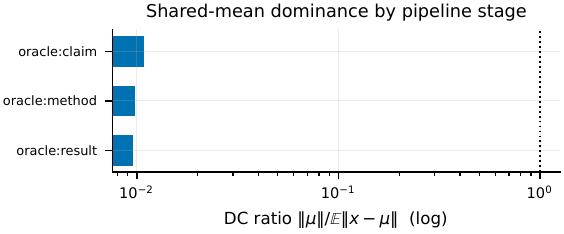}
  \caption{\textbf{Shared-mean dominance by pipeline stage.} DC ratio
  $\|\mu\|/\mathbb{E}\|x-\mu\|$ on a log axis. The frozen features sit near
  $10^{-2}$; the trained query bank sits at $\dcquery$---a factor of ${\sim}10^{4}$.}
  \label{fig:dcratio}
\end{figure}

\begin{figure}[h]
  \centering
  \includegraphics[width=0.5\textwidth]{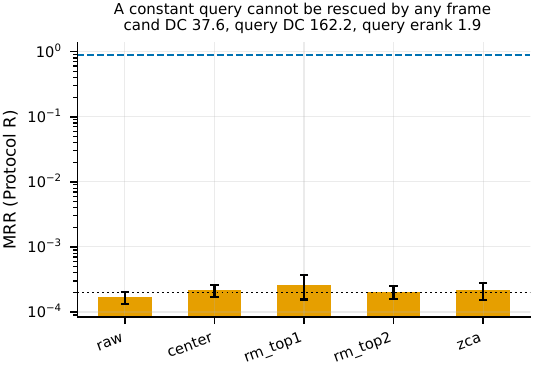}
  \caption{\textbf{A constant query cannot be rescued by any frame applied after
  training.} Five retrieval frames fitted on the candidate bank; the best buys
  $\framegain$ against a $\deficit$ deficit to the ceiling (dashed). This is a
  quantitative confirmation of Prop.~\ref{prop:frame} rather than a failed ablation:
  the way out is to change what the objective asks for, and the single-variable
  demonstration of that is the loss ablation of Sec.~\ref{sec:repair}
  ($\nceBits\!\to\!\regBits$ bits), not any transformation of the retrieval space.}
  \label{fig:centering}
\end{figure}

%

\section{Compute environment and budget}
\label{app:compute}

All experiments ran on a single node with $2\times$ NVIDIA L40S GPUs ($46$\,GB each)
or one NVIDIA H100 NVL ($95$\,GB) for the $20$k-step runs, PyTorch
$2.4.0{+}$cu121, PyTorch Geometric $2.8.0$, scikit-learn $1.7.2$, Python $3.10$.
Graph construction and RWSE precomputation are cached and reused across every block;
blocks~A and~C share that cache byte-for-byte (App.~\ref{app:provenance}).

\begin{table}[h]
\centering
\caption{\textbf{Measured compute budget.} The Protocol~R diagnosis costs
$\gpuhours$ GPU-hours; the additional diagnostic harness of
Secs.~\ref{sec:ceilings}--\ref{sec:mechanism} costs $\diagwall$ minutes on the same
node. The low cost is deliberate: every check recommended in
Sec.~\ref{sec:discussion} is cheap enough to run \emph{before} committing to a
training budget---and the two checks that changed this paper's conclusions, the
reducibility audit and the duplicate census, are the two cheapest lines in the
table.}
\label{tab:compute}
\small
\begin{tabular}{lccc}
\toprule
Stage & Wall-clock & Peak GPU & GPU-hours \\
\midrule
Preprocess: graph build $+$ sentence embed (one-time) & ${\sim}7$\,min$^{\dagger}$ & --- & --- \\
Preprocess: RWSE cache (one-time) & ${\sim}20$\,s & --- & --- \\
\midrule
Instrument self-test ($\selftestN$ assertions) & $18$\,s & $1.2$\,GB & $<0.01$ \\
Untrained rank / DC vs.\ degree & $1$\,s & $4.0$\,GB & $<0.01$ \\
Protocol~R, raw $+$ whitened (10 runs) & $6$\,m\,45\,s & $18.8$\,GB & $0.11$ \\
Per-aspect breakdown (5 seeds) & $3$\,m\,16\,s & $18.1$\,GB & $0.05$ \\
Retrieval-frame sweep (3 seeds) & $1$\,m\,57\,s & $18.0$\,GB & $0.03$ \\
Encoder sweep (6 configs $\times$ 3 seeds) & $4$\,m\,56\,s & $22.6$\,GB & $0.08$ \\
Oracle $+$ \BMtf{} $+$ leakage controls & $6$\,m\,59\,s & $3.2$\,GB & $0.12$ \\
Pooling $\times$ loss grid (7 cells $\times$ 5 seeds) & $21$\,m\,32\,s & $18.9$\,GB & $0.36$ \\
\midrule
Difficulty ladder $+$ hard pools (BM25 index reused) & $46$\,s & $6.1$\,GB & $0.01$ \\
Extraction / templating audit & $75$\,s & $2.0$\,GB & $0.02$ \\
Phase analysis ($60$ simulations $+$ bisection) & $3$\,s & $2.4$\,GB & $<0.01$ \\
Bits-accounting verification (CPU only) & $9$\,s & --- & $0$ \\
\textbf{Checkpoint allocation pass} (App.~\ref{app:trajectory}) & $\mathbf{<1}$\,\textbf{min} & $2.0$\,GB & $\mathbf{<0.01}$ \\
\midrule
\textbf{Reducibility audit} (edge-cardinality pass) & $\mathbf{<1}$\,\textbf{s} & --- & $\mathbf{0}$ \\
\textbf{Duplicate $+$ genericness census} & $\mathbf{41}$\,\textbf{s} & $2.1$\,GB & $\mathbf{0.01}$ \\
Polarity probe, both groupings (3 seeds $\times$ 2) & $2$\,m\,29\,s & $9.4$\,GB & $0.04$ \\
Matched-budget $2\times2$, $3$k steps (1 seed) & $32$\,m\,08\,s & $31.4$\,GB & $0.54$ \\
Loss ablation, regression control, $3$k (1 seed) & $7$\,m\,51\,s & $30.8$\,GB & $0.13$ \\
Cue ablation, $3\times$ $6$k steps (1 seed) & $47$\,m\,12\,s & $32.0$\,GB & $0.79$ \\
Transduction $+$ raw-skip, $2\times$ $6$k (1 seed) & $32$\,m\,07\,s & $32.6$\,GB & $0.54$ \\
Main run, $20$k steps (3 seeds) & $2$\,h\,37\,m & $34.1$\,GB & $2.62$ \\
\midrule
\textbf{Total, diagnosis only} & $\mathbf{\walltime}$\,\textbf{min} & $\mathbf{\peakmem}$\,\textbf{GB} & $\mathbf{\gpuhours}$ \\
\textbf{Total, including the repair sweep} & $\mathbf{6.1}$\,\textbf{h} & $\mathbf{34.1}$\,\textbf{GB} & $\mathbf{5.44}$ \\
\bottomrule
\end{tabular}
\vspace{2pt}
\raggedright{\footnotesize $^{\dagger}$One-time; dominated by sentence encoding of
${\sim}1.05$M nodes.}
\end{table}

\paragraph{The cost asymmetry is the practical message.}
The two measurements that changed this paper's conclusions cost under a minute
combined: the reducibility audit is a pass over nine integers, and the duplicate and
genericness census is a single pass over an embedding matrix. The sweep they
retrospectively reinterpreted cost $4.6$ GPU-hours, four orders of magnitude more. We
had the budget to train for $20$k steps on three seeds long before we had the
discipline to compare an edge count against a node count. Any reader who takes one
thing from this paper should take that ordering: audit the target before you train on
it, because the audit is free and the training is not. The checkpoint allocation pass
of App.~\ref{app:trajectory} belongs in the same category, and so does the
non-census-determined control graph named in Sec.~\ref{sec:discussion}: about one
GPU-hour, against a claim that the rest of the paper cannot make without it.

\paragraph{And one measurement that cost nothing but attention.}
The correction recorded in App.~\ref{app:repair}---that the target-frame main effect
is $\frameEffect$ bits at matched budget rather than the $+1.199$ an earlier draft
reported, and that the learning-rate schedule is worth $\scheduleEffect$ bits---came
from reading a log stage we had already paid for and previously skipped. No new
compute was required. We mention it because the marginal value of re-reading a
completed run is easy to underestimate relative to launching another one.

\end{document}